\newcommand{\xmark}{\ding{55}}
\newtheorem{theorem}{Theorem}
\newtheorem{lemma}{Lemma}
\newtheorem{proposition}{Proposition}
\newtheorem{corollary}{Corollary}
\newtheorem{definition}{Definition}
\newtheorem{remark}{Remark}
\newtheorem{example}{Example}
\newtheorem{assumption}{Assumption}
\def\fro{{\scriptscriptstyle \mathrm{F}}}
\DeclareSymbolFont{fouriersymbols}{FMS}{futm}{m}{n}
\DeclareSymbolFont{fourierlargesymbols}{FMX}{futm}{m}{n}
\DeclareMathDelimiter{\triple}{\mathord}{fouriersymbols}{152}{fourierlargesymbols}{147}
\def\ab{{\mathbf a}}
\def\rb{{\mathbf r}}
\def\xb{{\mathbf x}}
\def\zb{{\mathbf z}}
\def\bb{{\mathbf b}}
\def\wb{{\mathbf w}}
\def\db{{\mathbf d}}
\def\sb{{\mathbf s}}
\def\ub{{\mathbf u}}
\def\vb{{\mathbf v}}
\def\alphab{{\boldsymbol\alpha}}
\def\betab{{\boldsymbol\beta}}
\def\varepsilonb{{\boldsymbol\varepsilon}}
\def\zerob{{\mathbf 0}}
\def\Ub{{\mathbf U}}
\def\Xb{{\mathbf X}}
\def\Wb{{\mathbf W}}
\def\Vb{{\mathbf V}}
\def\Ib{{\mathbf I}}
\def\Ab{{\mathbf A}}
\def\Bb{{\mathbf B}}
\def\Cb{{\mathbf C}}
\def\Gb{\mathbf G}
\def\Db{{\mathbf D}}
\def\Sb{{\mathbf S}}
\def\Kb{{\mathbf K}}
\def\KJb{\Kb_{\J}}
\def\Tb{{\mathbf T}}
\def\PJb{\Pb_{\J}}
\def\PJbo{\PJb^{o}}
\def\thetab{{\boldsymbol\theta}}
\def\GramJb{\Gb_{\J}}
\def\GramJbo{\GramJb^{o}}
\def\Thetab{\mathbf{H}}
\def\ThetaJb{\Thetab_{\J}}
\def\ThetaJbo{\ThetaJb^{o}}
\def\RIP {\delta}
\def\LRIP {\underline{\delta}}
\def\URIP {\overline{\delta}}
\def\Real{{\mathbb{R}}}
\newcommand{\SET}[1]{\llbracket 1; #1\rrbracket}
\def\Bcal{\mathcal{B}}
\def\Lcal{\mathcal{L}}
\def\Dcal{\mathcal{D}}
\def\Scal{\mathcal{S}}
\def\diag{{\mathrm{diag}}}
\def\Diag{{\mathrm{Diag}}}
\def\support{{\mathrm{support}}}
\def\sign{{\mathrm{sign}}}
\def\trace{{\mathrm{Tr}}}
\font\dsrom=dsrom10 scaled 1200
\newcommand{\indicator}[1]{\textrm{\dsrom{1}}_{#1}}
\def\Exp{{\mathbb{E}}}
\def \Pr {\PP}
\def\defin{\triangleq}
\def\alphabo{\alphab^{o}}
\def\alphaboJ{\alphab^{o}_{\J}}
\def\alphaboT{[\alphabo]^{\top}}
\def\alphaboJT{[\alphab^{o}_{\J}]^{\top}}
\def\dbo{{\mathbf d}_0}
\def\sbo{{\mathbf s}^{o}}
\def\sboJ{\sbo_{J}}
\def\sboJT{[\sboJ]^{\top}}
\def\Dbo{\Db^{o}}
\def\DboJ{\Dbo_{\J}}
\def\DbJ{\Db_{\J}}
\def\DboT{[\Dbo]^{\top}}
\def\DbT{\Db^{\top}}
\def\DboJT{[\DboJ]^{\top}}
\def\DbJT{\DbJ^{\top}}
\def\DboJpinv{[\DboJ]^{+}}
\def\J{{\mathrm{J}}}
\def\loweralpha{\underline{\alpha}}
\def\upperalpha{\overline{\alpha}}
\newcommand{\Matrix}[1]{\mathbf{#1}}
\def\signal{\mathbf{x}}
\def\Signal{\Matrix{X}}
\def\db{\mathbf{d}}
\def\Db{\Matrix{D}}
\def\Lcal{\mathcal{L}}
\def\Ub{\Matrix{U}}
\def\RR{\mathbb{R}}
\def\PP{\mathbb{P}}
\def\EE{\mathbb{E}}
\def\defin{\triangleq}
\def\Id{\mathbf{Id}}
\def\zb{\mathbf{z}}
\def\Ab{\mathbf{A}}
\def\fro{F}
\def\Wb{\mathbf{W}}
\def\vb{\mathbf{v}}
\def\PJb{\mathbf{P}_{\J}}
\renewcommand{\leq}[0]{\leqslant}
\def\yes {\textcolor{ForestGreen}{\checkmark}}
\def\no {\textcolor{red}{\xmark}}
\title{Sparse and spurious:\\ dictionary learning with noise and outliers}
\author{
R\'emi Gribonval,~\IEEEmembership{IEEE Fellow}, Rodolphe Jenatton, Francis Bach
}
\begin{document}

\maketitle

\begin{abstract}
A popular approach within the signal processing and machine learning communities consists in modelling
signals as sparse linear combinations of atoms selected from a \emph{learned} dictionary.
While this paradigm has led to numerous empirical successes in various fields ranging from image to audio processing, 
there have only been a few theoretical arguments supporting these evidences.
In particular, sparse coding, or sparse dictionary learning, relies on a non-convex procedure whose local minima have not been fully analyzed yet.
In this paper, we consider a probabilistic model of sparse signals, and show that, with high probability, 
sparse coding admits a local minimum around the reference dictionary generating the signals. 
Our study takes into account the case of over-complete dictionaries, noisy signals, and possible outliers, thus extending previous work limited to noiseless settings and/or under-complete dictionaries.
The analysis we conduct is non-asymptotic and makes it possible to understand how the key quantities of the problem, 
such as the coherence or the level of noise, can scale with respect to the dimension of the signals, the number of atoms, the sparsity and the number of observations.
\end{abstract}

\newcommand{\blfootnote}[1]{{\let\thefootnote\relax\footnotetext{#1}}}
\blfootnote{This is a substantially revised version of a first draft that appeared as a preprint titled ``Local stability and robustness of sparse dictionary learning in the presence of noise'', \cite{jenatton:hal-00737152}.}
\blfootnote{This work was supported in part by the EU FET- Open programme through the SMALL Project under Grant 225913 and in part by the European Research Council through the PLEASE Project (ERC-StG-2011-277906) and the SIERRA project (ERC-StG-2011-239993).}
\blfootnote{R. Gribonval is with the Institut de Recherche en Syst{\`e}mes Al{\'e}atoires (Inria \& CNRS UMR 6074), Rennes 35042, France (email: remi.gribonval@inria.fr).}
\blfootnote{R. Jenatton was with the Laboratoire d'Informatique, {\'E}cole Normale Sup{\'e}rieure, Paris 75005, France. He is now the Amazon Development Center Germany, Berlin 10178, Germany (e-mail: jenatton@amazon.com).}
\blfootnote{F. Bach is with the Laboratoire d'Informatique, {\'E}cole Normale Sup{\'e}rieure, Paris 75005, France (e-mail: francis.bach@ens.fr).}
\blfootnote{Copyright (c) 2015 IEEE. Personal use of this material is permitted.  However, permission to use this material for any other purposes must be obtained from the IEEE by sending a request to pubs-permissions@ieee.org.}
\section{Introduction}
Modelling signals as sparse linear combinations of atoms selected from a dictionary has become
a popular paradigm in many fields, including signal processing, statistics, and machine learning.
This line of research has witnessed the development of several well-founded
theoretical frameworks~(see, e.g.,~\citep{Wainwright2009, Zhang2009}) 
and efficient algorithmic tools~(see, e.g.,~\citep{Bach2011} and references therein).

However, the performance of such approaches hinges on the representation of the signals, which makes the question of designing ``good'' dictionaries prominent.
A great deal of effort has been dedicated to come up with efficient \emph{predefined} dictionaries, e.g., the various types of wavelets~\citep{Mallat:2008aa}.
These representations have notably contributed to 
many successful image processing applications such as compression, denoising and deblurring.
More recently, the idea of simultaneously \emph{learning} the dictionary and the sparse decompositions of the signals---also known as \emph{sparse dictionary learning}, or simply, \emph{sparse coding}---has emerged as a powerful framework, 
with state-of-the-art performance in many tasks, 
including inpainting and image classification~(see, e.g.,~\citep{Mairal2010} and references therein).

Although  sparse dictionary learning can sometimes be formulated as convex~\citep{Bach2008c, Bradley2009a}, 
non-parametric Bayesian~\citep{Zhou2009} and submodular~\citep{Krause2010} problems,
the most popular and widely used definition of sparse coding brings into play a non-convex optimization problem.
Despite its empirical and practical success, the theoretical analysis of the properties 
of sparse dictionary learning is still in its infancy. A recent line of work \citep{Maurer2010,Vainsencher2010,Mehta2012} establishes generalization bounds which quantify how much the \emph{expected} signal-reconstruction error differs from the \emph{empirical} one,
computed from a random and finite-size sample of signals. 
In particular, the bounds obtained by \citet{Maurer2010,Vainsencher2010,Gribonval:2013tx} are non-asymptotic, and uniform with respect to the whole class of dictionaries considered (e.g., those with normalized atoms).
\paragraph{Dictionary identifiability.} This paper focuses on a complementary theoretical aspect of dictionary learning: the characterization of local minima of an optimization problem associated to sparse coding, in spite of the non-convexity of its formulation. This problem is closely related to the question of \emph{identifiability}, that is, 
whether it is possible to {\em recover} a reference dictionary that is assumed to generate the observed signals.
Identifying such a dictionary is important when the interpretation of the learned atoms matters, e.g., 
in source localization~\citep{ComonJutten2010}, where the dictionary corresponds to the so-called mixing matrix indicating directions of arrival, in topic modelling~\citep{Jenatton2010b}, where the atoms of the dictionary are expected to carry semantic information, or in neurosciences, where learned atoms have been related to the properties of the visual cortex in the pioneering work of \citet{field96:_emerg}.

In fact, characterizing how accurately one can estimate a dictionary through a given learning scheme also matters beyond such obvious scenarii where the dictionary intrinsically carries information of interest. For example, when learning a dictionary for coding or denoising, two dictionaries are considered as perfectly equivalent if they lead to the same distortion-rate curve, or the same denoising performance. In such contexts, learning an ideal dictionary through the direct optimization of the idealized performance measure is likely to be intractable, and it is routinely replaced by heuristics involving the minimization of proxy, i.e., a better behaved cost function. Characterizing  (local) minima of the proxy is likely to help in providing guarantees that such minima exist close to those of the idealized performance measure and, more importantly, that they also achieve near-optimal performance.

\paragraph{Contributions and related work.} In contrast to early identifiability results in this direction by~\citet{2005_IEEE_TNN_SCA_GeorgievEtAL,Aharon:2006dj}, which focused on deterministic but combinatorial identifiability conditions with combinatorial algorithms, \citet{Gribonval2010} pioneered the analysis of identifiability using a non-convex objective involving an $\ell^{1}$ criterion, in the spirit of the cost function initially proposed by \citet{ZP01} in the context of blind signal separation. 
In the case where the reference dictionary forms a basis, they obtained local identifiability results with noiseless random $k$-sparse signals, possibly corrupted by some ``mild'' outliers naturally arising with the considered Bernoulli-Gaussian model. 
Still in a noiseless setting and without outliers, with a $k$-sparse Gaussian signal model, the analysis was extended by \citet{Geng2011} to \emph{over-complete} dictionaries, {\em i.e.}, dictionaries composed of more atoms than the dimension of the signals. 
Following these pioneering results, a number of authors have established theoretical guarantees on sparse coding that we summarize in  Table~\ref{tab:stateoftheart}.
 Most of the existing results do not handle noise, and none handles outliers. In particular, the structure of the proofs of \citet{Gribonval2010,Geng2011},
hinges on the absence of noise and cannot be straightforwardly transposed to take into account some noise. 

In this paper, we analyze the local minima of sparse coding {\em in the presence of noise and outliers}.
For that, we consider sparse coding with a regularized least-square cost function involving an $\ell^{1}$ penalty, under certain incoherence assumptions on the underlying ground truth dictionary and appropriate statistical assumptions on the distribution of the training samples.
To the best of our knowledge, this is the first analysis which relates to the widely used sparse coding objective function associated to the online learning approach of  \citet{Mairal2010}. In contrast, most of the emerging work on dictionary identifiability considers either an objective function based on $\ell^{1}$ minimization under equality constraints \citep{Gribonval2010,Geng2011}, for which there is no known efficient heuristic implementation, or on an $\ell^{0}$ criterion  \citep{Schnass:2013vd} {\em {\`a} la} K-SVD \citep{Aharon:2006dj}.
More algorithmic approaches have also recently emerged \citep{Spielman:2012ue,Arora:2013vq} demonstrating the existence of provably good (sometimes randomized) algorithms of polynomial complexity for dictionary learning. 
\citet{Agarwal:2013tya} combine the best of both worlds by providing a polynomial complexity algorithm based on a clever randomized clustering initialization \citep{Agarwal:2013ty,Arora:2013vq} followed by alternate optimization based on an $\ell^{1}$ minimization principle with equality constraints.
While this is a definite theoretical breakthrough, these algorithms are yet to be tested on practical problems, while on open source implementation (SPAMS\footnote{http://spams-devel.gforge.inria.fr/}) of the online learning approach of \citet{Mairal2010} is freely available   and has been extensively exploited on practical datasets over a range of applications.

\begin{savenotes}
\begin{table*}[htbp]
\small
   \centering
   \begin{tabular}{|p{3.8cm}|p{0.15cm}|p{0.15cm}|p{0.15cm}|p{0.6cm}|p{0.15cm}|p{1.8cm}|p{1.6cm}|p{2.8cm}||p{2cm}|} 
         \hline
      Reference    & \begin{sideways}Overcomplete\ \ \end{sideways} & \begin{sideways}Noise\end{sideways} & \begin{sideways}Outliers\ \ \end{sideways} & \begin{sideways} Global min / algorithm\end{sideways} & \begin{sideways}Polynomial algorithm\ \ \end{sideways} &  \begin{sideways}Exact (no noise, \end{sideways}\ \begin{sideways} no outlier, $n$ finite)\end{sideways} 
      & \qquad \begin{sideways}Sample complexity\end{sideways}\ \begin{sideways} (no noise)\end{sideways}  
      & \quad \begin{sideways}Admissible sparsity \end{sideways} \begin{sideways}for {\em exact} recovery\end{sideways}  & \qquad \begin{sideways}
      Coefficient model\end{sideways}  \begin{sideways}(main characteristics)\end{sideways}  \\
      \hline\hline
      \citet{2005_IEEE_TNN_SCA_GeorgievEtAL} 	& & & & & & &  &   $k=m-1$, & \\
	{\em Combinatorial approach} 
	& \yes 	& \no 		& \no 	& \yes &\no & \yes	 & $m {p \choose m-1}$ & $\LRIP_{m}(\Dbo) < 1$ & Combinatorial\\ \hline
      \citet{Aharon:2006dj} & & & & & & &  & & \\
	{\em Combinatorial approach}
	& \yes 	& \no 		& \no 	& \yes &\no & \yes  & $(k+1) {p \choose k}$ & $\LRIP_{2k}(\Dbo)<1$ & Combinatorial \\\hline
      \citet{Gribonval2010}    & & & & & & & & $\tfrac{k}{m} <$ & Bernoulli($k/p$) \\
      	{\em $\ell^{1}$ criterion}
	& \no 		& \no 		& \no  
      									& \no & \no & \yes	 & $\frac{m^{2} \log m}{k}$ &  $1-\|\Db^{\top}\Db-\Ib\|_{2,\infty}$ & -Gaussian\\\hline

      \citet{Geng2011}      	 & & & & & & & & & $k$-sparse \\
      	{\em $\ell^{1}$ criterion}
	& \yes 	& \no 		& \no   & \no & \no 	& \yes  & $k p^{3}$ & $O(1/\mu_{1}(\Dbo))$ &  -Gaussian \\\hline

      \citet{Spielman:2012ue} & & & & & & & & & Bernoulli($k/p$) \\
      	{\em $\ell^{0}$ criterion}			
	& \no 		& \no 		& \no  
      									& \yes & \no	& \yes   & $m \log m$  & $O(m)$ & -Gaussian or \\ 
      	{\em ER-SpUD ({\bf randomized})}  & & & & $P($\yes$)$ &  \yes & \yes & $m^{2} \log^{2} m$ &$O(\sqrt{m})$ &   -Rademacher  \\\hline

      \citet{Schnass:2013vd}  & & & & & & $\|\hat{\Db}-\Dbo\|_{2,\infty}$   & & & ``Symmetric \\
      {\em K-SVD criterion}\hfill \quad $~$ \qquad\qquad (unit norm tight frames only)
      & \yes 	& \yes 	& \no    
      									& \no & \no & $\leq r= O(pn^{-1/4})$	 & $mp^{3}$ & $O(1/\mu_{1}(\Dbo))$ &  decaying'': $\alphab_{j} = \epsilon_{j} \mathbf{a}_{\sigma(j)}$ \\ \hline
      \citet{Arora:2013vq}     	& & & & & & $\|\hat{\Db}-\Dbo\|_{2,\infty}$  & $\frac{p^{2}\log p}{k^{2}}$ & $O\big(\min(\frac{1}{\mu_{1}(\Dbo) \log m},$ & $k$-sparse \\
      {\em Clustering ({\bf randomized})}
      & \yes 	& \yes  
      							& \no  	& $P($\yes$)$ & \yes & $\leq r$   &  $+p \log p\cdot \big(k^{2}+ \log \tfrac{1}{r}\big)$  &  $p^{2/5}\big)$  & $1 \leq |\alpha_{j}| \leq C$ \\ \hline
      \citet{Agarwal:2013ty} & & & & & & & & $O\big(\min(1/\sqrt{\mu_1(\Dbo)},$ & $k$-sparse \\
      {\em Clustering ({\bf randomized}) \& $\ell^{1}$}
      & \yes 	& \no 		& \no 	& $P($\yes$)$ & \yes & \yes  & $p \log mp$ &  $m^{1/5},p^{1/6})\big)$ \quad\quad\quad(+ dynamic range)&  -Rademacher \\ \hline
      \citet{Agarwal:2013tya} & & & & & &  & &  $O\big(\min(1/\sqrt{\mu_1(\Dbo)},$ & $k$-sparse \\
      {\em $\ell^{1}$ optim with AltMinDict \& {\bf randomized} clustering init.}
      & \yes 	& \no 		& \no 	& $P(\yes)$ & \yes	& \yes  & $p^{2} \log p$ & $m^{1/9},p^{1/8})\big)$  & - i.i.d. \hfill \quad $~$ 
      \qquad $\loweralpha \leq |\alpha_{j}| \leq M$\\ \hline
      \citet{Schnass:2014vk}	& & & & & & $\|\hat{\Db}-\Dbo\|_{2,\infty}$ & & & ``Symmetric \\
      {\em Response maxim. criterion }
      & \yes 	& \yes & \no 
      									&	\no & \no & $\leq r$ & $\frac{mp^{3}k}{r^{2}}$	& $O(1/\mu_{1}(\Dbo))$ &  decaying'' \\ \hline
      \hline
      {\bf This contribution} & & & & & & $\|\hat{\Db}-\Dbo\|_{\fro}$  & & & $k$-sparse,  \\
      {\em Regularized $\ell^{1}$ criterion with penalty factor $\lambda$}
      & \yes 	& \yes 	& \yes &	\no & \no & $\leq r = O(\lambda)$\ \yes for $\lambda \to 0$  & $mp^{3}$ & $\mu_{k}(\Dbo) \leq 1/4$ & $\loweralpha \leq |\alpha_{j}|$, $\|\alphab\|_{2} \leq M_{\alphab}$ \\ \hline 
   \end{tabular}
   \caption{Overview of recent results in the field. For each approach, the table indicates (notations in Section~\ref{sec:notations}): \\
   1) whether the analysis can handle overcomplete dictionaries / the presence of noise / that of outliers; \\
   2) when an optimization criterion is considered: whether its global minima are characterized (in contrast to characterizing the presence of a local minimum close to the ground truth dictionary $\Dbo$); alternatively, whether a (randomized) algorithm with success guarantees is provided; the notation $P(\checkmark)$ indicates success with high probability of a randomized algorithm;\\
   3) whether a (randomized) algorithm with proved polynomial complexity is exhibited; \\
   4) whether the output $\hat{\Db}$ of the algorithm (resp. the characterized minimum of the criterion) is (with high probability) {\em exactly} the ground truth dictionary, in the absence of noise and outliers and with finitely many samples. Alternatively the guaranteed upper bound on the distance between $\hat{\Db}$ and $\Dbo$ is provided;\\
   5) the sample complexity $n = \Omega(\cdot)$, under the scaling $\triple \Dbo\triple_{2}=O(1)$, in the absence of noise;\\ 
   6) the sparsity levels $k$  allowing ``exact recovery''; \\ 
   7) a brief description of the models underlying the corresponding analyses. Most models are determined by: 
    \indent i) how the support is selected (a $k$-sparse support, or one selected through a $Bernoulli(k/p)$ distribution, i.e., each entry is nonzero with probability $k/p$); and 
    ii) how the nonzero coefficients are drawn: Gaussian, Rademacher ($\pm 1$ entries with equal probability), i.i.d.~with certain bound and variance constraints.
   The symmetric and decaying model of \citet[Definitions 2.1,2.2 ]{Schnass:2013vd} first generates a coefficient decay profile $\mathbf{a} \in \RR^{p}$, then the coefficient vector $\alphab$ using a random permutation $\sigma$ of indices and i.i.d. signs $\epsilon_{i}$. }
   \label{tab:stateoftheart}
\end{table*}
\end{savenotes}

\paragraph{Main contributions.}  Our main contributions can be summarized as follows:
 
\begin{enumerate}
\item We consider the recovery of a dictionary with $p$ atoms $\Dbo \in \Real^{m \times p}$ using $\ell_1$-penalized formulations with penalty factor $\lambda > 0$, given a training set of $n$ signals gathered in a data matrix $\Xb \in \Real^{m \times n}$. This is detailed in Section~\ref{sec:sparsecoding}.
\item We assume a general probabilistic model of sparse signals, where the data matrix $\Xb \in \Real^{m \times n}$ is generated as $\Dbo \Ab^o$ plus additive noise $\varepsilonb$.  Our model, described in Section~\ref{sec:sparseandspurious}, corresponds to a $k$-sparse support with loose decorrelation assumptions on the nonzero coefficients. It is closely connected to the $\Gamma_{k,C}$ model of  \citet[Definition 1.2]{Arora:2013vq}. In particular, unlike in independent component analysis (ICA) and in most related work, {\em no independence is assumed between nonzero coefficients}. 
\item We show that  under deterministic (cumulative) coherence-based sparsity assumptions (see Section~\ref{sec:coherence}) the minimized cost function has a guaranteed local minimum around the generating dictionary $\Dbo$ with high probability. 
\item We also prove support and coefficient recovery, which is important for blind source separation.
\item Our work makes it possible to better understand:
\begin{enumerate}
\item how small the neighborhood around the reference dictionary can be, i.e., tending to zero as the noise variance goes to zero. 
\item how many signals $n$ are sufficient to hope for the existence of such a controlled local minimum, i.e., $n = \Omega(m p^3)$.
In contrast to several recent results \citep{Schnass:2013vd,Arora:2013vq,Schnass:2014vk} where the sample complexity depends on the targeted resolution $r$ such that $\|\hat{\Db}-\Dbo\| \leq r$, our main sample complexity estimates are {\em resolution-independent} in the noiseless case. 
This is similar in nature to the better sample complexity results $n = \Omega(p^{2} \log p)$ obtained by \citet{Agarwal:2013tya} for a polynomial algorithm in a noiseless context, or $n = \Omega(p \log mp)$ obtained by \citet{Agarwal:2013ty} for Rademacher coefficients. 
This is achieved through a precise sample complexity analysis using Rademacher averages and Slepian's lemma. 
In the presence of noise, a factor $1/r^{2}$ seems unavoidable \citep{Schnass:2013vd,Arora:2013vq,Schnass:2014vk}.
\item what sparsity levels are admissible. Our main result is based on the cumulative coherence (see Section~\ref{sec:coherence}) $\mu_{k}(\Dbo) \leq 1/4$. It also involves a condition that restricts our analysis to overcomplete dictionaries where $p \lesssim m^{2}$, where previous works seemingly apply to very overcomplete settings. Intermediate results only involve restricted isometry properties. This may allow for much larger values of the sparsity level $k$, and more overcompleteness, but this is left to future work. 
\item what level of noise and outliers appear as manageable, with a precise control of the admissible ``energy'' of these outliers. While a first naive analysis would suggest a tradeoff between the presence of outliers and the targeted resolution $r$, we conduct a tailored analysis that demonstrates the existence of a {\em resolution-independent} threshold on the relative amount of  outliers to which the approach is robust. 
\end{enumerate}
 
\end{enumerate}

\section{Problem statement}
\label{sec:Statement}
We introduce in this section the material required to define our problem and state our results.
\paragraph{Notations.}\label{sec:notations}
For any integer $p$, we define the set $\SET{p} \defin \{1,\dots,p\}$.
For all vectors $\vb \in \Real^p$, we denote by $\sign(\vb) \in \{ -1,0,1 \}^p$ the vector
such that its $j$-th entry $[\sign(\vb)]_j$ is equal to zero if $\vb_j=0$, and to one (respectively, minus one) if $\vb_j > 0$ (respectively, $\vb_j < 0$).  
The notations $\Ab^{\top}$ and $\Ab^{+}$ denote the transpose and the Moore-Penrose pseudo-inverse of a matrix $\Ab$. 
We extensively manipulate matrix norms in the sequel. For any matrix $\Ab \in \Real^{m\times p}$, we define its Frobenius norm by 
$\|\Ab\|_\fro\defin [\sum_{i=1}^m\sum_{j=1}^p \Ab_{ij}^2 ]^{1/2}$; 
similarly, we denote the spectral norm of $\Ab$ by
$\triple \Ab \triple_2 \defin \max_{ \|\xb\|_2\leq1 } \|\Ab\xb\|_2$, we refer to the operator $\ell_\infty$-norm as
$\triple \Ab \triple_\infty \defin \max_{ \|\xb\|_\infty\leq 1 } \|\Ab\xb\|_\infty = \max_{ i\in\SET{m} } \sum_{j=1}^p |\Ab_{ij}|$, and we denote $\| \Ab\|_{1,2} \defin \sum_{j\in \SET{p}} \|\ab^{j}\|_{2}$ with $\ab^{j}$ the $j$-th column of $\Ab$. In several places we will exploit the fact that for any matrix $\Ab$ we have
\[
\triple \Ab\triple_{2} \leq \|\Ab\|_{\fro}.
\]
For any square matrix $\Bb \in \Real^{n\times n}$, we denote by $\diag(\Bb) \in \Real^n$ the vector formed by extracting the diagonal terms of $\Bb$, and conversely, for any $\bb \in \Real^n$, we use $\Diag(\bb) \in \Real^{n\times n}$ to represent the (square) diagonal matrix whose diagonal elements are built from the vector $\bb$.  Denote $\mathrm{off}(\Ab) \defin \Ab -\Diag(\diag(\Ab))$ the off-diagonal part of $\Ab$, which matches $\Ab$ except on the diagonal where it is zero. The identity matrix is denoted $\Ib$.  

For any $m \times p$ matrix $\Ab$ and index set $\J \subset \SET{p}$ we denote by $\Ab_{\J}$ the matrix obtained by concatenating the columns of $\Ab$ indexed by $\J$.
The number of elements or size of $\J$ is denoted $|\J|$, and its complement in $\SET{p}$ is denoted $\J^{c}$. Given a matrix $\Db \in \Real^{m \times p}$ and a support set $\J$ such that $\DbJ$ has linearly independent columns, we define the shorthands
\begin{eqnarray*}
\GramJb & \defin & \GramJb(\Db) \defin \DbJT \DbJ \label{eq:DefGram}\\
\ThetaJb &\defin & \ThetaJb(\Db) \defin \GramJb^{-1}\label{eq:DefGramInv}\\
\PJb & \defin & \PJb(\Db) \defin \DbJ \DbJ^{+} = \DbJ\ThetaJb \DbJT\label{eq:DefPJ},
\end{eqnarray*}
respectively the Gram matrix of $\DbJ$ and its inverse, and the orthogonal projector onto the span of the columns of $\Db$ indexed by $\J$. 

For any function $h(\Db)$ we define $\Delta h(\Db';\Db) \defin h(\Db')-h(\Db)$. Finally, the ball (resp.~the sphere) of radius $r>0$ centered on $\Db$ in $\Real^{m \times p}$ with respect to the Frobenius norm is denoted $\Bcal(\Db;r)$ (resp.~$\Scal(\Db;r)$). 

The notation $a = O(b)$, or $a \lesssim b$, indicates the existence of a finite constant $C$ such that $a \leq C b$. Vice-versa, $a = \Omega(b)$, or $a \gtrsim b$, means $b = O(a)$, and $a \asymp b$ means that $a = O(b)$ and $b = O(a)$ hold simultaneously.

\subsection{Background material on sparse coding}\label{sec:sparsecoding}

Let us consider a set of $n$ signals $\Xb \defin [\xb^1,\dots,\xb^n]\! \in\! \Real^{m\times n}$ each of dimension $m$,
along with a dictionary 
$\Db \defin [\db^1,\dots,\db^p]\! \in \Real^{m\times p}$ formed of $p$ columns called atoms---also known as dictionary elements.
Sparse coding simultaneously learns $\Db$ and
a set of $n$ sparse $p$-dimensional vectors $\Ab \defin[\alphab^1,\dots,\alphab^n]\! \in\! \Real^{p\times n}$, such that each signal $\xb^i$ can be well approximated by 
$
\xb^i \approx \Db \alphab^i
$
for $i$ in $\SET{n}$.
By sparse, we mean that the vector $\alphab^i$ has $k \ll p$ non-zero coefficients, 
so that we aim at reconstructing $\xb^i$ from only a few atoms.
Before introducing the sparse coding formulation~\citep{Olshausen1997,ZP01,Mairal2010}, 
we need some definitions. We denote by $g \colon  \Real^{p} \to \Real^{+}$ a penalty function that will typically promote sparsity. 

\begin{definition} For any dictionary $\Db \in \Real^{m\times p}$ 
and signal $\xb \in \Real^m$, we define 
\begin{eqnarray}
\label{eq:Li}
\Lcal_{\xb}(\Db,\alphab) 
&\defin& \tfrac{1}{2} \|\xb-\Db\alphab\|_2^2 + g(\alphab)\\
\label{eq:fi}
f_\xb(\Db) 
&\defin& 
\inf_{\alphab\in\Real^p} \Lcal_{\xb}(\Db,\alphab). 
\end{eqnarray}
Similarly for any set of $n$ signals $\Xb \defin [\xb^1,\dots,\xb^n] \in \Real^{m \times n}$, we introduce
\begin{equation}\label{eq:f}
F_\Xb(\Db) \defin \tfrac{1}{n} \sum_{i=1}^n f_{\xb^i}(\Db). 
\end{equation}
\end{definition}

Based on problem~(\ref{eq:fi}) with the $\ell^{1}$ penalty, 
\begin{equation}
\label{eq:DefL1Penalty}
g(\alphab) \defin \lambda \|\alphab\|_{1},
\end{equation}
refered to as Lasso in statistics~\citep{Tibshirani1996}, and 
basis pursuit in signal processing~\citep{Chen1998},
the standard approach to perform sparse coding~\citep{Olshausen1997,ZP01,Mairal2010} solves the minimization problem 
\begin{equation}\label{eq:min_fn}
\min_{\Db \in \mathcal{D}} F_\Xb(\Db),
\end{equation}
where the regularization parameter $\lambda$ in~(\ref{eq:DefL1Penalty}) controls the tradeoff between sparsity and approximation quality, 
while $\mathcal{D} \subseteq \Real^{m\times p}$ is a compact constraint set; 
in this paper, $\mathcal{D}$ denotes the set of dictionaries with unit $\ell_2$-norm atoms, also called the {\em oblique manifold}~\citep{Absil2008}, which is a natural choice in signal and image processing~\citep{Mairal2010,Gribonval2010,Rubinstein:2010aa,Tosic2011}.
Note however that other choices for the set $\mathcal{D}$ may also be relevant depending on the application 
at hand~(see, e.g.,~\citet{Jenatton2010b} where in the context of topic models, the atoms in $\mathcal{D}$ belong to the unit simplex). The sample complexity of dictionary learning with general constraint sets is studied by \citet{Maurer2010,Gribonval:2013tx} for various families of penalties $g(\alphab)$. 

\subsection{Main objectives}\label{sec:main_obj}
The goal of the paper is to characterize some local minima of the function $F_\Xb$ with the $\ell^{1}$ penalty, under a generative model for the signals $\xb^i$.
Throughout the paper, the main model we consider is that of observed signals generated \emph{independently} according to a specified probabilistic model.  The  signals are typically drawn  as $\xb^{i} \defin \Dbo \alphab^{i} + \varepsilonb^{i}$ where $\Dbo$ is a fixed reference dictionary, 
$\alphab^{i}$ is a sparse coefficient vector, and $\varepsilonb^{i}$ is a noise term. The specifics of the underlying probabilistic model, and its possible contamination with  {\em outliers} are considered in Section~\ref{sec:sparseandspurious}. 
Under this model, we can state more precisely our objective: we want to show that, for large enough $n$,
$$
\Pr\big(F_\Xb\ \text{has a local minimum in a ``neighborhood'' of}\ \Dbo\big) \approx 1.
$$
We loosely refer to a ``neighborhood'' since in our regularized formulation, a local minimum is not necessarily expected to appear exactly at $\Dbo$.
The proper meaning of this neighborhood is in the sense of the Frobenius distance $\|\Db-\Dbo\|_{\fro}$. Other metrics can be envisioned and are left as future work.   
How large $n$ should be for the results to hold is related to the notion of sample complexity. 

\paragraph{Intrinsic ambiguities of sparse coding.} Importantly, we so far referred to $\Dbo$ as \emph{the} reference dictionary generating the signals. 
However, and as already discussed by~\citet{Gribonval2010,Geng2011} and more generally in the related literature on blind source separation 
and independent component analysis~\citep[see, e.g.,][]{ComonJutten2010}, 
it is known that the objective of~(\ref{eq:min_fn}) is invariant by sign flips and permutations of the atoms.
As a result, while solving~(\ref{eq:min_fn}), we cannot hope to identify the specific~$\Dbo$.
We focus instead on the local identifiability of the whole \emph{equivalence class} defined by the transformations described above.
From now on, we simply refer to $\Dbo$ to denote one element of this equivalence class. 
Also, since these transformations are \emph{discrete}, 
our local analysis is not affected by invariance issues, as soon as we are sufficiently close to some representant of $\Dbo$. 

\subsection{The sparse and the spurious}\label{sec:sparseandspurious}
The considered training set is composed of two types of vectors: {\em the sparse}, drawn i.i.d. from a distribution generating (noisy) signals that are sparse in the dictionary $\Dbo$; and {\em the spurious}, corresponding to {\em outliers}.

\subsubsection{The sparse: probabilistic model of sparse signals ({\em inliers})}\label{sec:gen_model}

Given a reference dictionary $\Dbo \in \mathcal{D}$, 
each ({\em inlier}) signal $\xb \in \Real^m$ is built \emph{independently} in three steps:
\begin{itemize}
\item {\bf Support generation:} Draw uniformly without replacement $k$ atoms out of the $p$ available in $\Dbo$.
This procedure thus defines a support $\J \subset \SET{p}$ whose size is $|\J|=k$.

\item {\bf Coefficient vector:} Draw a sparse vector $\alphabo \in \Real^p$ supported on $\J$ (i.e., with $\alphabo_{\J^{c}}=0$).
\item {\bf Noise:} Eventually generate the signal $\xb=\Dbo \alphabo + \varepsilonb$.
\end{itemize}
The random vectors $\alphabo_{\J}$ and $\varepsilonb$ satisfy the following assumptions, where we denote $\sbo = \sign(\alphabo)$.
 \begin{assumption}[Basic signal model]\label{assume:basic}
 \begin{align}
\Exp \left \{\alphaboJ \alphaboJT\ |\ \J \right\} = &\ \Exp \{\alpha^{2}\} \cdot \Ib \label{eq:coeffwhiteness}  \\  
 & \textbf{\text{coefficient whiteness}} \notag\\
\Exp \left \{\sboJ \sboJT\ |\ \J \right\} = &\ \Ib \label{eq:signwhiteness}   \\ 
& \textbf{\text{sign whiteness}}\notag\\
\Exp \left \{\alphaboJ \sboJT\ |\ \J \right\} = &\ \Exp \{|\alpha|\} \cdot \Ib \label{eq:coeffmagnitude}  \\ 
& \textbf{\text{sign/coefficient decorrelation}} \notag\\
\Exp \left \{\varepsilonb \alphaboJT\ |\ \J \right\} = &\ \Exp \left \{\varepsilonb \sboJT\ |\ \J \right\}  = 0  \\ & \textbf{\text{noise/coefficient decorrelation}} \notag\\
\Exp \left \{\varepsilonb \varepsilonb^{\top} | \J \right\} = &\ \Exp \{\epsilon^{2}\} \cdot \Ib    \\ 
& \textbf{\text{noise whiteness}} \notag
\end{align}
\end{assumption}
In light of these assumptions we define the shorthand
\begin{equation}\label{eq:DefKappa}
\kappa_{\alpha} \defin \frac{\Exp |\alpha|}{\sqrt{\Exp \alpha^{2}}}.
\end{equation}
By Jensen's inequality, we have $\kappa_{\alpha}\leq 1$, with $\kappa_{\alpha}=1$ corresponding to the degenerate situation where $\alphab_{\J}$ almost surely has all its entries of the same magnitude, i.e., with the smallest possible dynamic range. Conversely, $\kappa_{\alpha} \ll 1$ corresponds to marginal distributions of the coefficients with a wide dynamic range. In a way, $\kappa_{\alpha}$ measures the typical ``flatness'' of $\alphab$ (the larger $\kappa_{\alpha}$, the flatter the typical $\alphab$)

A {\em boundedness} assumption will complete Assumption~\ref{assume:basic} to handle sparse recovery in our proofs.
\begin{assumption}[Bounded signal model]\label{assume:bounded}
\begin{align}
\Pr(\min_{j \in \J} |\alphabo_{j}| < \loweralpha\ |\ \J) = 0, &\ \text{for some}\ \loweralpha > 0 \label{eq:coeffthreshold}\\
 &  \textbf{\text{coefficient threshold}}\notag\\
\Pr( \|\alphabo\|_{2} > M_{\alphab}) =0, &\ \text{for some}\ M_{\alphab}  \label{eq:coeffboundeucl}\\
& \textbf{\text{coefficient boundedness}}\notag\\
\Pr( \|\varepsilonb\|_{2} > M_{\varepsilonb}) = 0, &\ \text{for some}\ M_{\varepsilonb} \label{eq:noiseboundeucl}.\\
& \textbf{\text{noise boundedness}} \notag
\end{align}
\end{assumption}

\begin{remark}
Note that neither Assumption~\ref{assume:basic} nor Assumption~\ref{assume:bounded} requires that the entries of $\alphabo$ indexed by~$\J$ be i.i.d.
In fact, the stable and robust identifiability of $\Dbo$ from the training set $\Signal$ rather stems from geometric properties of the training set (its concentration close to a union of low-dimensional subspaces spanned by few columns of $\Dbo$) than from traditional independent component analysis (ICA). This will be illustrated by a specific coefficient model (inspired by the symmetric decaying coefficient model of \citet{Schnass:2013vd})  in Example~\ref{ex:2}.
\end{remark}

To summarize, the signal model is parameterized by the sparsity $k$, the expected coefficient energy $\Exp\ \alpha^{2}$, the minimum coefficient magnitude $\loweralpha$, maximum norm $M_{\alphab}$, and the flatness $\kappa_{\alpha}$. These parameters are interrelated, e.g., $\loweralpha \sqrt{k} \leq M_{\alphab}$.

\paragraph{Related models}\label{sec:describerelatedmodels}
The Bounded model above is related to the $\Gamma_{k,C}$ model of \citet{Arora:2013vq} (which also covers \citep{Agarwal:2013ty,Agarwal:2013tya}): in the latter, our assumptions~\eqref{eq:coeffthreshold}-\eqref{eq:coeffboundeucl} are replaced by $1 \leq |\alpha_{j}| \leq C$. Note that the $\Gamma_{k,C}$ model of \citet{Arora:2013vq} does not assume that the support is chosen uniformly at random (among all k-sparse sets) and some mild dependencies are allowed. Alternatives to~\eqref{eq:coeffboundeucl} with a control on $\|\alphab\|_{q}$ for some $0<q\leq \infty$ can easily be dealt with through appropriate changes in the proofs, but we chose to focus on $q=2$ for the sake of simplicity.
Compared to early work in the field considering a Bernoulli-Gaussian model \citep{Gribonval2010} or a $k$-sparse Gaussian model \citep{Geng2011},  Assumptions~\ref{assume:basic}  \&~\ref{assume:bounded} are rather generic and do not assume a specific shape of the distribution $\Pr(\alphab)$. In particular, the conditional distribution of $\alphab_{\J}$ given $\J$ {\em may depend on $\J$}, provided its ``marginal moments'' $\Exp\ \alpha^{2}$ and $\Exp\ |\alpha|$ satisfy the expressed assumptions. 

\subsubsection{The spurious: {\em outliers}}
 In addition to a set of $n_{\textrm{in}}$ {\em inliers} drawn i.i.d.~as above, the training set may contain $n_{\textrm{out}}$ {\em outliers}, i.e., training vectors that may have completely distinct properties and may not relate in any manner to the reference dictionary $\Dbo$. Since the considered cost function $F_{\Signal}(\Db)$ is not altered when we permute the columns of the matrix $\Signal$ representing the training set, without loss of generality we will consider that $\Signal = [\Signal_{\textrm{in}},\Signal_{\textrm{out}}]$. As we will see, controlling the ratio $\|\Signal_{\textrm{out}}\|_{\fro}^{2}/n_{\textrm{in}}$ of the total energy  of outliers to the number of inliers will be enough to ensure that the local minimum of the sparse coding objective function is robust to outliers. While this control does not require any additional assumptions, the ratio $\|\Signal_{\textrm{out}}\|_{\fro}^{2}/n_{\textrm{in}}$  directly impacts the error in estimating the dictionary (i.e., the local minimum in $\Db$ is further away from $\Dbo$). With additional assumptions (namely that the reference dictionary is complete), we show that if $\|\Signal_{\textrm{out}}\|_{1,2}/n_{\textrm{in}}$ is sufficiently small, then our upper bound on the distance from the local minimum to $\Dbo$ remains valid. 
  
 \begin{figure*}
 \begin{center}
 \hspace*{-1cm}
 \includegraphics[scale=0.5]{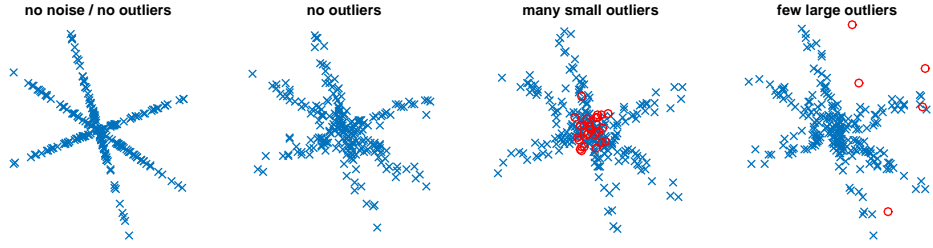}
 \end{center}
 
 \vspace*{-.5cm}

 \caption{Noise and outliers: illustration with three atoms in two dimensions (blue crosses: inliers, red circles:outliers).}
 \end{figure*}
\subsection{The dictionary: cumulative coherence and restricted isometry properties}\label{sec:coherence}

Part of the technical analysis relies on the notion of {\em sparse recovery}. A standard sufficient support recovery condition is referred to as the \emph{exact recovery condition} 
in signal processing~\citep{Fuchs2005,Tropp2004} or the \emph{irrepresentability condition} (IC) in the machine learning and statistics communities~\citep{Wainwright2009, Zhao2006}.
It is a key element to almost surely control the supports of the solutions of $\ell_1$-regularized least-squares problems.
To keep our analysis reasonably simple, we will impose the irrepresentability condition \emph{via} a condition on the \emph{cumulative coherence} of the reference dictionary $\Dbo \in \mathcal{D}$, which is a stronger requirement~\citep{Van2009,Foucart:2012wp}.  
This quantity is defined (see, e.g.,~\cite{Fuchs2005,Donoho2001}) for unit-norm columns (i.e., on the oblique manifold $\mathcal{D}$) as 
\begin{equation}
\mu_{k}(\Db) \defin \sup_{|\J| \leq k} \sup_{j \notin \J} \|\DbJT \db^{j}\|_{1}.
\end{equation}
The term $\mu_{k}(\Db)$ gives a measure of the level of correlation between columns of $\Db$. It is for instance equal to zero in the case of an orthogonal dictionary, and exceeds one if $\Db$ contains two colinear columns. For a given dictionary $\Db$, the cumulative coherence of $\mu_{k}(\Db)$ increases with $k$, and $\mu_{k}(\Db) \leq k \mu_{1}(\Db)$ where $\mu_{1}(\Db) = \max_{i \neq j} |\langle \db^{i},\db^{j}\rangle|$ is the plain coherence of $\Db$.

For the theoretical analysis we conduct, we consider a deterministic assumption based on the cumulative coherence, slightly weakening the coherence-based assumption considered for instance in previous work on dictionary learning~\citep{Gribonval2010,Geng2011}. Assuming that $\mu_{k}(\Dbo) < 1/2$ where $k$ is the level of sparsity of the coefficient vectors $\alphab^{i}$, an important step will be to show that such an upper bound on $\mu_{k}(\Dbo)$ loosely transfers to $\mu_{k}(\Db)$ provided that $\Db$ is close enough to $\Dbo$, leading to {\em locally stable exact recovery results in the presence of bounded noise} (Proposition~\ref{prop:simplified_exact_recovery}). 

Many elements of our proofs rely on a restricted isometry property (RIP), which is known to be weaker than the coherence assumption~\citep{Van2009}. 
By definition the {\em restricted isometry constant} of order $k$ of a dictionary $\Db$, $\RIP_{k}(\Db)$ is the smallest number $\RIP_{k}$ such that for any support set $\J$ of size $|\J| = k$ and $\zb \in \Real^k$,
\begin{equation}
\label{eq:DefRICk}
\left(1-\RIP_{k}\right) \|\zb\|_{2}^{2} \leq \|\Db_{\J}\zb\|_{2}^{2} \leq \left(1+\RIP_{k}\right) \|\zb\|_{2}^{2}.
\end{equation}
In our context, {\em the best lower bound and best upper bound will play significantly different roles}, so we define them separately as $\LRIP_{k}(\Db)$ and $\URIP_{k}(\Db)$, so that $\RIP_{k}(\Db) = \max(\LRIP_{k}(\Db),\URIP_{k}(\Db))$. Both can be estimated by the cumulative coherence as $\delta_{k}(\Db) \leq \mu_{k-1}(\Db)$ by Gersgorin's disc theorem \citep{Tropp2004}.
Possible extensions of this work that would fully relax the incoherence assumption and only rely on the RIP are discussed in Section~\ref{sec:discussion}.


\section{Main results}\label{sec:main_results}

Our main results, described below, show that under appropriate scalings of the dictionary dimensions $m$, $p$, number of training samples $n$, 
and model parameters, 
the sparse coding problem~(\ref{eq:min_fn}) admits a local minimum in a neighborhood of $\Dbo$ of controlled size, 
for appropriate choices of the regularization parameter $\lambda$. 
The main building blocks of the results (Propositions~\ref{prop:delta_phi}-\ref{prop:maindeltaphi}-\ref{prop:simplified_exact_recovery}) and the high-level structure of their proofs are given in Section~\ref{sec:proofoutline}. The most technical lemmata are postponed to the Appendix.

\subsection{Stable local identifiability}
We begin with asymptotic results ($n$ being infinite), in the absence of outliers.

\begin{theorem}[Asymptotic results, bounded model, no outlier]\label{thm:mainasymp}
Consider the following  assumptions:
\begin{itemize}
\item {\bf Coherence and sparsity level:} consider $\Dbo \in \mathcal{D}$ and $k$ such that 
\begin{eqnarray}
\label{eq:DefCumCoherMaxMainTheorem}
 \mu_{k}(\Dbo) & \leq & 1/4\\
 \label{eq:DefKMax}
k & \leq & \frac{p}{16 (\triple \Dbo\triple_{2}+1)^{2}}.
\end{eqnarray}
\item {\bf Coefficient distribution:}
assume the Basic \& Bounded signal model (Assumptions~\ref{assume:basic} \&~\ref{assume:bounded}) and
\begin{eqnarray}
 \frac{\Exp\ \alpha^{2}}{M_{\alphab} \Exp\ |\alpha|}
 &>& 
 84  \cdot (\triple \Dbo\triple_{2}+1) \cdot \frac{\frac{k}{p} \cdot  \|\DboT\Dbo-\Ib\|_{\fro}}{1-2\mu_{k}(\Dbo)}.\notag\\
\label{eq:DefRangeMax}
\end{eqnarray}
This implies $C_{\min} < C_{\max}$ where we define
\begin{eqnarray}
C_{\min} & \defin & 24 \kappa_{\alpha}^{2} \cdot (\triple \Dbo\triple_{2}+1) \cdot \frac{k}{p} \cdot \|\DboT\Dbo-\Ib\|_{\fro}, \notag\\
\label{eq:DefRMin}\\
\label{eq:DefRMax}
C_{\max} & \defin &  \frac{2}{7} \cdot \frac{\Exp\ |\alpha|}{M_{\alphab}} \cdot (1-2\mu_{k}(\Dbo)).
\end{eqnarray}
\item {\bf Regularization parameter:} consider a small enough regularization parameter,
\begin{equation}
\label{eq:LambdaMax}
\lambda  \leq \frac{\loweralpha}{4}.
\end{equation}
Denoting $\bar{\lambda}  \defin \tfrac{\lambda}{\Exp\ |\alpha|}$, this implies $C_{\max} \cdot \bar{\lambda} \leq 0.15$. 
\item {\bf Noise level:} assume a small enough relative noise level,
\begin{equation}
\label{eq:DefMaxRelativeNoiseLevelGivenReg}
\frac{M_{\varepsilonb}}{M_{\alphab} } < \frac{7}{2} \cdot (C_{\max}-C_{\min}) \cdot \bar{\lambda}.
\end{equation}
\end{itemize}
Then, for any resolution $r>0$ such that
\begin{eqnarray}
\label{eq:RadiusRange}
C_{\min} \cdot \bar{\lambda} < r < C_{\max} \cdot \bar{\lambda},
\end{eqnarray}
and 
\begin{equation}\label{eq:DefMaxRelativeNoiseLevelThm}
\frac{M_{\varepsilonb}}{M_{\alphab} } < \frac{7}{2} \left(C_{\max} \cdot \bar{\lambda}-r\right),
\end{equation}
the function $\Db \in \mathcal{D} \mapsto \Exp\ F_{\Signal}(\Db)$ admits a local minimum $\hat{\Db}$ such that $\|\hat{\Db}-\Dbo\|_{\fro} < r$.
\end{theorem}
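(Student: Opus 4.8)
The plan is to produce an interior local minimizer by a compactness-plus-boundary argument. Since $\mathcal{D}$ is compact, the set $\overline{\Bcal}(\Dbo;r)\cap\mathcal{D}$ is compact and the continuous map $\Db\mapsto \Exp F_{\Signal}(\Db)$ attains its minimum on it; it therefore suffices to rule out the minimum being attained on the boundary sphere. Concretely, using $\Dbo$ (an interior point, at distance $0<r$) as a reference value, I would establish the strict inequality
\[
\Exp F_{\Signal}(\Db) - \Exp F_{\Signal}(\Dbo) > 0 \quad\text{for all } \Db \in \Scal(\Dbo;r).
\]
Any minimizer $\hat\Db$ of $\Exp F_{\Signal}$ over the closed ball then lies in the open ball, hence is a genuine local minimum of $\Exp F_{\Signal}$ on $\mathcal{D}$ with $\|\hat\Db-\Dbo\|_{\fro}<r$, which is exactly the claim. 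Because the sign-flip/permutation ambiguities are discrete, staying within radius $r$ around this representative of $\Dbo$ causes no invariance issue.

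First I would certify that the closed-form regime applies throughout $\Bcal(\Dbo;r)$. The coherence bound $\mu_{k}(\Dbo)\le 1/4$, the small penalty $\lambda\le\loweralpha/4$, and the radius/noise budget \eqref{eq:DefMaxRelativeNoiseLevelThm} (which rearranges to ``$r$ plus a noise term stays below the recovery ceiling $\sim C_{\max}\bar\lambda$'') are designed to feed the stable exact-recovery statement, Proposition~\ref{prop:simplified_exact_recovery}. This guarantees that for every $\Db\in\Bcal(\Dbo;r)$ and almost every inlier $\xb$, the Lasso problem defining $f_{\xb}(\Db)$ in~\eqref{eq:fi} has a unique minimizer supported exactly on the generating support $\J$ with the correct signs $\sbo_{\J}$. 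On this event the infimum is attained in closed form on $\J$, a smooth function of $\DbJ,\xb,\lambda,\sbo_{\J}$ through $\ThetaJb$ and $\PJb$, so that the increment $\Exp F_{\Signal}(\Db)-\Exp F_{\Signal}(\Dbo)=\Exp\big[\Delta f_{\xb}(\Db;\Dbo)\big]$ becomes explicitly computable (recall $\Exp F_{\Signal}(\Db)=\Exp_{\xb}f_{\xb}(\Db)$ by i.i.d.\ sampling).

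Next I would take the expectation of this closed-form increment over the signal model. Here the whiteness and decorrelation identities \eqref{eq:coeffwhiteness}--\eqref{eq:noiseboundeucl} of Assumption~\ref{assume:basic} are precisely what collapses the cross terms between $\alphaboJ$, $\sbo_{\J}$ and $\varepsilonb$ into multiples of $\Ib$, leaving an expression governed by $\Exp\,\alpha^{2}$, $\Exp\,|\alpha|$, $\Exp\,\epsilon^{2}$, the coherence $\mu_{k}(\Dbo)$, and the off-diagonal Gram energy $\|\DboT\Dbo-\Ib\|_{\fro}$. I would then invoke Proposition~\ref{prop:delta_phi} to pass from $\Exp F_{\Signal}$ to a clean auxiliary lower bound, and Proposition~\ref{prop:maindeltaphi} to show that this bound, viewed as a function of the radius, is strictly positive exactly on the window $C_{\min}\bar\lambda<\|\Db-\Dbo\|_{\fro}<C_{\max}\bar\lambda$ of \eqref{eq:RadiusRange}. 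The lower threshold $C_{\min}\bar\lambda$ is where the positive restoring curvature overtakes the negative linear ``bias'' induced by the $\ell^{1}$ penalty and inter-atom correlations (this is why the minimum sits near but not at $\Dbo$), while $C_{\max}\bar\lambda$ is the stability ceiling beyond which exact recovery, hence the closed form, fails. Hypothesis \eqref{eq:DefRangeMax} makes $C_{\min}<C_{\max}$ so the window is nonempty, and \eqref{eq:DefMaxRelativeNoiseLevelGivenReg} leaves room for a nonzero noise level.

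The main obstacle is the content delegated to Propositions~\ref{prop:delta_phi} and~\ref{prop:maindeltaphi}: producing a lower bound on the expected cost increment that is simultaneously uniform over all secant directions $\Db-\Dbo$ constrained to the oblique manifold, and tight enough to turn positive already at radius $C_{\min}\bar\lambda$ rather than at some larger radius that would collide with the recovery ceiling $C_{\max}\bar\lambda$. Since only \emph{moment} conditions, not independence, are assumed on the coefficients, all second-moment bookkeeping must flow through the conditional identities of Assumption~\ref{assume:basic} alone, and the unit-norm constraint must be respected when extracting the curvature of the quadratic term. Relative to that, the proof of the theorem itself is the comparatively routine task of checking that the stated hypotheses supply the premises of the three propositions and then assembling the compactness argument above.
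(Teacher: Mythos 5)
Your proposal is correct and follows essentially the same route as the paper's own proof: verify that assumptions~\eqref{eq:DefCumCoherMaxMainTheorem}--\eqref{eq:DefMaxRelativeNoiseLevelGivenReg} supply the premises of Propositions~\ref{prop:delta_phi}, \ref{prop:maindeltaphi} and~\ref{prop:simplified_exact_recovery}, use exact recovery to equate $\Delta f_{\signal}(\Db;\Dbo)$ with $\Delta\phi_{\signal}(\Db;\Dbo|\sbo)$ for $r < C_{\max}\cdot\bar{\lambda}$ under the noise budget~\eqref{eq:DefMaxRelativeNoiseLevelThm}, lower-bound the expectation so that $\Delta f_{\PP}(r)>0$ on the window~\eqref{eq:RadiusRange}, and conclude by the compactness/boundary-positivity argument that a local minimum lies in the open ball around $\Dbo$. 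The only content the paper adds beyond your outline is the elementary bookkeeping showing $\bar{\lambda}\leq\min\bigl(\tfrac14,\tfrac{3}{20C_{\min}},\tfrac{\loweralpha}{2\,\Exp|\alpha|}\bigr)$ and $\tfrac23 C_{\min}\bar{\lambda}(1+2\bar{\lambda})\leq C_{\min}\bar{\lambda}<C_{\max}\bar{\lambda}\leq 0.15$, which is exactly the ``routine checking'' you identified.
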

\begin{remark}[Limited over-completeness of $\Dbo$]
It is perhaps not obvious how strong a requirement is assumption~\eqref{eq:DefRangeMax}. 
On the one hand, its left hand side is easily seen to be less than one (and as seen above can be made arbitarily close to one with appropriate coefficient distribution). On the other hand by the Welsh bound $\|\DboT\Dbo-\Id\|_{\fro} \geq \sqrt{p(p-m)/m}$, the bound $\triple \Dbo \triple_{2} \geq  \|\Dbo\|_{\fro}/\sqrt{m} = \sqrt{p/m}$, and the assumption $\mu_{k}(\Dbo) \leq 1/4$, its right hand side is bounded from below by $\Omega(k\sqrt{(p-m)/m^{2}})$. Hence, a consequence of assumption~\eqref{eq:DefRangeMax} is that Theorem~\ref{thm:mainasymp} only applies to dictionaries with limited over-completeness, with $p \lesssim m^{2}$. 
This is likely to be an artifact from the use of coherence in our proof, and a degree of overcompleteness $p=O(m^2)$ covers already interesting practical settings: for example \cite{Mairal2010} consider $m = (8 \times 8)$ patches with $p = 256$ atoms $< m^{2} = 64^2 = 4096$)
\end{remark}
Since $\frac{k}{p} \cdot  \|\DboT\Dbo-\Ib\|_{\fro} \leq k\mu_{1}(\Dbo)$ and $\mu_{k}(\Dbo) \leq k \mu_{1}(\Dbo)$, a crude upper bound on the rightmost factor in~\eqref{eq:DefRangeMax} is $k\mu_{1}(\Dbo)/(1-2k\mu_{1}(\Dbo))$, which appears in many coherence-based sparse-recovery results. 

\subsubsection{Examples}
Instantiating Theorem~\ref{thm:mainasymp} on a few examples highlights the strength of its main assumptions.
\begin{example}[Incoherent pair of orthonormal bases]\label{ex:2}
When $\Dbo$ is an incoherent dictionary in $\Real^{m \times p}$, i.e., a dictionary with (plain) coherence $\mu = \mu_{1}(\Dbo) \ll 1$, we have the estimates \citep{Tropp2004}
\(
\mu_{k}(\Dbo) \leq k\mu
\)
and 
\[
\|\DboT\Dbo-\Ib\|_{\fro} \leq \sqrt{p(p-1)\mu^{2}} \leq p\mu.
\]
Assumption~\eqref{eq:DefCumCoherMaxMainTheorem} therefore holds as soon as $k \leq 1/(4\mu)$. In the case where $p=2m$ and $\Dbo$ is not only incoherent but also a union of two orthonormal bases, we further have $\triple\Dbo\triple_{2} = \sqrt{2}$ hence assumption~\eqref{eq:DefKMax} is fulfilled as soon as $k \leq p/100 = m/50$. Moreover, the right hand side in~\eqref{eq:DefRangeMax} reads
\[
84 \cdot (\triple \Dbo\triple + 1) \cdot \frac{\frac{k}{p} \cdot  \|\DboT\Dbo-\Ib\|_{\fro}}{1-2\mu_{k}(\Dbo)} \leq   \frac{203k\mu}{1-2k\mu} \leq 406k\mu,
\]
and assumption~\eqref{eq:DefRangeMax} holds provided that $\Exp \alpha^{2}/(M_{\alphab} \Exp |\alpha|)$ exceeds this threshold. We discuss below concrete signal settings where this condition can be satisfied:
\begin{itemize}
\item {\bf i.i.d.~bounded coefficient model:} on the one hand, consider nonzero coefficients drawn i.i.d.~with $\PP(|\alphab_{j}| < \loweralpha|j \in J)=0$. The almost-sure upper-bound $M_{\alphab}$ on $\|\alphab\|_{2}$ implies the existence of $\upperalpha \geq \loweralpha$ such that $\PP(|\alphab_{j}| > \upperalpha|j \in J)=0$. As an example, consider coefficients drawn i.i.d.~with $\PP(\alphab_{j} = \pm \upperalpha | j \in \J) = \pi \in (0,1)$ and $\PP(\alphab_{j} = \pm \loweralpha | j \in \J) = 1-\pi$. For large $\upperalpha$ we have $\Exp\alpha^{2} = \pi \upperalpha^{2} + (1-\pi) \loweralpha^{2} \asymp \pi \upperalpha^{2}$, $\Exp|\alpha| \asymp \pi \upperalpha$, and $M_{\alphab} = \sqrt{k} \upperalpha$. This yields
\[
\lim_{\upperalpha \to \infty}\Exp \alpha^{2}/(M_{\alphab} \Exp |\alpha|) = 1/\sqrt{k},
\]

This shows the existence of a coefficient distribution satisfying~\eqref{eq:DefRangeMax} as soon as $406 k\mu < 1/\sqrt{k}$, that is to say $k < 1/(406\mu)^{2/3}$. 
In the maximally incoherent case, for large $p$, we have $\mu = 1/\sqrt{m} \asymp p^{-1/2}$, and conditions~\eqref{eq:DefCumCoherMaxMainTheorem}-\eqref{eq:DefKMax}-\eqref{eq:DefRangeMax} read $k = O(p^{1/3})$.

\item {\bf fixed amplitude profile coefficient model:} on the other hand, completely relax the independence assumption and consider essentially the coefficient model introduced by~\citet{Schnass:2013vd} where $\alphab_{j} = \epsilon_{j} \mathbf{a}_{\sigma(j)}$ with i.i.d. signs $\epsilon_{j}$ such that $\PP(\epsilon_{j}=\pm 1)=1/2$, a random permutation $\sigma$ of the index set $\J$, and $\mathbf{a}$ a given vector with entries $\mathbf{a}_{j} \geq \loweralpha, j \in \J$. 
This yields
\[
\Exp \alpha^{2}/(M_{\alphab} \Exp |\alpha|) = \tfrac{1}{k} \|\mathbf{a}\|_{2}^{2}/(\|\mathbf{a}\|_{2} \cdot \tfrac{1}{k}\|\mathbf{a}\|_{1}) = \|\mathbf{a}\|_{2}/\|\mathbf{a}\|_{1},
\]
which can be made arbitrarily close to one even with the constraint $\mathbf{a}_{j} \geq \loweralpha, j \in \J$. This shows the existence of a coefficient distribution satisfying~\eqref{eq:DefRangeMax} as soon as $406 k\mu < 1$, a much less restrictive condition leading to $k=O(p^{1/2})$. The reader may notice that such distributions concentrate most of the energy of $\alphab$ on just a few coordinates, so in a sense such vectors are much sparser than $k$-sparse.
\end{itemize}
\end{example}

\begin{example}[Spherical ensemble]\label{ex:3}
Consider $\Dbo \in \Real^{m \times p}$ a typical draw from the spherical ensemble, that is a dictionary obtained by normalizing a matrix with standard independent Gaussian entries. As discussed above, condition~\eqref{eq:DefRangeMax} imposes overall dimensionality constraints $p \lesssim m^{2}$. Moreover, using usual results for such dictionaries~\citep[see, e.g.,][]{Candes2009}, the condition in \eqref{eq:DefCumCoherMaxMainTheorem} is satisfied as soon as $\mu_k \leq k \mu_1 \approx k \sqrt{\log p} / \sqrt{m} = O(1)$, i.e., $k = O( \sqrt{m / \log p} )$, while the condition
in \eqref{eq:DefKMax} is satisfied as long as $k = O(m)$ (which is weaker).
\end{example}

\subsubsection{Noiseless case: exact recovery}
In the noiseless case ($M_{\varepsilonb} = 0$),~\eqref{eq:DefMaxRelativeNoiseLevelGivenReg} imposes no lower bound on admissible regularization parameter. Hence, we deduce from Theorem~\ref{thm:mainasymp} that a local minimum of $\Exp\ F_{\Signal}(\cdot)$ can be found arbitrarily close to $\Dbo$, provided that the regularization parameter $\lambda$ is small enough. 
This shows that the reference dictionary $\Dbo$ itself is in fact a local minimum of the problem considered by~\citet{Gribonval2010,Geng2011},  
\begin{equation}
\label{eq:DefExactL1}
\min_{\Db \in \Dcal} F_{\Xb}^{0}(\Db)\ \mbox{where}\ F_{\Xb}^{0}(\Db) \defin \min_{\Ab: \Db\Ab = \Xb} \|\Ab\|_{1}.
\end{equation}
Note that here we consider a different random sparse signal model, and yet recover the same results together with a new extension to the noisy case.

\subsubsection{Stability to noise}\label{sec:stabilitytonoise}
In the presence of noise, conditions~\eqref{eq:LambdaMax} and~\eqref{eq:DefMaxRelativeNoiseLevelGivenReg} respectively impose an upper and a lower limit on admissible regularization parameters, which are only compatible for small enough levels of noise
\[
M_{\varepsilonb} \lesssim \loweralpha (1-2\mu_{k}(\Dbo)).
\]
In scenarios where $C_{\min} \ll C_{\max}$ (i.e., when the left hand side in~\eqref{eq:DefRangeMax} is large enough compared to its right hand side), admissible regularization parameters are bounded from below given~\eqref{eq:DefMaxRelativeNoiseLevelGivenReg} as $\bar{\lambda} \gtrsim \frac{M_{\varepsilonb}}{M_{\alphab} C_{\max}}$, therefore limiting the achievable ``resolution'' $r$ to
\begin{equation}
\label{eq:AchievableResolutionNoisy}
\begin{split}
r > C_{\min} \bar{\lambda} \gtrsim & \frac{M_{\varepsilonb}}{M_{\alphab} } \cdot \frac{C_{\min}}{C_{\max}} \\
\asymp & \frac{M_{\varepsilonb}}{\sqrt{\Exp\ \alpha^{2}}} \cdot \kappa_{\alpha}  \cdot \triple \Dbo\triple_{2} \cdot   \frac{\frac{k}{p} \cdot  \|\DboT\Dbo-\Ib\|_{\fro}}{1-2\mu_{k}(\Dbo)}.
\end{split}
\end{equation}
Hence, with enough training signals and in the absence of outliers, the main resolution-limiting factors are 
\begin{itemize}
\item the relative noise level $M_{\varepsilonb}/\sqrt{\Exp\ \alpha^{2}}$: the smaller the better;
\item the level of typical ``flatness'' of $\alphab$ as measured by $\kappa_{\alpha}$: the peakier (the smaller $\kappa_{\alpha}$) the better;
\item the coherence of the dictionary as measured jointly by $\mu_{k}(\Dbo)$ and $\frac{k}{p} \cdot  \|\DboT\Dbo-\Ib\|_{\fro}$: the least coherent the better.
\end{itemize}

Two other resolution-limiting factors are the finite number of training samples $n$ and the presence of outliers, which we now discuss.

\subsection{Robust finite sample results}\label{sec:samplecomplexity}

We now trade off precision for concision and express finite sample results with two non-explicit constants~$C_{0}$ and $C_{1}$. Their explicit expression in terms of the dictionary and signal model parameters can be tracked back by the interested reader in the proof of Theorem~\ref{thm:mainfinite} (Section~\ref{sec:proofthmmainfinite}), but they are left aside for the sake of concision.

\begin{theorem}[Robust finite sample results, bounded model]\label{thm:mainfinite}
Consider a dictionary $\Dbo \in \mathcal{D}$ and a sparsity level $k$ satisfying the assumptions~\eqref{eq:DefCumCoherMaxMainTheorem}-\eqref{eq:DefKMax} of Theorem~\ref{thm:mainasymp}, and the Basic \& Bounded signal model (Assumptions~\ref{assume:basic} \&~\ref{assume:bounded}) with   parameters satisfying the assumption~\eqref{eq:DefRangeMax}. 
 There are two constants $C_{0}, C_1>0$ independent of all considered parameters with the following property.

Given a reduced regularization parameter $\bar{\lambda}$ and a noise level satisfying assumptions~\eqref{eq:LambdaMax} and~\eqref{eq:DefMaxRelativeNoiseLevelGivenReg},  a radius $r$ satisfying~\eqref{eq:RadiusRange} and~\eqref{eq:DefMaxRelativeNoiseLevelThm}, 
 and a confidence level $x>0$, when $n_{\textrm{in}}$ training samples are drawn according to the Basic \& Bounded signal model with
\begin{equation}\label{eq:mainsamplecomplexity}
\begin{split}
n_{\textrm{in}}>
C_{0} & \cdot 
\left( mp +x\right)  \cdot p^{2} \cdot \left(\tfrac{M_{\alphab}^{2}}{  \Exp  \| \alphab\|_2^{2}}\right)^{2} \\
& \cdot 
\left(
\tfrac{r+\left(\tfrac{M_{\varepsilonb}}{M_{\alphab}}+\bar{\lambda}\right)+\left(\tfrac{M_{\varepsilonb}}{M_{\alphab}}+\bar{\lambda}\right)^{2} }
{r-C_{\min} \cdot \bar{\lambda}}
\right)^{2},
\end{split}
\end{equation}
we have: with probability at least $1-2e^{-x}$,  the function $\Db \in \mathcal{D} \mapsto F_{\Signal}(\Db)$ admits a local minimum $\hat{\Db}$ such that $\|\Db-\Dbo\|_{\fro} < r$.
Moreover, this is robust to the addition of outliers $\Signal_{\textrm{out}}$ provided that
\begin{equation}\label{eq:mainoutliers}
\tfrac{\|\Signal_{\textrm{out}}\|_{\fro}^{2}}{n_{\textrm{in}} } \leq 
 \Exp  \| \alphab\|_2^{2} \cdot \left[
\tfrac{1}{4p} \cdot  \Big(1-\tfrac{C_{\min} \cdot \bar{\lambda}}{r} \Big)-
C_{1} \sqrt{\tfrac{(mp+x)}{n_{\textrm{in}}}}
\right] \cdot r^{2}.
\end{equation}
As soon as the dictionary is coherent, we have $C_{\min} \neq 0$, hence the constraint~\eqref{eq:RadiusRange} implies that the right hand side of~\eqref{eq:mainoutliers} scales as $O(r^{2}) = O(\lambda^{2})$. In the noiseless case, this imposes a tradeoff between the seeked resolution~$r$, the tolerable total energy of outliers, and the number of inliers. With a more refined argument, we obtain the alternative condition
\begin{equation}\label{eq:mainoutliersbis}
\begin{split}
\tfrac{\|\Signal_{\textrm{out}}\|_{1,2}}{n_{\textrm{in}}} \leq 
3  \tfrac{\sqrt{k} \Exp\ \|\alphab\|_{2}^{2} } { \Exp |\alpha|   } & \cdot
\left[
 \tfrac{1}{p} \cdot  \left(1-\tfrac{C_{\min} \cdot \bar{\lambda}}{r}\right) -
C_{1}   \sqrt{\tfrac{(mp+x)}{n_{\textrm{in}}}}
\right]\\
& \cdot  \tfrac{r}{\bar{\lambda}} \cdot 
\tfrac{(A^o)^{3/2}}{ 18 p^{3/2}},
\end{split}
\end{equation}
where $A^{o}$ is the lower frame bound of $\Dbo$, i.e., such that $A^{o} \|\signal\|_{2}^{2} \leq \|(\Dbo)^{\top}\signal\|_{2}^{2}$ for any signal $\signal$.
\end{theorem}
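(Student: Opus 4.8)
The plan is to deduce the finite‑sample statement from the asymptotic one (Theorem~\ref{thm:mainasymp}) by a uniform concentration argument on the sphere $\Scal(\Dbo;r)$. The argument underlying Theorem~\ref{thm:mainasymp} does more than produce a local minimum of $\Db\mapsto\Exp\,F_{\Signal}(\Db)$: combined with its building blocks (Propositions~\ref{prop:delta_phi}--\ref{prop:maindeltaphi}--\ref{prop:simplified_exact_recovery}), it yields a strictly positive lower bound on the expected variation, $\Exp\,\Delta F_{\Signal}(\Db;\Dbo)\ge M(r)>0$ for every $\Db\in\Scal(\Dbo;r)$, where the margin $M(r)$ is (up to the explicit constants tracked in the proof) proportional to $r-C_{\min}\bar{\lambda}$ and is therefore strictly positive exactly when~\eqref{eq:RadiusRange} holds. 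Since $\Db\mapsto F_{\Signal}(\Db)$ is continuous and $\Bcal(\Dbo;r)$ is compact, it suffices to show that, with probability at least $1-2e^{-x}$, the empirical variation stays positive on the whole sphere, $\inf_{\Db\in\Scal(\Dbo;r)}\Delta F_{\Signal}(\Db;\Dbo)>0$; a minimiser of $F_{\Signal}$ over the closed ball is then attained in its interior and is the sought $\hat{\Db}$.

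First I would reduce this event to a uniform deviation bound, writing $\Delta F_{\Signal}(\Db;\Dbo)\ge M(r)-\sup_{\Db\in\Scal(\Dbo;r)}\big|\Delta F_{\Signal}(\Db;\Dbo)-\Exp\,\Delta F_{\Signal}(\Db;\Dbo)\big|$. The supremum is the fluctuation of an average of $n_{\textrm{in}}$ i.i.d.\ terms $\Delta f_{\xb^i}(\Db;\Dbo)$, controlled in two steps: (i) a high‑probability bound via bounded differences / a Talagrand‑type inequality, reducing the supremum to its expectation plus a $\sqrt{(mp+x)/n_{\textrm{in}}}$ fluctuation term (the confidence level $x$ entering here, giving the $1-2e^{-x}$ probability); and (ii) a bound on the expected supremum by symmetrisation and Rademacher averages, estimating $\Exp_{\boldsymbol\epsilon}\sup_{\Db}\tfrac{1}{n_{\textrm{in}}}\sum_i\epsilon_i\,\Delta f_{\xb^i}(\Db;\Dbo)$ through a Gaussian comparison (Slepian's lemma) against a process whose supremum over the $mp$‑dimensional oblique manifold is explicit, which is what produces the $mp$ factor and, crucially, the \emph{resolution‑independent} scaling in the noiseless case. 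Here exact recovery (Proposition~\ref{prop:simplified_exact_recovery}) is what makes $\Delta f_{\xb^i}$ tractable and Lipschitz in $\Db$ on $\Scal(\Dbo;r)$, so that the increments of the process can be controlled. Requiring the fluctuation term to be smaller than $M(r)$ and solving for $n_{\textrm{in}}$ yields exactly~\eqref{eq:mainsamplecomplexity}.

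For the outliers I would split $n\,F_{\Signal}=\sum_{i\in\text{in}}f_{\xb^i}+\sum_{i\in\text{out}}f_{\xb^i}$, so that the outliers add $\tfrac{1}{n}\sum_{i\in\text{out}}\Delta f_{\xb^i}(\Db;\Dbo)$ to the empirical variation. Because $\alphab=0$ is always feasible in~\eqref{eq:fi}, $f_{\xb}$ is bounded and admits a variation estimate in $\Db$ valid for an \emph{arbitrary} signal $\xb$; a quadratic bound $|\Delta f_{\xb}(\Db;\Dbo)|\lesssim\|\xb\|_2^2\cdot(\cdots)$ summed over outliers gives a penalty of order $\|\Signal_{\textrm{out}}\|_{\fro}^2/n_{\textrm{in}}$, and requiring this to absorb only a controlled fraction of the margin so that positivity survives yields~\eqref{eq:mainoutliers}. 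The refined condition~\eqref{eq:mainoutliersbis} comes from replacing the quadratic estimate by a sharper one that is \emph{linear} in $\|\xb\|_2$ and proportional to $r$; this is where completeness of $\Dbo$, through its lower frame bound $A^{o}$, is used to convert a bound in the $\Db$‑displacement into one controlled by $\|\Signal_{\textrm{out}}\|_{1,2}$, producing the stated $r/\bar{\lambda}$ and $(A^{o})^{3/2}/p^{3/2}$ factors.

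The main obstacle is step~(ii): obtaining the uniform Rademacher/Slepian bound with the sharp $\sqrt{(mp+x)/n_{\textrm{in}}}$ dependence and its resolution‑independence in the noiseless regime. The difficulty is that $\Delta f_{\xb}(\Db;\Dbo)$ is the increment of a value function defined by an inner Lasso minimisation, so one must first establish that exact support recovery holds \emph{uniformly} over $\Db\in\Scal(\Dbo;r)$ (via the local transfer of the coherence bound from $\mu_{k}(\Dbo)$ to $\mu_{k}(\Db)$), then linearise $\Delta f_{\xb}$ to expose a process whose increments match those of a simple Gaussian field so that Slepian's comparison applies with the correct $mp$ dimension count rather than a cruder union bound. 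Getting the numerator of~\eqref{eq:mainsamplecomplexity}, namely $r+\big(\tfrac{M_{\varepsilonb}}{M_{\alphab}}+\bar{\lambda}\big)+\big(\tfrac{M_{\varepsilonb}}{M_{\alphab}}+\bar{\lambda}\big)^{2}$, to come out with the right powers, and checking that inlier and outlier fluctuations can be bounded simultaneously on one high‑probability event, is the delicate bookkeeping the proof must carry through.
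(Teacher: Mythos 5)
Your overall architecture coincides with the paper's: a uniform lower bound for the empirical variation on the sphere $\Scal(\Dbo;r)$ plus compactness of $\Bcal(\Dbo;r)$ to extract the local minimum; Proposition~\ref{prop:simplified_exact_recovery} to replace $\Delta f_{\signal}$ by the closed form $\Delta\phi_{\signal}(\cdot;\Dbo|\sbo)$ uniformly on the sphere; symmetrisation, Rademacher-to-Gaussian comparison, Slepian and McDiarmid for the deviations; and both outlier arguments are exactly the paper's (the crude bound $0\leq f_{\signal}(\Db)\leq\tfrac12\|\signal\|_{2}^{2}$ for~\eqref{eq:mainoutliers}; the Lipschitz-in-$\Db$ estimate combined with the lower frame bound and the optimality condition $\|\Db^{\top}(\signal-\Db\alphab)\|_{\infty}=\lambda$, supplying the extra factors $r$ and $\lambda$, for~\eqref{eq:mainoutliersbis}).

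There is, however, a genuine gap at precisely the step you flag as the main obstacle, and ``linearising $\Delta f_{\signal}$'' would not close it. After exact recovery, the only term of $\Delta\phi_{\signal}(\Db;\Dbo|\sbo)$ that survives when $M_{\varepsilonb}=0$ and $\lambda\to0$ is $\Delta\phi_{\alphab,\alphab}(\Db;\Dbo)=\tfrac12\alphaboT\DboT(\Ib-\PJb)\Dbo\alphabo$. Treated as a Lipschitz function of $\Db$ (via $\triple\PJb'-\PJb\triple_{2}\lesssim\|\Db'-\Db\|_{\fro}$), its Lipschitz constant is of order $M_{\alphab}^{2}$, \emph{not} of order $r$; the Slepian bound then yields uniform deviations of order $M_{\alphab}^{2}\,r\sqrt{(mp+x)/n_{\textrm{in}}}$. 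Since the true margin is $\tfrac{\Exp\,\alpha^{2}}{8}\cdot\tfrac kp\cdot r\,(r-C_{\min}\bar\lambda)$ --- quadratic in $r$, not linear as your sketch asserts --- this forces $n_{\textrm{in}}\gtrsim(mp+x)\,p^{2}\big(M_{\alphab}^{2}/\Exp\|\alphab\|_{2}^{2}\big)^{2}(r-C_{\min}\bar\lambda)^{-2}$, a resolution-\emph{dependent} sample complexity that contradicts the numerator $r+\big(\tfrac{M_{\varepsilonb}}{M_{\alphab}}+\bar\lambda\big)+\big(\tfrac{M_{\varepsilonb}}{M_{\alphab}}+\bar\lambda\big)^{2}$ in~\eqref{eq:mainsamplecomplexity}. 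The missing idea is the paper's decomposition $\Delta\phi_{\signal}(\Db;\Dbo|\sbo)=h(\Db)+\Delta\phi_{\alphab,\alphab}(\Db;\Dbo)$: all terms involving $\varepsilonb$ or $\lambda$ are collected in $h$, whose Lipschitz constant is $O\big((M_{\varepsilonb}+\lambda\sqrt k)M_{\alphab}\big)$ and hence $O(r)$ in the noiseless regime $\lambda\asymp r$, while the remaining term is rewritten, using the identity $(\Ib-\PJb)\Db\alphabo=0$, as the exact quadratic form $\tfrac12\,\mathrm{vec}(\Db-\Dbo)^{\top}\big\{\alphabo\alphaboT\otimes(\Ib-\PJb)\big\}\,\mathrm{vec}(\Db-\Dbo)$, so that its uniform deviations are $r^{2}$ times the operator-norm deviations of the matrix-valued process $\alphabo\alphaboT\otimes(\Ib-\PJb)$ (Lemma~\ref{lemma:concentrationmat}). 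Both contributions are then $O(r^{2})\sqrt{(mp+x)/n_{\textrm{in}}}$, matching the $\Omega(r^{2})$ margin and yielding resolution-independence; no first-order expansion of the value function reveals this, since what does the work is that both the value and the gradient of $\Delta\phi_{\alphab,\alphab}$ vanish at $\Db=\Dbo$ (like $r^{2}$ and $r$ respectively). With this decomposition inserted, the rest of your bookkeeping goes through as planned.
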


The factor $M_{\alphab}^{2}/\Exp\ \|\alphab\|_{2}^{2} = {``}\sup \| \alphab \|_2^2 {''} / \Exp\ \|\alphab\|_{2}^{2} $ in the right hand side of~\eqref{eq:mainsamplecomplexity} is always greater than 1, but typically remains bounded (note that if the distribution of $\alphab$ allows outliers, they could be treated within the outlier model). In the symmetric decaying model of \citet{Schnass:2013vd} where $\alphab$ is a randomly permuted and signed flipped version of a given vector, this factor is equal to one.

Even though the robustness to outliers is expressed in~\eqref{eq:mainoutliers} as a control of $\|\Signal_{\textrm{out}}\|_{\fro}^{2}/n_{\textrm{in}}$, it should really be considered as a control of an {\em outlier to inlier energy ratio}: $\|\Signal_{\textrm{out}}\|_{\fro}^{2}/[n_{\textrm{in}} \Exp\ \|\alphab\|_{2}^{2}]$, and similarly with a proper adaptation in~\eqref{eq:mainoutliersbis}. One may notice that the robustness to outliers expressed in Theorem~\ref{thm:mainfinite} is somehow a ``free'' side-effect of the conditions that hold on inliers with high probability, rather than the result of a specific design of the  cost function $F_{\Signal}(\Db)$.

\subsubsection{Example: orthonormal dictionary}
Consider $p=m$ and $\Dbo$ an orthonormal dictionary in $\Real^{m \times p}$. Since $\mu_{k}(\Dbo)=0$, $\triple\Dbo\triple_{2} = 1$ and $\|\DboT\Dbo-\Ib\|_{\fro}=0$, 
assumption~\eqref{eq:DefKMax} reads\footnote{Improved constants in Theorem~\ref{thm:mainasymp} are achievable when specializing to orthonormal dictionaries, they are left to the reader.} $k \leq p/64$, 
assumptions~\eqref{eq:DefCumCoherMaxMainTheorem} and~\eqref{eq:DefRangeMax} impose no constraint, and $C_{\min} = 0$. 
Moreover, the reader can check that if $M_{\varepsilonb} < \lambda \leq \loweralpha/4$, then ~\eqref{eq:LambdaMax}-\eqref{eq:DefMaxRelativeNoiseLevelThm} hold for $0<r<\tfrac{2(\lambda-M_{\varepsilonb})}{7M_{\alphab}}$. 
\begin{itemize}
\item {\bf Low-noise regime:} if  $M_{\varepsilonb} <  \loweralpha/4$ and $k \leq p/64$, then choosing $M_{\varepsilonb} <  \lambda \leq \loweralpha/4$ yields: 
\begin{itemize}
\item by Theorem~\ref{thm:mainasymp} (the limit of large $n$), $\EE F_{\Xb}(\Db)$ admits a local minimum \emph{exactly at} $\Dbo$;
\item by Theorem~\ref{thm:mainfinite}, \emph{even though the regularization parameter cannot be made arbitrarily small}, we obtain that for any confidence level $x>0$ and \emph{arbitrary small precision} $r>0$, $F_{\Xb}(\Db)$ admits a local minimum within radius $r$ around $\Dbo$ with probability at least $1-2e^{-x}$ provided that
\[
n  = \Omega \Big((mp^{3}  + xp^{2}) \left(\tfrac{M_{\varepsilonb}/M_{\alphab}}{r}\right)^{2}\Big).
\]
While the orthogonality of the dictionary remarkably allows to achieve an arbitrary precision despite the presence of noise, we still have to pay a price for the presence of noise through a resolution-dependent sample complexity.
\end{itemize}
\item {\bf Noiseless regime ($M_{\varepsilonb}=0$):} with $\lambda \asymp r$, an arbitrary resolution $r$ is reached with a {\em resolution independent} number of training samples 
\[
n = \Omega(mp^{3}+xp^{2}).
\]
This is robust to outliers provided $\|\Signal_{\textrm{out}}\|_{1,2}/n_{\textrm{in}}$ does not exceed a {\em resolution independent} threshold.
\end{itemize}
The case of orthonormal dictionaries is somewhat special in the sense that orthonormality yields $C_{\min} = 0$ and breaks the forced scaling $r \asymp \bar{\lambda}$ otherwise imposed by~\eqref{eq:RadiusRange}. Below we discuss in more details the more generic case of non-orthonormal dictionaries in the noiseless case.

\subsubsection{Noiseless case: exact recovery and resolution independent sample complexity}
Consider now the noiseless case ($M_{\varepsilonb}=0$) without outlier ($ \Signal_{\textrm{out}}=0 $). In general we have $C_{\min} > 0$ hence the best resolution $r>0$ guaranteed by Theorem~\ref{thm:mainasymp} in the asymptotic regime is $r = r_{\min} \defin C_{\min} \cdot \bar{\lambda} > 0$. When $C_{\max} > 2 C_{\min}$, Theorem~\ref{thm:mainfinite} establishes that the only slightly worse resolution $r=2r_{\min}$ can be achieved with high probability with a number of training samples $n$ which is {\em resolution independent}. More precisely~\eqref{eq:mainsamplecomplexity} indicates that when $M_{\alphab}^{2}/\Exp\ \|\alphab\|_{2}^{2} \approx 1$, 
it is sufficient to have a number of training samples 
\[
n = \Omega(mp^{3})
\]
to ensure the existence of a local minimum within a radius $r$ around the ground truth dictionary $\Dbo$, where {\em the resolution $r$ can be made arbitrarily fine by choosing $\lambda$ small enough}. This recovers the known fact that, with high probability, the function $F_{\Xb}^{0}(\Db)$ defined in~\eqref{eq:DefExactL1} has a local minimum {\em exactly} at $\Dbo$, as soon as $n = \Omega(mp^{3})$.
Given our boundedness assumption, the probabilistic decay as $e^{-x}$ is expected and show that as soon as  $ n \geq \Omega(mp^{3})$, the infinite sample result is reached quickly.

In terms of outliers, both \eqref{eq:mainoutliers} and \eqref{eq:mainoutliersbis} provide a control of the admissible ``energy'' of outliers. Without additional assumption on $\Dbo$, the allowed energy of outliers in \eqref{eq:mainoutliers} has a leading term in $r^2$, i.e., to guarantee a high precision, we can only tolerate a small amount of outliers as measured by the ratio $\|\Signal_{\textrm{out}}\|_{\fro}^{2}/n_{\textrm{in}} $. However, when the dictionary $\Dbo$ is complete --a rather mild assumption-- the alternate ratio  $\|\Signal_{\textrm{out}}\|_{1,2}/n_{\textrm{in}}$ does not need to scale with the targeted resolution $r$ for $r = 2C_{\min} \bar{\lambda}$. In the proof, this corresponds to replacing the control of the minimized objective function by that of its variations.

The above described resolution-independent results are of course specific to the noiseless setting. In fact, as described in Section~\ref{sec:stabilitytonoise}, the presence of noise when the dictionary is not orthonormal imposes an absolute limit to the resolution $r > r_{\min}$ we can guarantee with the techniques established in this paper. When there is noise, \cite{Arora:2013vq} discuss why it is in fact impossible to get a sample complexity with better than $1/r^2$ dependency.

\section{Main steps of the analysis}\label{sec:proofoutline}

For many classical penalty functions $g$, including the considered $\ell^{1}$ penalty $g(\alphab) = \lambda \|\alphab\|_{1}$, the function $\Db \mapsto F_{\Xb}(\Db)$ is continuous, and in fact Lipschitz \citep{Gribonval:2013tx} with respect to the Frobenius metric $\rho(\Db',\Db) \defin \|\Db'-\Db\|_{\fro}$ on all $\Real^{m \times p}$, hence in particular on the compact constraint set $\mathcal{D} \subset \Real^{m \times p}$. Given a dictionary $\Db \in \mathcal{D}$, we have $\|\Db\|_{\fro}=\sqrt{p}$, and for any radius $0<r\leq 2\sqrt{p}$ the sphere  
\[
\Scal(r) \defin \Scal(\Dbo;r) = \{ \Db \in \mathcal{D} : \|\Db-\Dbo\|_{\fro} = r\}
\]
is non-empty (for $r = 2\sqrt{p}$ it is reduced to $\Db = -\Dbo$). We define
\begin{eqnarray}
\label{eq:DefDeltaFn}
\Delta F_{\Xb}(r) 
&\defin& 
\inf_{\Db \in \Scal(r)} \Delta F_{\Xb}(\Db;\Dbo). 
\end{eqnarray}
where we recall that for any function $h(\Db)$ we define $\Delta h(\Db;\Db') \defin h(\Db)-h(\Db')$. 
Our proof technique will consist in choosing the radius $r$ to ensure that $\Delta F_{\Xb}(r)>0$ (with high probability on the draw of $\Signal$): the compactness of the closed balls 
\begin{equation}
\Bcal(r) \defin \Bcal(\Dbo;r) =  \left\{\Db \in \mathcal{D}: \|\Db-\Dbo\|_{\fro} \leq r \right\}
\end{equation}
will then imply the existence of a local minimum $\hat{\Db}$ of $\Db \mapsto F_{\Xb}(\Db)$ such that $\|\hat{\Db}-\Dbo\|_{\fro} < r$.

\subsection{The need for a precise finite-sample ({\em vs.}~asymptotic) analysis}\label{sec:sample_complexity}
Under common assumptions on the penalty function $g$ and the distribution of ``clean'' training vectors $\xb \sim \mathbb{P}$, the empirical cost function $F_{\Xb}(\Db)$ converges uniformly to its expectation $\mathbb{E}_{\xb \sim \mathbb{P}} f_{\xb}(\Db)$: except with probability at most $2e^{-x}$ \citep{Maurer2010,Vainsencher2010,Gribonval:2013tx}, we have
\begin{equation}
\label{eq:UniformConvergence2}
	\sup_{\Db \in \mathcal{D}} \left| F_{\Xb}(\Db) -  \mathbb{E}_{\xb \sim \mathbb{P}} f_{\xb}(\Db) \right| \leq \eta_{n}.
\end{equation}
where $\eta_{n}$ depends on the penalty $g$, the data distribution $\mathbb{P}$, the set $\Scal(r)$ (via its covering number) and the targeted probability level $1-2e^{-x}$. Thus, with high probability,
\[
\Delta F_{\Xb}(r) \geq \Delta f_{\mathbb{P}}(r) - 2\eta_{n}
\]
with
\begin{eqnarray}
\label{eq:DefDeltaFnExp}
\Delta f_{\mathbb{P}}(r)
& \defin & \inf_{\Db \in \Scal(r)} \Delta f_{\mathbb{P}}(\Db;\Dbo)\\
\textrm{where}\  f_{\mathbb{P}}(\Db) 
& \defin & \mathbb{E}_{\xb \sim \mathbb{P}} f_{\xb}(\Db).
\end{eqnarray}
As a result, showing that $\Delta f_{\mathbb{P}}(r)>0$ will imply that, with high probability, the function $\Db \mapsto F_{\Xb}(\Db)$ admits a local minimum $\hat{\Db}$ such that $\|\hat{\Db}-\Dbo\|_{\fro} < r$, provided that the number of training samples $n$ satisfies $\eta_{n} < \Delta f_{\mathbb{P}}(r)/2$. 
For the $\ell^{1}$ penalty $g(\alphab) = \lambda \|\alphab\|_{1}$, the generative model considered in Section~\ref{sec:gen_model}, and the oblique manifold $\mathcal{D}$, a direct application of the results of \citep{Gribonval:2013tx} yields $\eta_{n} \leq c \sqrt{\tfrac{(mp+x) \cdot \log n}{n}}$ for some explicit constant $c$. The desired result follows when the number of training samples satisfies
\[
\frac{n}{\log n} \geq (mp+x) \cdot \frac{4 c^{2}}{\left[\Delta f_{\mathbb{P}}(r)\right]^{2}}.
\]
This is slightly too weak in our context where the interesting regime is when $\Delta f_{\mathbb{P}}(r)$ is non-negative but small. Typically, in the noiseless regime, we target an arbitrary small radius $r>0$ through a penalty factor $\lambda \asymp r$ and get $\Delta f_{\mathbb{P}}(r) = O(r^{2})$.
Since $c$ is a fixed constant, the above direct sample complexity estimates apparently suggests $n/\log n = \Omega(mp r^{-2})$, a number of training sample that grow arbitrarily large when the targeted resolution $r$ is arbitrarily small. Even though this is the behavior displayed in recent related work \citep{Schnass:2013vd,Arora:2013vq,Schnass:2014vk}, this is not fully satisfactory, and we get more satisfactory {\em resolution independent} sample complexity estimates $n = \Omega(mp)$ through more refined Rademacher averages and Slepian's lemma in Section~\ref{sec:proofthmmainfinite}. Incidentally we also gain a $\log n$ factor.

\subsection{Robustness to outliers}\label{sec:outliers}
Training collections are sometimes contaminated by {\em outliers}, i.e., training samples somehow irrelevant to the considered training task in the sense that they do not share the ``dominant'' properties of the training set. Considering a collection $\Xb$ of $n_{\textrm{in}}$ {\em inliers} and $n_{\textrm{out}}$ outliers, and $\Xb_{\textrm{in}}$ (resp. $\Xb_{\textrm{out}}$) the matrix extracted from~$\Xb$ by keeping only its columns associated to inliers (resp. outliers), we have
\[
(n_{\textrm{in}}+n_{\textrm{out}}) \cdot \Delta F_{\Xb}(r) \geq n_{\textrm{in}} \cdot  \Delta F_{\Xb_{\textrm{in}}}(r) + n_{\textrm{out}}\cdot \Delta F_{\Xb_{\textrm{out}}}(r).
\]
As a result, the robustness of the learning process with respect to the contamination of a ``clean'' training set~$\Xb_{\textrm{in}}$ with outliers will follow from two quantitative bounds: a lower bound $\Delta F_{\Xb_{\textrm{in}}}(r)>0$ for the contribution of inliers, together with an upper bound on the perturbating effects $n_{\textrm{out}}\cdot |\Delta F_{\Xb_{\textrm{out}}}(r)|$ of outliers.

For classical penalty functions $g$ with $g(\mathbf{0}) = 0$, such as sparsity-inducing norms, one easily checks that for any $\Db$ we have
$0 \leq n_{\textrm{out}} \cdot F_{\Xb_{\textrm{out}}}(\Db) \leq \tfrac{1}{2} \|\Xb_{\textrm{out}}\|_{\fro}^{2}$ \citep[see, e.g.,][]{Gribonval:2013tx} hence the upper bound
\begin{equation}
\label{eq:NaiveOutlierBound}
n_{\textrm{out}} \cdot |\Delta F_{\Xb_{\textrm{out}}}(r)| \leq \tfrac{1}{2}  \|\Xb_{\textrm{out}}\|_{\fro}^{2}.
\end{equation}
This implies the robustness to outliers provided that:
\[
\|\Xb_{\textrm{out}}\|_{\fro}^{2} < 2 n_{\textrm{in}} \cdot \Delta F_{\Xb_{\textrm{in}}}(r).
\]
In our context, in the interesting regime we have (with high probability) $\Delta F_{\Xb_{\textrm{in}}}(r) = O(r^{2})$ with $r$ arbitrarily small and $\lambda \asymp r$. Hence, the above analysis suggests that $\|\Xb_{\textrm{out}}\|_{\fro}^{2}/n_{\textrm{in}}$ should scale as $O(r^{2})$: the more ``precision'' we require (the smaller $r$), the least robust with respect to outliers. 

In fact, the considered learning approach is much more robust to outliers that it would seem at first sight: in Section~\ref{sec:robustnesstooutliers}, we establish an improved bound on $n_{\textrm{out}}\cdot |\Delta F_{\Xb_{\textrm{out}}}(r)|$: under the assumption that $\Dbo$ is complete (i.e., $\Dbo$ is a frame with lower frame bound $A^{o}$), we obtain when $\lambda \asymp r$
\begin{equation}
\label{eq:RefinedOutlierBound}
n_{\textrm{out}} \cdot |\Delta F_{\Xb_{\textrm{out}}}(r)| 
 \leq \frac{18 p^{3/2}}{\sqrt{k}}  \|\Xb_{\textrm{out}}\|_{1,2}   \Big( \Exp |\alpha|  \frac{ r \bar{\lambda}}{ (A^{o})^{3/2}}
 \Big),
\end{equation}
where $\|\Xb_{\textrm{out}}\|_{1,2} \defin \sum_{i \in \textrm{out}} \|\signal^{i}\|_{2}$. The upper bound on $n_{\textrm{out}}\cdot |\Delta F_{\Xb_{\textrm{out}}}(r)|$ now scales as $O(r^2)$ when $\lambda \asymp r$, and we have robustness to outliers provided that 
\[
\|\Xb_{\textrm{out}}\|_{1,2} <  n_{\textrm{in}} \cdot  \frac{ \Delta F_{\Xb_{\textrm{in}}}(r) }{r^{2}} \cdot \frac{r}{ \bar{\lambda}}
\cdot \big[
\frac{\sqrt{k}}{18 p^{3/2}}    \frac{(A^o)^{3/2}} { \Exp |\alpha|   } 
\big].
\]
This is now resolution-independent in the regime $\bar{\lambda} \asymp r$.
\subsection{Closed-form expression}
As the reader may have guessed, lower-bounding $\Delta f_{\mathbb{P}}(r)$ is the key technical objective of this paper. 
One of the main difficulties arises from the fact that $f_{\signal}(\Db)$ is only implicitly defined through the minimization of $\Lcal_{\signal}(\Db,\alphab)$ with respect to the coefficients $\alphab$. 

{\em From now on we concentrate on the $\ell^{1}$ penalty, $g(\alphab) = \lambda \|\alphab\|_{1}$.}
 We leverage here a key property of the function $f_\xb$.
Denote by $\alphab^{\star} = \alphab_{\signal}^{\star}(\Db) \in\Real^p$ a solution of problem~(\ref{eq:fi}), that is, the minimization defining $f_\xb$. 
By the convexity of the problem, there always exists such a solution such that, denoting $\J \defin\{ j\in\SET{p};\, \alphab^{\star}_j\neq 0\}$ its support, 
the dictionary $\DbJ \in \Real^{m\times |\J|}$ restricted to the atoms indexed by $\J$ has linearly independent columns 
(hence $\DbJT\DbJ$ is invertible)~\citep{Fuchs2005}. Denoting $\sb^{\star} = \sb_{\signal}^{\star}(\Db) \in \{-1,0,1\}^p$ the sign of $\alphab^{\star}$ and $\J$ its support, $\alphab^{\star}$ has a closed-form expression in terms of $\DbJ$, $\xb$ and $\sb^{\star}$~\citep[see, e.g.,][]{Wainwright2009,Fuchs2005}. This property is appealing in that it makes it possible to obtain a closed-form expression for $f_\xb$,  {\em provided that we can control the sign pattern of $\alphab^{\star}$}.
In light of this remark, it is natural to define:
\begin{definition}\label{def:phi}
Let $\sb \in \{-1,0,1\}^p$ be an arbitrary sign vector and $\J = \J(\sb)$ be its support. For $\xb \in \Real^m$ and $\Db \in \Real^{m\times p}$,
we define 
\begin{equation}
\label{eq:defphi}
\phi_\xb(\Db|\sb) \defin   \inf_{\alphab\in\Real^p, \ \support(\alphab)\subset\J }\tfrac{1}{2}\|\xb - \Db\alphab\|_2^2+\lambda\sb^\top\alphab.
\end{equation}
Whenever $\DbJT \DbJ$ is invertible,  the minimum is achieved at $\hat{\alphab} = \hat{\alphab}_{\signal}(\Db|\sb)$ defined by
\begin{equation}
\label{eq:ClosedFormMinimizer}
\hat{\alphab}_\J = \DbJ^{+} \xb -\lambda \big(\DbJT\DbJ \big)^{-1}  \sb_\J \in \Real^{\J}\quad
\text{and}\quad 
\hat{\alphab}_{\J^c} = \zerob,
\end{equation}
and we have
\begin{equation}\label{eq:phi}
\phi_\xb(\Db|\sb) =  
\frac{1}{2} \big[ \|\xb\|_2^2 - (\DbJT \xb - \lambda \sb_\J)^\top 
(\DbJT \DbJ)^{-1} 
(\DbJT \xb -\lambda \sb_\J ) \big].
\end{equation}
Moreover, if $\sign(\hat{\alphab}) = \sb$, then
\begin{equation}
\begin{split}
\phi_\xb(\Db|\sb) = &  \min_{\alphab\in\Real^p, \ \sign(\alphab)=\sb }\tfrac{1}{2}\|\xb - \Db\alphab\|_2^2+\lambda\sb^\top\alphab 
\\ = &  \min_{\alphab\in\Real^p,\ \sign(\alphab)=\sb } \Lcal_{\xb}(\Db,\alphab) = \Lcal_{\xb}(\Db,\hat{\alphab}).
\end{split}
\end{equation}
\end{definition}
Hence, with $\sb^{\star}$ the sign of a minimizer $\alphab^{\star}$, we have 
 $f_\xb(\Db)  = \phi_\xb(\Db|\sb^{\star})$.
While $\alphab^{\star}$ is unknown, in light of the generative model $\signal = \Dbo \alphabo + \varepsilonb$ for inliers (see Section~\ref{sec:gen_model}), a natural guess for $\sb^\star$ is $\sbo = \sign(\alphabo)$. 

\subsection{Closed form expectation and its lower bound}
Under decorrelation assumptions, one can compute
\begin{equation}
\Delta \phi_\PP(\Db;\Dbo|\sbo) \defin \Exp\ \Delta \phi_{\signal}(\Db;\Dbo|\sbo).
\end{equation}
We use the shorthands $\GramJbo = \GramJb(\Dbo)$, $\ThetaJbo \defin \ThetaJb(\Dbo)$, and $\PJbo \defin \PJb(\Dbo)$.

\begin{proposition}\label{prop:delta_phi}
Assume that both $\LRIP_{k}(\Dbo) <1$ and $\LRIP_{k}(\Db)<1$ so that $\DbJ$ and $\DboJ$ have linearly independent columns for any $\J$ of size $k$. Under Assumption~\ref{assume:basic} we have
 \begin{eqnarray}
\Delta\phi_\PP(\Db;\Dbo|\sbo) 
&=&
\tfrac{\Exp\{\alpha^{2}\}}{2} \cdot\ \Exp_{J} \trace \DboJT  ( \Ib - \PJb ) \DboJ\notag\\
&&- 
\lambda \cdot \Exp\{|\alpha|\} \cdot\ \Exp_{J} \trace \big( \DboJpinv-\DbJ^+ \big) \DboJ \notag\\
&&+
\tfrac{\lambda^{2}}{2}  \cdot\  
\Exp_{J} \trace\left(\ThetaJbo-\ThetaJb\right).
\end{eqnarray}
\end{proposition}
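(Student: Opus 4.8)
The plan is to work directly from the closed-form expression \eqref{eq:phi}, which applies here because the hypotheses $\LRIP_{k}(\Db)<1$ and $\LRIP_{k}(\Dbo)<1$ guarantee that $\DbJ$ and $\DboJ$ have full column rank for every support $\J$ of size $k$, so that $\GramJb$ and $\GramJbo$ are invertible. Taking $\sb=\sbo$ (whose support is $\J$) in \eqref{eq:phi} and forming the difference, the common term $\tfrac12\|\xb\|_2^2$ cancels, leaving
\[
\Delta\phi_\xb(\Db;\Dbo|\sbo) = \tfrac12\bigl[\ub_0^\top\ThetaJbo\ub_0 - \ub^\top\ThetaJb\ub\bigr],
\]
where I set $\ub_0 \defin \DboJT\xb - \lambda\sboJ$ and $\ub \defin \DbJT\xb - \lambda\sboJ$. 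It then remains to take the expectation of each quadratic form, first conditionally on $\J$ and then averaging over $\J$.

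Next I would substitute the inlier model $\xb = \DboJ\alphaboJ + \varepsilonb$, so that both vectors become affine in the random triple $(\alphaboJ,\varepsilonb,\sboJ)$, namely $\ub_0 = \GramJbo\alphaboJ + \DboJT\varepsilonb - \lambda\sboJ$ and $\ub = \DbJT\DboJ\alphaboJ + \DbJT\varepsilonb - \lambda\sboJ$. The core is then a single generic identity: for any matrix $\Mb$ depending only on $\J$ and any affine form $\Ab\alphaboJ + \Bb\varepsilonb - \lambda\sboJ$, expanding $\Exp[(\cdot)^\top\Mb(\cdot)\mid\J]$ produces six terms, and the five lines of Assumption~\ref{assume:basic} collapse them to
\[
\Exp\{\alpha^{2}\}\,\trace(\Ab^\top\Mb\Ab) + \Exp\{\epsilon^{2}\}\,\trace(\Bb^\top\Mb\Bb) + \lambda^{2}\,\trace\Mb - 2\lambda\Exp\{|\alpha|\}\,\trace(\Ab^\top\Mb).
\]
Here the two cross terms carrying $\varepsilonb$ vanish by noise/coefficient decorrelation, the $\alphaboJ$--$\sboJ$ cross term contributes the $\Exp\{|\alpha|\}$ factor via sign/coefficient decorrelation, and coefficient, sign and noise whiteness diagonalize the three pure quadratic terms.

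I would then specialize this identity twice. For $\ub_0$ with $\Mb=\ThetaJbo$, the relations $\ThetaJbo\GramJbo=\Ib$, $\trace\PJbo = k$ and $\trace\GramJbo = k$ (unit-norm atoms) give the traces $k,\ k,\ \lambda^{2}\trace\ThetaJbo,\ -2\lambda\Exp\{|\alpha|\}k$. For $\ub$ with $\Mb=\ThetaJb$, using $\DbJ\ThetaJb\DbJT=\PJb$ and $\DbJ\ThetaJb=(\DbJ^{+})^\top$ gives $\trace(\DboJT\PJb\DboJ),\ k,\ \lambda^{2}\trace\ThetaJb,\ -2\lambda\Exp\{|\alpha|\}\trace(\DbJ^{+}\DboJ)$. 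Subtracting, the $\Exp\{\epsilon^{2}\}$ contributions cancel (both equal $\Exp\{\epsilon^{2}\}k$), the $\Exp\{\alpha^{2}\}$ terms combine into $\trace\DboJT(\Ib-\PJb)\DboJ$, the $\lambda^{2}$ terms into $\trace(\ThetaJbo-\ThetaJb)$, and rewriting $k=\trace(\DboJpinv\DboJ)$ turns the $\Exp\{|\alpha|\}$ terms into $\trace(\DboJpinv-\DbJ^{+})\DboJ$. Applying $\Exp_{J}$ and the overall factor $\tfrac12$ yields the claimed formula.

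There is no genuine obstacle beyond careful bookkeeping: matching each of the six expanded terms with the correct line of Assumption~\ref{assume:basic}, and handling the pseudo-inverse/projector identities $\DbJ^{+}=\ThetaJb\DbJT$ and $\PJb=\DbJ\DbJ^{+}$. The one point I would verify explicitly is the clean disappearance of the noise energy $\Exp\{\epsilon^{2}\}$, which is the mildly surprising feature of the statement and hinges precisely on $\PJb$ and $\PJbo$ both being rank-$k$ orthogonal projectors, so that $\trace(\Bb^\top\Mb\Bb)=k$ in both cases.
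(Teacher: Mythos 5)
Your proof is correct and follows essentially the same route as the paper's: both start from the closed-form expression \eqref{eq:phi}, substitute the generative model $\xb = \DboJ\alphaboJ + \varepsilonb$, expand the resulting quadratic forms, and let the five moment conditions of Assumption~\ref{assume:basic} kill the cross terms, with the noise energy $\Exp\{\epsilon^{2}\}$ cancelling exactly because $\PJb$ and $\PJbo$ are both rank-$k$ orthogonal projectors. The only difference is organizational: the paper names six bilinear difference terms ($\Delta\phi_{\alphab,\alphab}$ through $\Delta\phi_{\sb,\sb}$) and takes their expectations one by one, whereas you package the identical computation as a single generic quadratic-form expectation identity applied twice (once per dictionary) and then subtracted.
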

The proof is in Appendix~\ref{sec:expectationphi}. 
In light of this result we switch to the reduced regularization parameter 
 $\bar{\lambda} \defin \frac{\lambda}{\Exp\ |\alpha|}$.
Our main bound leverages Proposition~\ref{prop:delta_phi} and Lemma~\ref{lem:bias_expectation} (Appendix~\ref{app:bounds}). 
\begin{proposition}\label{prop:maindeltaphi}
Consider a dictionary $\Dbo \in \Real^{m \times p}$ such that
\begin{eqnarray}
\label{eq:DefLRIPMax}
\LRIP_{k}(\Dbo) & \leq & \frac{1}{4}\\
\label{eq:DefOpNormMax}
k & \leq & \frac{p}{16(\triple \Dbo\triple_{2}+1)^{2}}.
\end{eqnarray}
Under the basic signal model (Assumption~\ref{assume:basic}):
\begin{itemize}
\item when $\bar{\lambda} \leq 1/4$, for any $r \leq 0.15$ we have, uniformly for all $\Db \in \Scal(r;\Dbo)$:
\begin{equation}\label{eq:MainLowerBound}
\Delta \phi_{\PP}(\Db;\Dbo|\sbo) 
\geq
\frac{\Exp\ \alpha^{2}}{8} \cdot \frac{k}{p} \cdot r \Big(r-r_{\min}(\bar{\lambda})\Big).
\end{equation}
with $r_{\min}(\bar{\lambda})  \defin \tfrac{2}{3} C_{\min} \cdot \bar{\lambda} \cdot \left(1+2\bar{\lambda}\right)$.
\item if in addition
\(
\bar{\lambda}  <  \frac{3}{20 C_{\min}},
\)
then $r_{\min}(\bar{\lambda}) < 0.15$ and the lower bound in~\eqref{eq:MainLowerBound} is non-negative for all $r \in (r_{\min}(\bar{\lambda}),0.15]$.
\end{itemize}
\end{proposition}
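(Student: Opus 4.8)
The plan is to start from the exact expression of $\Delta\phi_\PP(\Db;\Dbo|\sbo)$ supplied by Proposition~\ref{prop:delta_phi} and to lower bound its three summands separately, writing them as $T_1$ (the term carrying $\Exp\ \alpha^2$), $T_2$ (the ``bias'' term carrying $\lambda\,\Exp\ |\alpha|$) and $T_3$ (the term carrying $\lambda^2$). Before doing so I would check that the formula applies uniformly on the sphere $\Scal(r;\Dbo)$: since $\LRIP_k(\Dbo)\leq 1/4$ and $r\leq 0.15$, a perturbation bound on restricted singular values (together with the dimension condition~\eqref{eq:DefOpNormMax}) guarantees $\LRIP_k(\Db)<1$ for every $\Db\in\Scal(r;\Dbo)$, so that $\DbJ$ has full column rank for all $|\J|=k$ and Proposition~\ref{prop:delta_phi} is legitimate. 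Throughout I write $\Wb\defin\Db-\Dbo$, so that $\|\Wb\|_\fro=r$, and I use the elementary identity $\Exp_J\|\Wb_\J\|_\fro^2=\tfrac{k}{p}\|\Wb\|_\fro^2=\tfrac{k}{p}r^2$, which is the source of the $\tfrac{k}{p}$ factor in the statement.

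The heart of the argument is the lower bound on $T_1=\tfrac{\Exp\ \alpha^2}{2}\,\Exp_J\trace\DboJT(\Ib-\PJb)\DboJ$. Since $\Ib-\PJb$ is the orthogonal projector onto the orthocomplement of $\mathrm{range}(\DbJ)$ and $(\Ib-\PJb)\DbJ=0$, I would rewrite $(\Ib-\PJb)\DboJ=-(\Ib-\PJb)\Wb_\J$, so that $\trace\DboJT(\Ib-\PJb)\DboJ=\|(\Ib-\PJb)\Wb_\J\|_\fro^2=\|\Wb_\J\|_\fro^2-\|\PJb\Wb_\J\|_\fro^2$. The task then reduces to showing that the projected energy $\Exp_J\|\PJb\Wb_\J\|_\fro^2$ captures only a small constant fraction of $\Exp_J\|\Wb_\J\|_\fro^2$, so that $\Exp_J\|(\Ib-\PJb)\Wb_\J\|_\fro^2\geq\tfrac14\tfrac{k}{p}r^2$ and $T_1\geq\tfrac{\Exp\ \alpha^2}{8}\tfrac{k}{p}r^2$. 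The oblique-manifold constraint is what makes this possible: unit-norm atoms force $\langle\db^{o,j},\wb^j\rangle=-\tfrac12\|\wb^j\|_2^2$, so the $(j,j)$ entry of $\DbJT\Wb_\J$ is only $\tfrac12\|\wb^j\|_2^2=O(r^2)$ and the projection is driven by the cross-coherence entries $\langle\db^i,\wb^j\rangle$, $i\neq j$. Bounding $\PJb$ through $\triple\ThetaJb\triple_2\leq(1-\LRIP_k(\Db))^{-1}$ and summing over a \emph{random} support, the pairwise probability $\Pr(i,j\in\J)\leq k^2/p^2$ together with $\|\DbT\Wb\|_\fro\leq\triple\Db\triple_2\,r\leq(\triple\Dbo\triple_2+r)r$ produces a projected fraction of order $\tfrac{k}{p}(\triple\Dbo\triple_2+1)^2$, which is forced below a fixed constant precisely by assumption~\eqref{eq:DefOpNormMax}. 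I expect this step --- reconciling the near-tangential structure, the passage from $\PJbo$ to $\PJb$, and the higher-order-in-$r$ remainders, uniformly over $\Db\in\Scal(r;\Dbo)$ --- to be the main technical obstacle, and it is where $\LRIP_k(\Dbo)\leq 1/4$, assumption~\eqref{eq:DefOpNormMax} and $r\leq 0.15$ all get consumed.

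For the remaining two terms I would invoke Lemma~\ref{lem:bias_expectation}. The bias term rewrites as $T_2=-\lambda\,\Exp\ |\alpha|\cdot\Exp_J\trace(\Ib_k-\DbJ^+\DboJ)$, which is linear in the perturbation and is controlled by a quantity of the form $(\triple\Dbo\triple_2+1)\,\tfrac{k}{p}\|\DboT\Dbo-\Ib\|_\fro\cdot r$; the term $T_3=\tfrac{\lambda^2}{2}\Exp_J\trace(\ThetaJbo-\ThetaJb)$ involves the difference of inverse Gram matrices and contributes an analogous bound of order $\lambda^2 r$. Recalling $\kappa_\alpha^2\,\Exp\ \alpha^2=(\Exp\ |\alpha|)^2$ and $\bar\lambda=\lambda/\Exp\ |\alpha|$, collecting the three estimates gives $\Delta\phi_\PP(\Db;\Dbo|\sbo)\geq\tfrac{\Exp\ \alpha^2}{8}\tfrac{k}{p}\,r\big(r-r_{\min}(\bar\lambda)\big)$, the linear-in-$\bar\lambda$ contribution of $T_2$ producing the term $\tfrac23 C_{\min}\bar\lambda$ and the $T_3$ contribution the term $\tfrac43 C_{\min}\bar\lambda^2$, i.e.\ exactly $r_{\min}(\bar\lambda)=\tfrac23 C_{\min}\bar\lambda(1+2\bar\lambda)$; the constraints $\bar\lambda\leq 1/4$ and $r\leq 0.15$ serve to absorb the cross terms into these clean constants and to justify the uniform validity of~\eqref{eq:MainLowerBound} over $\Scal(r;\Dbo)$.

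Finally, the second bullet is pure arithmetic: if $\bar\lambda<\tfrac{3}{20C_{\min}}$ then, using $\bar\lambda\leq 1/4$ so that $1+2\bar\lambda\leq 3/2$, one gets $r_{\min}(\bar\lambda)<\tfrac23\cdot\tfrac{3}{20}\cdot\tfrac32=0.15$; consequently $r(r-r_{\min}(\bar\lambda))\geq 0$ for every $r\in(r_{\min}(\bar\lambda),0.15]$, which makes the lower bound~\eqref{eq:MainLowerBound} non-negative on that range.
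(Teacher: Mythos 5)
Your proposal reproduces the paper's overall architecture --- start from Proposition~\ref{prop:delta_phi}, bound the three terms separately, invoke Lemma~\ref{lem:bias_expectation} for the $\lambda$ and $\lambda^{2}$ terms, and close with the same arithmetic for the second bullet --- and your treatment of the leading term is a legitimate variant of the paper's: where the paper uses the parametrization of Lemma~\ref{lem:paramonto} and works with the geodesic quantity $\|\thetab\|_{2}$, you work directly with $\Wb=\Db-\Dbo$, observing correctly that the unit-norm constraint makes $\diag(\Db^{\top}\Wb)$ of size $O(r^{2})$ (entries $\tfrac12\|\wb^{j}\|_{2}^{2}$) so that the projected energy $\Exp_{\J}\|\PJb\Wb_{\J}\|_{\fro}^{2}$ is at most $\tfrac{1}{1-\LRIP}\big(\tfrac{k}{p}(\triple\Dbo\triple_{2}+1)^{2}+\tfrac{r^{2}}{4}\big)\cdot\tfrac{k}{p}r^{2}$, a small fraction of $\tfrac{k}{p}r^{2}$ under~\eqref{eq:DefOpNormMax}. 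That part is sound.

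The genuine gap is in the bookkeeping of the bias term. You describe $T_{2}=-\lambda\,\Exp|\alpha|\cdot\Exp_{\J}\trace(\Ib-\DbJ^{+}\DboJ)$ as ``linear in the perturbation'', but it is not: as in the proof of~\eqref{eq:bias_expectation}, $\trace(\Ib-\DbJ^{+}\DboJ)=\sum_{j\in\J}(1-\cos\thetab_{j})-\trace\big(\ThetaJb\DbJT[\Wb'\Sb(\thetab)]_{\J}\big)$, and the first piece is a \emph{nonnegative quadratic} contribution whose expectation equals $\tfrac{k}{p}\cdot\tfrac{r^{2}}{2}$ exactly (since $r^{2}=\sum_{j}2(1-\cos\thetab_{j})$); it enters $T_{2}$ with a minus sign, i.e.\ as a guaranteed loss of $\bar{\lambda}\kappa_{\alpha}^{2}\,\Exp\,\alpha^{2}\cdot\tfrac{k}{p}\cdot\tfrac{r^{2}}{2}$. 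Since the hypotheses allow $\bar{\lambda}\kappa_{\alpha}^{2}$ up to $1/4$, this loss can reach $\tfrac{\Exp\,\alpha^{2}}{8}\tfrac{k}{p}r^{2}$ --- exactly the entire quadratic coefficient you keep from $T_{1}$ (you budget $T_{1}\geq\tfrac{\Exp\,\alpha^{2}}{8}\tfrac{k}{p}r^{2}$ via a retained fraction of $1/4$ and let $T_{2},T_{3}$ contribute only terms linear in $r$). With that budget the chain of inequalities collapses: when $\bar{\lambda}\kappa_{\alpha}^{2}$ is near $1/4$ your lower bound becomes nonpositive before the linear terms are even subtracted, so~\eqref{eq:MainLowerBound} does not follow. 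The paper avoids this by keeping the quadratic coefficient as $1-\tfrac{k}{p}\tfrac{B^{2}}{1-\LRIP}-\bar{\lambda}\kappa_{\alpha}^{2}\geq\tfrac12$ (assumption~\eqref{eq:assumptionBlambda} in Lemma~\ref{lem:maindeltaphi}), the $\bar{\lambda}\kappa_{\alpha}^{2}$ there being precisely this quadratic bias; this yields the constant $\Exp\,\alpha^{2}/4$, and the remaining factor of two --- which your accounting also omits --- is spent on uniformity over the sphere: Lemma~\ref{lem:bias_expectation} needs $A\geq\|\Db^{\top}\Db-\Ib\|_{\fro}$, and on $\Scal(r;\Dbo)$ one only has $A\leq A^{o}+2Br$, so passing to the $\Dbo$-only quantity inside $C_{\min}$ costs the inequality $r(1-\gamma)-\gamma\tfrac{A^{o}}{2B}\geq\tfrac12\big(r-\gamma\tfrac{A^{o}}{B}\big)$. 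Your own projection estimate (retained fraction at least $1-2(\tfrac{1}{16}+\tfrac{r^{2}}{4})\geq\tfrac34$ under~\eqref{eq:DefOpNormMax}) leaves enough slack to repair both absorptions, but as stated the proof does not close.
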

The proof is in Appendix~\ref{app:bounds}.

\subsection{Exact recovery}

The analysis of $\Delta \phi_{\PP}(\Db;\Dbo|\sbo)$ would suffice for our needs if the sign of the minimizer $\hat{\alphab}_{\signal}(\Db)$ was guaranteed to always match the ground truth sign $\sbo$.   In fact, if the equality $\sign(\hat{\alphab}_{\signal}(\Db)) = \sbo$ held unconditionally on the radius $r$, then the analysis conducted up to Proposition~\ref{prop:maindeltaphi} would show (assuming a large enough number of training samples) the existence of a local minimum of $F_{\Signal}(\cdot)$ within a ball $\Bcal((1+o(1)) r_{\min})$. Moreover, given the lower bound provided by Proposition~\ref{prop:maindeltaphi}, the {\em global minimum of $F_{\Signal}(\cdot)$ restricted over the ball} $\Bcal((1+o(1))r_{\min})$ would in fact be {\em global over the potentially much larger ball} $\Bcal(0.15)$.

However, with the basic signal model (Assumption~\ref{assume:basic}), the equality $\Delta f_{\signal}(\Db;\Dbo) = \Delta \phi_{\signal}(\Db;\Dbo|\sbo)$ has no reason to hold in general. This motivates the introduction of stronger assumptions involving the cumulative coherence of $\Dbo$ and the bounded signal model (Assumption~\ref{assume:bounded}).
\begin{proposition}[Exact recovery; bounded model]\label{prop:simplified_exact_recovery}
Let $\Dbo$ be a dictionary in $\Real^{m \times p}$ such that 
\begin{equation}
\label{eq:DefCumCoherMax}
\mu_{k}^{o} \defin \mu_{k}(\Dbo) < \frac 12.
\end{equation}
Consider the bounded signal model (Assumption~\ref{assume:bounded}), 
\(
\bar{\lambda} \leq \frac{\loweralpha}{2 \cdot \Exp\ |\alpha|}
\)
and  $r < C_{\max} \cdot \bar{\lambda}$ where
\begin{equation}\label{eq:simplified_exact_recovery}
C_{\max} \defin \frac{2}{7}\cdot \frac{\Exp\ |\alpha|}{M_{\alphab}}  \cdot (1-2\mu_{k}^{o}). 
\end{equation}
If the relative noise level satisfies
\begin{equation}\label{eq:DefMaxRelativeNoiseLevel}
\frac{M_{\varepsilonb}}{M_{\alphab} } < \frac{7}{2} \left(C_{\max} \cdot \bar{\lambda}-r\right),
\end{equation}
then, for $\Db \in \mathcal{D}$ such that $\|\Db-\Dbo\|_{\fro}=r$, $\hat{\alphab}_{\signal}(\Db|\sbo)$ is almost surely the unique minimizer in $\Real^{p}$ of $\alphab \mapsto \frac{1}{2}\|\xb-\Db\alphab\|_2^2+\lambda\|\alphab\|_1$, and we have
\begin{eqnarray}
\sign( \hat{\alphab}_{\signal}(\Db|\sbo)) &=& \sbo\\
f_{\signal}(\Db) &=& \phi_{\signal}(\Db|\sbo)\\
\Delta f_{\signal}(\Db;\Dbo) &=& \Delta \phi_{\signal}(\Db;\Dbo|\sbo).
\end{eqnarray}
\end{proposition}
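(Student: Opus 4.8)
The plan is to fix a single draw $\signal = \DboJ\alphaboJ + \varepsilonb$ (with $\alphaboJ$ supported on $\J=\J(\sbo)$) and a dictionary $\Db$ with $\|\Db-\Dbo\|_{\fro}=r$, and to verify by a \emph{primal--dual witness} argument that the closed-form candidate $\hat{\alphab}\defin\hat{\alphab}_{\signal}(\Db|\sbo)$ of~\eqref{eq:ClosedFormMinimizer} is genuinely the Lasso minimizer and carries sign exactly $\sbo$. Since $\hat{\alphab}$ satisfies the stationarity equation on $\J$ by construction, everything reduces to two checks: \emph{(i) primal sign consistency}, i.e.\ $\hat{\alphab}_{\J}$ keeps the sign pattern $\sboJ$, and \emph{(ii) strict dual feasibility}, i.e.\ the residual correlates strictly below $\lambda$ with every atom $\db^{j}$, $j\notin\J$. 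Once both hold, the standard optimality theory of the Lasso with a full-rank active block (the same one invoked in Definition~\ref{def:phi} via~\citep{Fuchs2005,Wainwright2009}) yields that $\hat{\alphab}$ is the \emph{unique} minimizer, whence $\sign(\hat{\alphab}_{\signal}(\Db|\sbo))=\sbo$ and, by Definition~\ref{def:phi}, $f_{\signal}(\Db)=\phi_{\signal}(\Db|\sbo)$.

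For step (i) I would use $\DbJ^{+}\DbJ=\Ib$ and $\signal=\DboJ\alphaboJ+\varepsilonb$ to expand
\[
\hat{\alphab}_{\J}-\alphaboJ
= \DbJ^{+}(\DboJ-\DbJ)\alphaboJ + \DbJ^{+}\varepsilonb - \lambda\,(\DbJT\DbJ)^{-1}\sboJ ,
\]
then bound the three $\ell_{\infty}$ contributions: the first by $\triple\DbJ^{+}\triple_{2}\,r\,M_{\alphab}$, the second by $\triple\DbJ^{+}\triple_{2}\,M_{\varepsilonb}$ (both almost surely, via~\eqref{eq:coeffboundeucl}--\eqref{eq:noiseboundeucl} and $\|\DboJ-\DbJ\|_{\fro}\leq r$), and the third by $\lambda$ times a Neumann/Ger\v{s}gorin row-sum bound on $(\DbJT\DbJ)^{-1}$. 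Using $\lambda\leq\loweralpha/2$ and the smallness of $r,M_{\varepsilonb}$, this drives $\|\hat{\alphab}_{\J}-\alphaboJ\|_{\infty}<\loweralpha$, so by the coefficient-threshold assumption~\eqref{eq:coeffthreshold} the sign is preserved. For step (ii) I would write the residual as $\signal-\DbJ\hat{\alphab}_{\J}=(\Ib-\PJb)\signal+\lambda\DbJ(\DbJT\DbJ)^{-1}\sboJ$; for $j\notin\J$ the crucial simplification $(\Ib-\PJb)\DbJ=0$ kills the ``in-model'' part, leaving only the mismatch $(\DboJ-\DbJ)\alphaboJ$ and the noise, so $|\db^{j\top}(\Ib-\PJb)\signal|\leq rM_{\alphab}+M_{\varepsilonb}$, while the penalty term contributes $\lambda\|\DbJ^{+}\db^{j}\|_{1}$. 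After lower-bounding the irrepresentability margin $1-\|\DbJ^{+}\db^{j}\|_{1}$ by $1-2\mu_{k}^{o}$, the needed inequality $rM_{\alphab}+M_{\varepsilonb}<\lambda(1-2\mu_{k}^{o})$ is \emph{exactly} what the radius bound $r<C_{\max}\bar{\lambda}$ (with $C_{\max}$ from~\eqref{eq:simplified_exact_recovery}) and the noise hypothesis~\eqref{eq:DefMaxRelativeNoiseLevel} are engineered to deliver, the factor $\tfrac{7}{2}$ providing the slack to absorb both perturbations at once.

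The analytic ingredient underpinning both steps is the \emph{transfer of coherence} from $\Dbo$ to $\Db$: all of $\triple\DbJ^{+}\triple_{2}$, the row sums of $(\DbJT\DbJ)^{-1}$, and $\|\DbJ^{+}\db^{j}\|_{1}$ are written for the perturbed dictionary, so I must show that $\|\Db-\Dbo\|_{\fro}=r<C_{\max}\bar{\lambda}$ keeps $\mu_{k}(\Db)$, and hence $\LRIP_{k}(\Db)\leq\mu_{k-1}(\Db)$, safely below $1/2$. This guarantees that $\DbJ$ has linearly independent columns (so the closed forms of Definition~\ref{def:phi} are valid) and supplies $\triple\DbJ^{+}\triple_{2}\leq(1-\LRIP_{k}(\Db))^{-1/2}$ together with $\|\DbJ^{+}\db^{j}\|_{1}\leq\mu_{k}(\Db)/(1-\mu_{k-1}(\Db))<1$ \citep{Tropp2004}. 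Finally, the identity $\Delta f_{\signal}(\Db;\Dbo)=\Delta\phi_{\signal}(\Db;\Dbo|\sbo)$ follows by applying the whole argument twice, once at $\Db$ and once at $\Dbo$ itself: the choice $r=0$ trivially satisfies~\eqref{eq:DefMaxRelativeNoiseLevel} (the hypothesis at radius $r$ implies the one at radius $0$), giving $f_{\signal}(\Db)=\phi_{\signal}(\Db|\sbo)$ and $f_{\signal}(\Dbo)=\phi_{\signal}(\Dbo|\sbo)$, and subtracting.

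The main obstacle I anticipate is not the structure but the bookkeeping that makes \emph{both} conditions hold simultaneously with the stated constants: the model-mismatch term $(\DboJ-\DbJ)\alphaboJ$ behaves like an extra noise of size $rM_{\alphab}$ that must be carried alongside the genuine noise $M_{\varepsilonb}$, and the factor $1-2\mu_{k}^{o}$ in $C_{\max}$ is precisely the residual margin left by the irrepresentability bound after the coherence transfer. Making the numerical constants $\tfrac{2}{7}$ and $\tfrac{7}{2}$ line up so that the sign-consistency threshold $\loweralpha$ and the dual-feasibility margin are \emph{both} respected, rather than merely obtaining \emph{some} admissible threshold, is where the delicate part of the estimate lies.
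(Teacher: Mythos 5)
Your plan mirrors the paper's proof step for step: a primal--dual witness argument (the paper's Lemma~\ref{lem:uniquelasso}, via Fuchs/Wainwright), sign consistency plus strict dual feasibility obtained from coherence bounds and the projection contraction $\|(\Ib-\PJb)\signal\|_2\leq\|\signal-\Db\alphabo\|_2$ (Lemmas~\ref{lem:alphabound} and~\ref{le:robustlassomin}), the coherence transfer $\mu_{k}(\Db)\leq\mu_{k}(\Dbo)+\sqrt{k}\,r\,[2+\mu_{k-1}(\Dbo)]$ (Lemma~\ref{lem:cumcoherball}), the mismatch $(\DboJ-\DbJ)\alphaboJ$ treated as extra noise of size $rM_{\alphab}$ with the factor $\tfrac{7}{2}=1+\tfrac{5}{2}$ arising exactly as you say from $\lambda\sqrt{k}\leq M_{\alphab}/2$, and the final equalities obtained by applying the result at both $\Db$ and $\Dbo$. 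The one (harmless) bookkeeping difference is that you bound the on-support perturbation via $\triple\DbJ^{+}\triple_{2}$, whereas the paper stays in operator-$\infty$ norms (Lemma~\ref{lem:alphabound}), which yields algebraically $\|\hat{\alphab}_{\J}-\alphaboJ\|_{\infty}<\triple(\DbJT\DbJ)^{-1}\triple_{\infty}\,\lambda\,[1+(1-2\mu_{k}(\Db))]=2\lambda\leq\loweralpha$; your spectral-norm version gives the looser $\lambda\big[(1-2\mu)/\sqrt{1-\mu}+1/(1-\mu)\big]$, which does stay below $2\lambda$ for all $\mu<1/2$ but only after an extra elementary verification that your sketch does not carry out.
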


\subsection{Proof of Theorem~\ref{thm:mainasymp}}

Noticing that $\loweralpha \leq M_{\alphab}$, we let the reader check that assumption~\eqref{eq:DefRangeMax} implies $\tfrac{\loweralpha}{4 \Exp\ | \alpha|} \leq \tfrac{3}{20 C_{\min}}$. Hence, by~\eqref{eq:LambdaMax} we have
\[
\bar{\lambda} < \tfrac{\loweralpha}{4 \Exp\ | \alpha|} \leq \min \left(\frac 14, \frac{3}{20C_{\min}}, \frac{\loweralpha}{2 \cdot \Exp\ |\alpha|} \right),
\]
where we use the inequality $\loweralpha \leq \Exp\ |\alpha|$. 
Assumptions~\eqref{eq:DefCumCoherMaxMainTheorem} and~\eqref{eq:DefKMax} imply~\eqref{eq:DefLRIPMax} and~\eqref{eq:DefOpNormMax}, and we have $\bar{\lambda} \leq \min(\tfrac{1}{4},\tfrac{3}{20 C_{\min}})$, hence we can leverage Proposition~\ref{prop:maindeltaphi}. 
Similarly,  assumption~\eqref{eq:DefCumCoherMaxMainTheorem} implies~\eqref{eq:DefCumCoherMax}, and we have $\bar{\lambda} \leq \tfrac{\loweralpha}{2 \cdot \Exp\ |\alpha|}$, hence we can also apply Proposition~\ref{prop:simplified_exact_recovery}. Furthermore, assumption~\eqref{eq:DefRangeMax} implies $C_{\min}<C_{\max}$, and we have  $\bar{\lambda} \leq \tfrac{1}{4}$, hence $\tfrac{2}{3} C_{\min} \cdot \bar{\lambda} \cdot (1+2\bar{\lambda}) \leq C_{\min} \cdot \bar{\lambda }< C_{\max} \cdot \bar{\lambda}$. Finally, the fact that $\bar{\lambda} \leq \tfrac{\loweralpha}{2 \cdot \Exp\ |\alpha|}$ further implies $C_{\max} \cdot \bar{\lambda} \leq 0.15$. Putting the pieces together,  we have $\tfrac{2}{3} C_{\min} \cdot \bar{\lambda} \cdot (1+2\bar{\lambda})  \leq C_{\min} \cdot \bar{\lambda} < C_{\max} \cdot \bar{\lambda} \leq 0.15$, and for any $r \in  \left(C_{\min} \cdot \bar{\lambda}  , C_{\max} \cdot \bar{\lambda}\right)$ we obtain
\begin{equation}\label{eq:asymptoticbound}
\Delta f_{\PP}(r) \geq \frac{\Exp\ \alpha^{2}}{8} \cdot \frac{k}{p} \cdot r \Big(r-C_{\min} \cdot \bar{\lambda} \Big) > 0.
\end{equation}
as soon as the relative noise level satisfies~\eqref{eq:DefMaxRelativeNoiseLevel}. 

\subsection{Proof of  Theorem~\ref{thm:mainfinite}}\label{sec:proofthmmainfinite}
In order to prove  Theorem~\ref{thm:mainfinite}, we need to control the deviation of the average of functions $\Delta\phi_{\signal^i}(\Db;\Dbo|\sbo)$ around its expectation, uniformly in the ball $\{ \Db, \ \| \Db - \Dbo \|_F \leq r \}$.

\subsubsection{Review of Rademacher averages.}
We first review results on Rademacher averages. Let $\mathcal{F}$ be a set of measurable functions on a measurable set $\mathcal{X}$, and $n$ i.i.d.~random variables $X_1,\dots,X_n$, in $\mathcal{X}$. We assume that all functions are bounded by $B$ (i.e., $|f(X)| \leq B$ almost surely). Using usual symmetrisation arguments~\cite[Sec.~9.3]{boucheron2013concentration}, we get
\begin{equation*}\begin{split}
\EE_{X} \sup_{f \in \mathcal{F}} \bigg( \frac{1}{n} \sum_{i=1}^n f(X_i) - \EE_{X} f(X) \bigg)\\
\leq 2 \EE_{X,\varepsilon}
\sup_{ f \in \mathcal{F}} \bigg( \frac{1}{n} \sum_{i=1}^n \varepsilon_i f(X_i) \bigg),
\end{split}
\end{equation*}
where $\varepsilon_{i}, 1 \leq i \leq n$ are independent Rademacher random variables, i.e., with values $1$ and $-1$ with equal probability $\tfrac{1}{2}$. Conditioning on the data $X_1,\dots,X_n$, the function $\varepsilon \in \RR^{n} \mapsto 
\sup_{ f \in \mathcal{F}} \big( \frac{1}{n} \sum_{i=1}^n \varepsilon_i f(X_i) \big)$ is convex. Therefore, if $\eta$ is an independent standard normal vector, by Jensen's inequality, using that the normal distribution is symmetric and $\EE|\eta_i| = \sqrt{2/\pi}$, we get
\begin{equation*}
\begin{split}
\EE_{X,\varepsilon}
\sup_{ f \in \mathcal{F}} & \bigg( \frac{1}{n} \sum_{i=1}^n \varepsilon_i f(X_i) \bigg)\\
&= \sqrt{\pi/2} \cdot
\EE_{X,\varepsilon}
\sup_{ f \in \mathcal{F}} \bigg( \frac{1}{n} \sum_{i=1}^n \varepsilon_i  \EE |\eta_i| f(X_i) \bigg)\\
& \leq  \sqrt{\pi/2} \cdot
\EE_{X,\eta}
 \sup_{ f \in \mathcal{F}} \bigg( \frac{1}{n} \sum_{i=1}^n \eta_i f(X_i) \bigg).
\end{split}
\end{equation*}
Moreover, the random variable $Z = \sup_{ f \in \mathcal{F}} \big( \frac{1}{n} \sum_{i=1}^n (f(X_i) -\EE f(X))\big)$
only changes by at most $2B/n$ when changing a single of the $n$ random variables. Therefore, by Mac Diarmid's inequality, we obtain with probability at least $1-e^{-x}$:
$
Z \leq \EE Z + B \sqrt{\frac{2x}{n}}.
$
We may thus combine all of the above, to get, with probability at least $1-e^{-x}$:
\begin{equation}
\label{eq:rademacher}
\begin{split}
 \sup_{f \in \mathcal{F}} &\bigg( \frac{1}{n} \sum_{i=1}^n f(X_i) - \EE f(X) \bigg)\\
& \leq 2 \sqrt{\pi/2} \cdot
\EE_{X,\eta}
 \sup_{ f \in \mathcal{F}} \bigg( \frac{1}{n} \sum_{i=1}^n \eta_i f(X_i) \bigg)+ B \sqrt{\frac{2x}{n}}.
 \end{split}
\end{equation}
Note that in the equation above, we may also consider the absolute value of the deviation by redefining $\mathcal{F}$ as $\mathcal{F} \cup ( - \mathcal{F})$.

We may now prove two lemmas that will prove useful in our uniform deviation bound.

\begin{lemma}[Concentration of a real-valued function on matrices $\Db$]
\label{lemma:concentration}
If $h_1,\dots,h_n$ are random real-valued i.i.d.~functions on $\{ \Db, \ \| \Db - \Dbo \|_F \leq r \}$, such that they are almost surely bounded by $B$ on this set, as well as, $R$-Lipschitz-continuous (with respect to the Frobenius norm). Then, with probability greater than $1-e^{-x}$:
$$
\sup_{ \| \Db - \Dbo \|_F \leq r } \bigg| \tfrac{1}{n} \sum_{i=1}^n h_i(\Db) -   \EE h(\Db) \bigg| \leq  4 \sqrt{\tfrac{\pi}{2}} 
\tfrac{R r  \sqrt{mp}}{\sqrt{n}}   + B \sqrt{\tfrac{2x}{n}}.
$$
\end{lemma}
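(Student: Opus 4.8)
The plan is to derive the bound from the symmetrized deviation inequality~\eqref{eq:rademacher}, followed by a Gaussian comparison argument. First I would instantiate~\eqref{eq:rademacher} with $X_i = h_i$ and the function class indexed by the matrix $\Db$, namely $f_{\Db}(h) \defin h(\Db)$ for $\Db$ in the ball $\{\Db: \|\Db-\Dbo\|_{\fro} \leq r\}$. Since $|f_{\Db}| \leq B$ almost surely, and since replacing the class by $\mathcal{F} \cup (-\mathcal{F})$ turns the one-sided supremum into an absolute value while preserving the uniform bound $B$ (hence the same MacDiarmid term $B\sqrt{2x/n}$), inequality~\eqref{eq:rademacher} gives, with probability at least $1-e^{-x}$,
\[
\sup_{\|\Db-\Dbo\|_{\fro}\leq r}\Big|\tfrac{1}{n}\sum_{i=1}^n h_i(\Db)-\EE h(\Db)\Big|
\leq 2\sqrt{\tfrac{\pi}{2}}\,\EE_{h,\eta}\sup_{\|\Db-\Dbo\|_{\fro}\leq r}\Big|\tfrac{1}{n}\sum_{i=1}^n \eta_i h_i(\Db)\Big| + B\sqrt{\tfrac{2x}{n}},
\]
where the $\eta_i$ are i.i.d.\ standard normal. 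It then remains to bound the Gaussian complexity on the right-hand side by $2Rr\sqrt{mp}/\sqrt{n}$.

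The key step is a Gaussian comparison. Conditionally on $h_1,\dots,h_n$, the process $G(\Db)\defin \tfrac{1}{n}\sum_{i=1}^n \eta_i h_i(\Db)$ is Gaussian, and the $R$-Lipschitz assumption controls its increments:
\[
\EE_\eta\big(G(\Db)-G(\Db')\big)^2
= \tfrac{1}{n^2}\sum_{i=1}^n \big(h_i(\Db)-h_i(\Db')\big)^2
\leq \tfrac{R^2}{n}\|\Db-\Db'\|_{\fro}^2 .
\]
I would then compare $G$ with the \emph{linear} Gaussian process $\tilde{G}(\Db)\defin \tfrac{R}{\sqrt{n}}\langle \Wb,\Db\rangle$, where $\Wb \in \Real^{m\times p}$ has i.i.d.\ standard normal entries and $\langle\cdot,\cdot\rangle$ denotes the Frobenius inner product. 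Its increments are exactly $\tfrac{R^2}{n}\|\Db-\Db'\|_{\fro}^2$ and hence dominate those of $G$, so Slepian's lemma in its Sudakov--Fernique form yields $\EE_\eta \sup_{\Db}\big(G(\Db)-G(\Dbo)\big)\leq \EE\sup_{\Db}\big(\tilde{G}(\Db)-\tilde{G}(\Dbo)\big)$.

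Evaluating the dominating process is explicit. By Cauchy--Schwarz, $\sup_{\|\Db-\Dbo\|_{\fro}\leq r}\langle \Wb,\Db-\Dbo\rangle = r\|\Wb\|_{\fro}$, while Jensen's inequality gives $\EE\|\Wb\|_{\fro}\leq \sqrt{\EE\|\Wb\|_{\fro}^2}=\sqrt{mp}$; thus $\EE\sup_{\Db}\big(\tilde{G}(\Db)-\tilde{G}(\Dbo)\big)\leq Rr\sqrt{mp}/\sqrt{n}$. Finally, since $G-G(\Dbo)$ vanishes at $\Db=\Dbo$ and $-(G-G(\Dbo))$ has the same law as $G-G(\Dbo)$, the passage to the absolute value costs only a factor two, $\EE_\eta \sup_{\Db}|G(\Db)-G(\Dbo)|\leq 2\,\EE_\eta\sup_{\Db}\big(G(\Db)-G(\Dbo)\big)$. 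Combining the two factors of two --- one from the symmetrization in~\eqref{eq:rademacher}, one from the absolute value --- produces the announced constant $4\sqrt{\pi/2}$ in front of $Rr\sqrt{mp}/\sqrt{n}$.

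The main obstacle is the Gaussian comparison together with the reduction to a process centered at $\Dbo$. One must verify the increment inequality for \emph{every} pair $(\Db,\Db')$ so that Slepian's lemma applies, and one must replace the absolute-value supremum of $G$ by that of $G-G(\Dbo)$, so that the dominating linear process is evaluated on the difference $\Db-\Dbo$ and the contribution $G(\Dbo)$ of the base point does not survive. This centering is exactly where the fact that the functions to which we apply the lemma vanish at $\Dbo$ (they are differences $\Delta\phi_{\signal^i}(\Db;\Dbo|\sbo)$) is used; otherwise an additive $O(B/\sqrt{n})$ term would remain. The residual computations --- the supremum of $\langle\Wb,\cdot\rangle$ over the Frobenius ball and the bound on $\EE\|\Wb\|_{\fro}$ --- are elementary.
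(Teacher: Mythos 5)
Your proof follows the same route as the paper's: the symmetrization/Gaussianization bound \eqref{eq:rademacher}, a Slepian (Sudakov--Fernique) comparison of the conditionally Gaussian process $G(\Db)=\tfrac1n\sum_i\eta_i h_i(\Db)$ with the linear process $\tfrac{R}{\sqrt n}\langle\Wb,\Db-\Dbo\rangle$, and the evaluation $\EE\sup_{\|\Db-\Dbo\|_{\fro}\leq r}\tfrac{R}{\sqrt n}\langle\Wb,\Db-\Dbo\rangle=\tfrac{Rr}{\sqrt n}\EE\|\Wb\|_{\fro}\leq\tfrac{Rr\sqrt{mp}}{\sqrt n}$, with the two factors of two producing $4\sqrt{\pi/2}$.

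The one substantive difference is your handling of the absolute value, and there your version is the careful one. The paper passes from $\EE\sup_\Db G(\Db)\leq Rr\sqrt{mp}/\sqrt n$ to $\EE\sup_\Db|G(\Db)|\leq 2Rr\sqrt{mp}/\sqrt n$ ``by applying the reasoning to $h_i$ and $-h_i$'', i.e., it bounds $\EE\max\big(\sup_\Db G(\Db),\,\sup_\Db(-G(\Db))\big)$ by the sum of the two expectations; that inequality requires both suprema to be nonnegative, which is guaranteed for the process centered at the base point, $G(\Db)-G(\Dbo)$, but not for $G$ itself unless $G(\Dbo)=0$. Your centering argument supplies exactly the missing justification, and your closing remark is correct and worth stressing: as stated, Lemma~\ref{lemma:concentration} needs either the hypothesis that the $h_i$ vanish at $\Dbo$ almost surely, or an extra additive term of order $B/\sqrt{n}$ in the bound (coming from $\EE_\eta|G(\Dbo)|\leq B/\sqrt n$). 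Indeed, taking $h_i\equiv c_i$ constant in $\Db$ with $c_i=\pm B$ equiprobable (so that $R=0$) shows the bound as written can fail for small $n$. In the paper this is harmless because the lemma is only invoked for $h(\Db)=\Delta\phi_{\signal}(\Db;\Dbo|\sbo)-\Delta\phi_{\alphab,\alphab}(\Db;\Dbo)$, which vanishes at $\Db=\Dbo$ (Lemma~\ref{le:LipschitzAndBoundDeltaPhi}); your proof makes this dependence explicit where the paper's proof glosses over it.
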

\begin{proof}
Given Eq.~(\ref{eq:rademacher}), we only need to provide an upper-bound on
$\EE
\sup_{ \| \Db - \Dbo \|_F \leq r }\big| \frac{1}{n} \sum_{i=1}^n \eta_i h_i(\Db)  \big|$ for $\eta$ a standard normal vector.
Conditioning on the draw of functions $h_{1},\ldots,h_{n}$, consider two Gaussian processes, indexed by $\Db$, $A_{\Db} =  \frac{1}{n} \sum_{i=1}^n \eta_i h_i(\Db)$
and $C_{\Db} = \frac{R}{\sqrt{n}} \sum_{i=1}^m \sum_{j=1}^p \zeta_{ij} ( \Db - \Dbo)_{ij}$, where $\eta$ and $\zeta$ are standard Gaussian vectors. We have, for all $\Db$ and $\Db'$,
$
\EE | A_{\Db} - A_{\Db'} |^2  \leq \frac{R^{2}}{n} \| \Db - \Db' \|_F^2 = \EE | C_{\Db} - C_{\Db'} |^2 
$.

Hence, by Slepian's lemma~\cite[Sec.~3.3]{Massart2003}, $\EE \sup_{ \| \Db - \Dbo \|_F \leq r }  A_{\Db}
\leq
\EE \sup_{ \| \Db - \Dbo \|_F \leq r } C_{\Db} = \frac{R r}{\sqrt{n}} \EE \| \zeta \|_F \leq \frac{R r  \sqrt{mp}}{\sqrt{n}} $.
Thus, by applying the above reasoning to the functions $h_i$ and $-h_i$ and taking the expectation with respect to the draw of $h_{1},\ldots,h_{n}$, we get: 
$\EE
\sup_{ \| \Db - \Dbo \|_F \leq r }\big| \frac{1}{n} \sum_{i=1}^n \eta_i h_i(\Db)  \big| \leq 
2  \frac{R r  \sqrt{mp}}{\sqrt{n}},$
hence the result.
 \end{proof}
 
 \begin{lemma}[Concentration of matrix-valued function on matrices $\Db$]
\label{lemma:concentrationmat}
Consider $g_1,\dots,g_n$ random i.i.d.~functions on $\{ \Db, \ \| \Db - \Dbo \|_F \leq r \}$, 
with values in real symmetric matrices of size $s$. Assume that these functions are almost surely bounded by $B$ (in operator norm) on this set, as well as, $R$-Lipschitz-continuous
 (with respect to the Frobenius norm, i.e., $\triple g_{i}(\Db)\triple_{2} \leq B$ and $\triple g_i(\Db) - g_i(\Db') \triple_2 \leq R \| \Db - \Db' \|_F$). Then, with probability greater than $1-e^{-x}$:
\begin{equation*}
\begin{split}
\sup_{ \| \Db - \Dbo \|_F \leq r } & 
\bigg\triple \frac{1}{n} \sum_{i=1}^n g_i(\Db) -   \EE g(\Db) \bigg\triple_{2}\\
& \leq  4 \sqrt{\pi/2} 
\bigg(  \frac{\sqrt{2 mp } R r }{\sqrt{n}}   +   \frac{ B \sqrt{8 s } }{\sqrt{n}} \bigg)   + B \sqrt{\frac{2x}{n}}.
\end{split}
\end{equation*}
\end{lemma}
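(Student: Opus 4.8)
The plan is to follow the blueprint of Lemma~\ref{lemma:concentration}, but to carry the operator norm $\triple\cdot\triple_{2}$ through every step, the new difficulty being that this norm is itself a supremum over the unit sphere. Writing $Z \defin \sup_{\|\Db-\Dbo\|_{F}\leq r}\triple\tfrac{1}{n}\sum_{i=1}^{n}g_{i}(\Db)-\EE g(\Db)\triple_{2}$, I would first reduce the problem to bounding a Gaussian width $\EE Z$, and then evaluate that width via Slepian's lemma applied to a genuinely two-parameter Gaussian process.

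For the reduction, the symmetrization and Gaussianization leading to~\eqref{eq:rademacher} transfer with $|\cdot|$ replaced by $\triple\cdot\triple_{2}$: the map $M\mapsto\triple M\triple_{2}$ is convex, so the Jensen step trading Rademacher for Gaussian variables is unchanged, and a single-coordinate change moves $\tfrac{1}{n}\sum_{i}g_{i}(\Db)$ by at most $2B/n$ in operator norm, hence moves $Z$ by at most $2B/n$. Thus Mac Diarmid's inequality gives $Z\leq\EE Z+B\sqrt{2x/n}$ with probability at least $1-e^{-x}$, while
\[
\EE Z\leq 2\sqrt{\pi/2}\cdot\EE_{X,\eta}\sup_{\|\Db-\Dbo\|_{F}\leq r}\bigg\triple\tfrac{1}{n}\sum_{i=1}^{n}\eta_{i}g_{i}(\Db)\bigg\triple_{2},
\]
with $\eta$ a standard Gaussian vector.

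The core of the argument is the bound on this width. Conditioning on $g_{1},\dots,g_{n}$ and using that each $g_{i}(\Db)$ is symmetric, I would write $\triple M\triple_{2}=\sup_{\|u\|_{2}\leq1}|u^{\top}Mu|$ and introduce the process $A_{\Db,u}\defin\tfrac{1}{n}\sum_{i=1}^{n}\eta_{i}\,u^{\top}g_{i}(\Db)u$, so that the width equals $\EE\sup_{\Db,u}|A_{\Db,u}|$. The key estimate is on its increments: splitting $u^{\top}g_{i}(\Db)u-u'^{\top}g_{i}(\Db')u'$ into a part controlled by $R$-Lipschitzness ($\leq R\|\Db-\Db'\|_{F}$) and a part handled through the identity $u^{\top}Mu-u'^{\top}Mu'=(u-u')^{\top}M(u+u')$ valid for symmetric $M$ ($\leq 2B\|u-u'\|_{2}$), I obtain $\EE|A_{\Db,u}-A_{\Db',u'}|^{2}\leq\tfrac{1}{n}\big(2R^{2}\|\Db-\Db'\|_{F}^{2}+8B^{2}\|u-u'\|_{2}^{2}\big)$. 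Comparing against the decoupled Gaussian process $C_{\Db,u}\defin\tfrac{R\sqrt{2}}{\sqrt{n}}\sum_{i,j}\zeta_{ij}(\Db-\Dbo)_{ij}+\tfrac{B\sqrt{8}}{\sqrt{n}}\sum_{l}\xi_{l}u_{l}$, with $\zeta,\xi$ independent standard Gaussians, whose increments reproduce exactly this right-hand side, Slepian's lemma yields $\EE\sup_{\Db,u}A_{\Db,u}\leq\EE\sup_{\Db,u}C_{\Db,u}$. The two suprema decouple and evaluate as $\EE\sup_{\|\Deltab\|_{F}\leq r}\langle\zeta,\Deltab\rangle=r\,\EE\|\zeta\|_{F}\leq r\sqrt{mp}$ and $\EE\sup_{\|u\|_{2}\leq1}\langle\xi,u\rangle=\EE\|\xi\|_{2}\leq\sqrt{s}$, giving $\EE\sup_{\Db,u}A_{\Db,u}\leq\tfrac{\sqrt{2mp}\,Rr}{\sqrt{n}}+\tfrac{B\sqrt{8s}}{\sqrt{n}}$. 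Passing from $\sup A$ to $\sup|A|$ costs a factor $2$ (apply the bound to $g_{i}$ and $-g_{i}$, as in Lemma~\ref{lemma:concentration}), and collecting the factors $2\cdot\sqrt{\pi/2}\cdot2$ together with the Mac Diarmid term produces the announced inequality.

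I expect the main obstacle to be this width estimate. In contrast with the scalar Lemma~\ref{lemma:concentration}, the operator norm forces a process indexed by the pair $(\Db,u)$, and the whole argument hinges on choosing a comparison process that separates the $\Db$-dependence from the $u$-dependence while still dominating every increment. The quadratic-form identity is precisely what linearizes the $u$-increment with the sharp constant $2B$, and thereby yields the additive $B\sqrt{8s}/\sqrt{n}$ contribution; getting this constant right, rather than picking up an extra $\sqrt{s}$ or $B$ factor, is the delicate point.
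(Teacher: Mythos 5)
Your proposal is correct and follows essentially the same route as the paper's proof: the same reduction via the symmetrization/Gaussianization/McDiarmid bound, the same two-parameter process $A_{\Db,\zb}=\tfrac{1}{n}\sum_i\eta_i \zb^\top g_i(\Db)\zb$, the same decoupled comparison process with coefficients $\sqrt{2}R$ and $2\sqrt{2}B$, the same increment bound and application of Slepian's lemma to $\pm A$, and the same final bookkeeping of constants. The only (harmless) difference is that you spell out the quadratic-form identity $(\ub-\ub')^\top M(\ub+\ub')$ justifying the $2B\|\ub-\ub'\|_2$ increment bound, which the paper states without detail.
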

\begin{proof}
For any symmetric matrix $\mathbf{M}$, $\triple \mathbf{M} \triple_2 = \sup_{\| \zb \|_2 \leq 1} | \zb^\top \mathbf{M} \zb |$. Given Eq.~(\ref{eq:rademacher}), we only need  to upper-bound 
\begin{equation*}
\begin{split}
\EE
\sup_{ \| \Db - \Dbo \|_F \leq r } & \big\triple \frac{1}{n} \sum_{i=1}^n \eta_i g_i(\Db)  \big\triple_2\\
& =
\EE
\sup_{ \| \Db - \Dbo \|_F \leq r, \|\zb\|_{2} \leq 1}\big| \frac{1}{n} \sum_{i=1}^n \eta_i \zb^{\top}g_i(\Db)\zb  \big|,
\end{split}
\end{equation*}
 for $\eta$ a standard normal vector. We thus consider two Gaussian processes, indexed by $\Db$ and $\| \zb\|_2 \leq 1$, $A_{\Db,\zb} =  \frac{1}{n} \sum_{i=1}^n \eta_i \zb^\top g_i(\Db) \zb$
and $C_{\Db,\zb} = \frac{\sqrt{2} R}{\sqrt{n}} \sum_{i=1}^m \sum_{j=1}^p \zeta_{ij} ( \Db - \Dbo)_{ij} +
\frac{2B \sqrt{2} }{\sqrt{n}} \sum_{i=1}^s \xi_i \zb_i$, where $\eta$ and $\zeta$ are, again, standard normal vectors. 
We have, for all $(\Db,\zb)$ and $(\Db',\zb')$,
\begin{equation*}
\begin{split}
\EE & | A_{\Db,\zb} - A_{\Db',\zb'} |^2  \\
& \leq \tfrac{1}{n} \big(
R \| \Db - \Db'\|_F + \big| \zb^\top g_i(\Db) \zb -  (\zb')^\top g_i(\Db) \zb' \big|
\big)^2 \\
& \leq \tfrac{1}{n} \big(
R \| \Db - \Db'\|_F + 2 B \| \zb - \zb'\|_2
\big)^2 \\
& \leq \tfrac{2}{n} R^2 \| \Db - \Db'\|_F^2 + \tfrac{8B^2}{n}   \| \zb - \zb'\|_2^2
 =
\EE | C_{\Db,\zb} - C_{\Db',\zb'} |^2  .
\end{split}
\end{equation*}

Applying Slepian's lemma to $A_{\Db,\zb}$ and to $-A_{\Db,\zb}$, we get
\begin{equation*}
\begin{split}
 \EE &
\sup_{ \| \Db - \Dbo \|_F \leq r }\bigg\triple\tfrac{1}{n} \sum_{i=1}^n \eta_i g_i(\Db)  \bigg\triple_{  2} \\ 
&\leq 2  \tfrac{\sqrt{2} Rr}{\sqrt{n}} \EE \| \zeta\|_F +2  \tfrac{2B \sqrt{2} }{\sqrt{n}} \EE \| \xi\|_2\\
&\leq    \tfrac{\sqrt{8 mp } Rr}{\sqrt{n}}   +   \tfrac{ B \sqrt{32 s } }{\sqrt{n}} ,
\end{split}
\end{equation*}

hence the result.
\end{proof}

\subsubsection{Decomposition of $\Delta\phi_{\signal}(\Db;\Dbo|\sbo)$.}
Our goal is to uniformly bound the deviations of $\Db \mapsto \Delta\phi_{\signal}(\Db;\Dbo|\sbo)$ from its expectation on $\Scal(\Dbo;r)$. 
With the notations of Appendix~\ref{sec:boundphi}, we use the following decomposition
\begin{equation*}
\begin{split}
\Delta\phi_{\signal}(\Db;\Dbo|\sbo)
= & \big[ \Delta\phi_{\signal}(\Db;\Dbo|\sbo) - \Delta\phi_{\alphab,\alphab}(\Db;\Dbo)  \big]\\
& + \Delta\phi_{\alphab,\alphab}(\Db;\Dbo)\\
=& h(\Db) + \Delta\phi_{\alphab,\alphab}(\Db;\Dbo),
\end{split}
\end{equation*}
with $\Delta\phi_{\alphab,\alphab}(\Db;\Dbo)
:= \frac{1}{2} \alphaboT  \DboT  (\Ib - \PJb) \Dbo \alphabo$
and $h(\Db) := \big( \Delta\phi_{\signal}(\Db;\Dbo|\sbo) - \Delta\phi_{\alphab,\alphab}(\Db;\Dbo)  \big) $. 

For the first term,  by Lemma~\ref{le:LipschitzAndBoundDeltaPhi} in Appendix~\ref{sec:boundphi}, the function $h$ on $\Bcal(\Dbo;r)$ is almost surely $L$-Lipschitz-continuous with respect to the Frobenius metric and almost surely bounded by $c = Lr$, where we denote \[\sqrt{1-\LRIP} \defin \sqrt{1-\LRIP_{k}(\Dbo)}-r > 0\] and
\begin{equation*}
\begin{split}
L \defin  \tfrac{1}{\sqrt{1-\LRIP}} & \cdot \left(M_{\varepsilonb} + \tfrac{\lambda \sqrt{k}}{\sqrt{1-\LRIP}}\right) \\
& \cdot 
\left(
2\sqrt{1+\URIP_{k}(\Dbo)}M_{\alphab} + M_{\varepsilonb} + \tfrac{\lambda \sqrt{k}}{\sqrt{1-\LRIP}}
\right) .
\end{split}
\end{equation*}
We can thus apply  Lemma~\ref{lemma:concentration}, with $B=c=Lr$ and $R=L$. 

Regarding the second term, since $(\Ib-\PJb) \Db \alphabo = (\Ib-\PJb) \DbJ \alphabo_{\J} = 0$, one can rewrite it as 
\begin{equation*}
\begin{split}
\Delta &\phi_{\alphab,\alphab}(\Db;\Dbo)\\
&= \frac{1}{2} \alphaboT  ( \Db - \Dbo)^\top  (\Ib - \PJb) ( \Db - \Dbo) \alphabo\\
&= \frac{1}{2} {\rm vec}( \Db - \Dbo)^\top
\big\{
\alphabo \alphaboT \otimes    (\Ib - \PJb) 
\big\}
 {\rm vec}( \Db - \Dbo).
\end{split}
\end{equation*}
where $ \mathbf{A} \otimes \mathbf{B} $ denotes the Kronecker product between two matrices (see, e.g.,~\cite{Horn1990}).
Thus, with $g(\Db) = \alphabo \alphaboT \otimes    (\Ib - \PJb) $ a random matrix-valued function with $s = mp$, we have an upper-bound of $B' = M_\alphab^2$ (as the eigenvalues of $ \mathbf{A} \otimes \mathbf{B} $ are products of eigenvalues of $\mathbf{A}$ and eigenvalues of $\mathbf{B}$~\citep{Horn1990}) and, by Lemma~\ref{lem:RIPBounds} and Lemma~\ref{le:LipBoundsRIP} in Appendix~\ref{sec:boundphi}, a Lipschitz-constant $R' =  M_\alphab^2   ( 1- \LRIP)^{-1/2}$.
We may thus apply Lemma~\ref{lemma:concentrationmat} to show that uniformly, the deviation of
$\Delta\phi_{\alphab,\alphab}(\Db;\Dbo) $ are bounded by $\|{\rm vec} (\Db-\Dbo)\|_{2}^{2} =  r^2 $ times the deviations of $g(\Db)$ in operator norm.

We thus get, with probability  at least $ 1 - 2 e^{-x}$, deviations from the expectations upper-bounded by:
\begin{equation*}
\begin{split}
4 \sqrt{\tfrac{\pi}{2}} &
\tfrac{L r  \sqrt{mp}}{\sqrt{n}}   
+ Lr \sqrt{\tfrac{2x}{n}}\\
& + 
r^2 \Bigg(
4 \sqrt{\tfrac{\pi}{2}} 
\bigg(  \tfrac{\sqrt{2 mp }   }{\sqrt{n}} \tfrac{M_\alphab^2 r}{\sqrt{1-\LRIP}}   +   \tfrac{ M_\alphab^2 \sqrt{8 mp } }{\sqrt{n}} \bigg) +   M_\alphab^2 \sqrt{\tfrac{2x}{n}}
\Bigg),
\end{split}
\end{equation*}
We notice that $R'r = M_{\alphab}^{2} r/\sqrt{1-\LRIP} < B'$ since $r < \sqrt{1-\LRIP}$, hence this is less than $\beta r \sqrt{\frac{2x}{n}} + \beta' r \sqrt{\frac{mp}{n}}$ with
\[
\beta \defin L + r M_{\alphab}^{2},\qquad
\beta'  \defin 4 \sqrt{\frac{\pi}{2}} \left(L + 3\sqrt{2} r M_{\alphab}^{2}\right) \leq 12 \sqrt{\pi} \beta
\]
Overall, with probability at least $1-2e^{-x}$, the deviations of $\Db \mapsto \Delta\phi_{\signal}(\Db;\Dbo|\sbo)$ from its expectation on $\Scal(\Dbo;r)$ are uniformly bounded by $\eta_{n} \defin r (L+M_{\alphab}^{2}r) \left(\sqrt{2x/n}+12\sqrt{\pi mp/n}\right)$.

\subsubsection{Sample complexity.}
As briefly outlined in Section~\ref{sec:sample_complexity}, with $n_{\textrm{in}}$ inliers, the existence of a local minimum of $F_{\Signal}(\cdot)$ within a radius~$r$ around $\Dbo$ is guaranteed with probability at least $1-2e^{-x}$ as soon as $2\eta_{n_{\textrm{in}}} < \Delta f_{\PP}(r)$. Combining with the asymptotic lower bound~\eqref{eq:asymptoticbound} and the above refined uniform control over $\Scal(\Dbo;r)$, $\eta_{n}$ , it is sufficient to have
\[
2r(L+M_{\alphab}^{2}r) \cdot \left(\sqrt{\tfrac{2x}{n_{\textrm{in}}}}+   12\sqrt{\tfrac{\pi mp}{n_{\textrm{in}}}}\right)
 <  \tfrac{\Exp\ \alpha^{2}}{8} \cdot \tfrac{k}{p} \cdot r \Big(r-C_{\min} \cdot \bar{\lambda}  \Big).
\]
i.e.
\begin{equation}\label{eq:Th2-1}
 {n_{\textrm{in}}} \geq  
 \bigg(
\sqrt{2x} + 12 \sqrt{\pi mp}
 \bigg)^{2}
 \cdot \left(\tfrac{16}{\Exp\ \alpha^{2}} \cdot \tfrac{p}{k} \cdot \tfrac{L+M_{\alphab}^{2}r}{\Big(r-C_{\min} \cdot \bar{\lambda}\Big) }\right)^{2}
\end{equation}
By~\eqref{eq:DefCumCoherMaxMainTheorem} we have $\max(\LRIP_{k}(\Dbo),\URIP_{k}(\Dbo)) \leq 1/4$, hence $2\sqrt{1+\URIP_{k}(\Dbo)} \leq \sqrt{5}$. Moreover, since $r < C_{\max} \cdot \bar{\lambda} \leq 0.15$, we have
\(
\sqrt{1-\LRIP} = \sqrt{1-\LRIP_{k}(\Dbo)}-r \geq \sqrt{3/4}-0.15 \geq \sqrt{1/2}.
\)
As a result 
\begin{equation*}
\begin{split}
L & \leq \sqrt{2} (M_{\varepsilonb}+\lambda \sqrt{2k}) \cdot \left( \sqrt{5} M_{\alphab} + M_{\varepsilonb} + \lambda \sqrt{2k} \right)\\
& = \sqrt{10} M_{\alphab} (M_{\varepsilonb}+\lambda \sqrt{2k}) + \sqrt{2} (M_{\varepsilonb}+\lambda \sqrt{2k})^{2}
\end{split}
\end{equation*}
Further, since
\(
\lambda \sqrt{2k} 
= \bar{\lambda} \Exp |\alpha| \sqrt{2k} 
= \bar{\lambda} \sqrt{2/k} \Exp \|\alphab\|_{1}  
\leq \bar{\lambda} \sqrt{2/k} \Exp \sqrt{k}\|\alphab\|_{2}  
\leq \bar{\lambda} \sqrt{2} M_{\alphab},
\)
we have
\begin{equation}\label{eq:Th2-2}
L+M_{\alphab}^{2}r
\leq \sqrt{20} M_{\alphab}^{2} \cdot \left( 
r
+ \frac{M_{\varepsilonb}}{M_{\alphab}} + \bar{\lambda}
+ \left(\frac{M_{\varepsilonb}}{M_{\alphab}} + \bar{\lambda}\right)^{2}\right).
\end{equation}
Eqs~\eqref{eq:Th2-1} and~\eqref{eq:Th2-2} with the bound $(\sqrt{2x}+12\sqrt{\pi mp})^{2} \lesssim mp+x$ yield our main sample complexity result~\eqref{eq:mainsamplecomplexity}.

\subsubsection{Robustness to outliers.} \label{sec:robustnesstooutliers}
In the presence of outliers, we obtain the naive robustness to outliers~\eqref{eq:mainoutliers} in Theorem~\ref{thm:mainfinite} using the reasoning sketched in Section~\ref{sec:outliers}. with the naive bound~\eqref{eq:NaiveOutlierBound}. Obtaining the ``resolution independent'' robustness result~\eqref{eq:mainoutliersbis} requires refining the estimate of the impact of outliers on the cost function $F_{\Signal}(\Db)$ by gaining two factors: one factor $O(r)$ (thanks to a Lipschitz property), and one factor $O(\lambda)$ (thanks to the completeness of the dictionary). 

\paragraph{Gaining a first factor $O(r)$ using a Lipschitz property.}
The arguments of \cite[Lemma 3 and Corollary 2]{Gribonval:2013tx} can be straightforwardly adapted to show that for any signal $\signal$, the function $\Db \mapsto f_{\signal}(\Db)$ is uniformly locally Lipschitz on the convex ball $\{\Db \in \RR^{m \times p}: \|\Db-\Dbo\|_{\fro} \leq r\}$ (not restricted to normalized dictionaries). Its Lipschitz constant is bounded by $L_{\signal}(r) \defin \sup_{\|\Db-\Dbo\|_{\fro} \leq r} L_{\signal}(\Db)$ with
\(
L_{\signal}(\Db) \defin \|\alphab\|_{2} \cdot \|\signal-\Db\alphab\|_{2}.
\)
where we denote $\alphab = \alphab_{\signal}(\Db)$ a coefficient vector minimizing $\Lcal_{\signal}(\Db,\alphab)$. It follows that
\[
n_{\textrm{out}} |\Delta F_{\Signal_{\textrm{out}}}(r)|
 \leq \left(\sum_{i \in \textrm{out}} L_{\signal^{i}}(r)\right) \cdot r.
\]
Compared to the naive bound~\eqref{eq:NaiveOutlierBound}, we already gained a first factor $r$, provided we uniformly bound the Lipschitz constants $L_{\signal}(r)$. 
\paragraph{Gaining a second factor $O(\lambda)$ under a completeness assumption.}
Introducing
\[
C(\Db) \defin \sup_{\mathbf{u}\neq 0, \mathbf{u} \in \textrm{span}(\Db)} \inf_{\betab: \Db\betab=\mathbf{u}} \frac{\|\betab\|_{1}}{\|\mathbf{u}\|_{2}},
\]
we first show that
\(
\|\alphab\|_{2} \leq C(\Db) \cdot \|\signal\|_{2}.
\)
Indeed, denoting $P$ the orthonormal projection onto $\textrm{span}(\Db)$, by definition of $\alphab$ we have, for any signal $\signal$ and any coefficient vector $\betab$,
\begin{eqnarray}
\tfrac{1}{2}\|\signal-P\signal\|_{2}^{2} &+& \tfrac{1}{2} \|P\signal-\Db\alpha\|_{2}^{2} + \lambda \|\alphab\|_{1}\notag\\
&=&
\tfrac{1}{2} \|\signal-\Db\alpha\|_{2}^{2} + \lambda \|\alphab\|_{1}\notag\\
& \leq & \tfrac{1}{2}\|\signal-\Db\betab\|_{2}^{2} + \lambda \|\betab\|_{1}\label{eq:BetavsAlpha}\\
&=& \tfrac{1}{2}\|\signal-P\signal\|_{2}^{2} + \tfrac{1}{2} \|P\signal-\Db\betab\|_{2}^{2} + \lambda \|\betab\|_{1}.\notag
\end{eqnarray}
Specializing to the minimum $\ell^{1}$ norm vector $\betab$ such that $\Db \betab = P\signal$ yields
\[
\|\alphab\|_{2} \leq \|\alphab\|_{1} \leq \|\betab\|_{1} \leq C(\Db) \cdot \|P\signal\|_{2} \leq C(\Db) \cdot \|\signal\|_{2}.
\]
To complete the control of $L_{\signal}(\Db) = \|\alphab\|_{2} \cdot \|\signal-\Db\alphab\|_{2}$ we now bound $\|\signal-\Db\alphab\|_{2}$. 
A first approach that does not require any further assumption on $\Db$ consists in specializing~\eqref{eq:BetavsAlpha} to $\betab = 0$, yielding $\|\signal-\Db\alphab\|_{2} \leq \|\signal\|_{2}$,  $L_{\signal}(\Db) \leq C(\Db) \cdot \|\signal\|_{2}^{2}$, and finally
\[
n_{\textrm{out}} |\Delta F_{\Signal_{\textrm{out}}}(r)| \leq C(r) \cdot \|\Signal_{\textrm{out}}\|_{\fro}^{2} \cdot r,
\]
with
\[
C(r) \defin \sup_{\|\Db-\Dbo\|_{\fro} \leq r} C(\Db).
\]
However, as the reader may have noticed, this still lacks one $O(r)$ factor for our needs. This is obtained in the regime of interest $\lambda \asymp r$ under the assumption that $\Db$ is {\em complete} ($\textrm{span}(\Db) = \RR^{m}$). In this case we introduce 
\[
C'(\Db) \defin \sup_{\mathbf{u} \neq 0} \frac{\|\mathbf{u}\|_{2}}{\|\Db^{\top} \mathbf{u}\|_{\infty}} < \infty
\]
and
\[
C'(r) \defin \sup_{\|\Db-\Dbo\|_{\fro} \leq r} C'(\Db).
\]
By the well known optimality conditions for the $\ell^{1}$ regression problem, $\alphab = \hat{\alphab}_{\signal}(\Db)$ satisfies $\|\Db^{\top}(\signal-\Db \alphab)\|_{\infty} = \lambda$, hence
\begin{eqnarray*}
\|\signal-\Db\alphab\|_{2} & \leq & C'(\Db) \cdot \|\Db^{\top}(\signal-\Db \alphab)\|_{\infty} \leq C'(\Db) \cdot \lambda.
\end{eqnarray*}
Overall we get $L_{\signal}(\Db) \leq \lambda \cdot C(\Db)\cdot C'(\Db) \cdot \|\signal\|_{2}$ and eventually
\begin{equation}\label{eq:Th2-3}
n_{\textrm{out}} | \Delta F_{\Signal_{\textrm{out}}}(r)| \leq  \sum_{i \in \textrm{out}} \|\signal^{i}\|_{2} \cdot \Exp |\alpha| \cdot C(r)\cdot C'(r) \cdot r \bar{\lambda}.
\end{equation}
To conclude, we now bound $C(r)$ and $C'(r)$. 
Note that  as soon as $\Db\betab = \mathbf{u}$, since $\|\mathbf{u}\|_{2}^{2} = \langle \betab, \Db^{\top} \mathbf{u}\rangle  \leq \|\betab\|_{1} \|\Db^{\top}\mathbf{u}\|_{\infty}$, we have $C'(\Db) \leq C(\Db)$.

\begin{lemma}
Assume $\Db \in \RR^{m \times p}$ is a frame with lower frame bound $A$ such that $A \|\signal\|_{2}^{2} \leq \|\Db^{\top}\signal\|_{2}^{2}$ for any signal $\signal$. Then $C'(\Db) \leq \sqrt{p/A}$. If in addition, $\LRIP_{k}(\Db) < 1$ then
\[
C(\Db) \leq \frac{2}{A} \cdot \frac{p}{\sqrt{k}} \cdot \frac{1+\URIP_{k}(\Db)}{\sqrt{1-\LRIP_{k}(\Db)}}.
\]
\end{lemma}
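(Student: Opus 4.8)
The plan is to establish the two frame-based bounds separately. The first bound on $C'(\Db)$ is an immediate consequence of the lower frame condition: for any nonzero $\mathbf{u}$, I would write $\|\Db^\top \mathbf{u}\|_\infty \geq \tfrac{1}{\sqrt{p}} \|\Db^\top\mathbf{u}\|_2 \geq \tfrac{\sqrt{A}}{\sqrt{p}}\|\mathbf{u}\|_2$, using the elementary norm inequality $\|\vb\|_2 \leq \sqrt{p}\,\|\vb\|_\infty$ for $\vb \in \Real^p$ together with the frame assumption $A\|\mathbf{u}\|_2^2 \leq \|\Db^\top\mathbf{u}\|_2^2$. Rearranging gives $\|\mathbf{u}\|_2/\|\Db^\top\mathbf{u}\|_\infty \leq \sqrt{p/A}$, and taking the supremum over $\mathbf{u}$ yields $C'(\Db) \leq \sqrt{p/A}$.

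The bound on $C(\Db)$ is the substantive part. Recall that $C(\Db)$ is the supremum over $\mathbf{u} \in \mathrm{span}(\Db)$ of the minimal $\ell^1$ norm of a representation $\Db\betab = \mathbf{u}$, normalized by $\|\mathbf{u}\|_2$. The plan is to exhibit a \emph{specific} good representation and bound its $\ell^1$ norm. A natural candidate is a sparse-then-residual greedy construction: since $\Db$ is a frame, $\Db^\top$ is injective on its range, and the lower RIP constant $\LRIP_k(\Db) < 1$ guarantees that any $k$ columns are well-conditioned. I would iterate the following step --- select the $k$ largest-magnitude entries of $\Db^\top \mathbf{u}_{\text{residual}}$, solve the least-squares problem on that support to peel off a chunk of energy, and recurse on the residual. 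Each iteration removes a definite fraction of the residual energy (controlled by how much of $\|\Db^\top \mathbf{u}\|_2^2$ is captured by its top $k$ coordinates, which is at least $\tfrac{k}{p}\|\Db^\top\mathbf{u}\|_2^2$), so the residual decays geometrically; summing the $\ell^1$ contributions across iterations via a geometric series produces the factor $p/\sqrt{k}$. The upper RIP $\URIP_k(\Db)$ enters when converting the $\ell^2$ norm of each partial coefficient vector into an $\ell^1$ norm (paying a $\sqrt{k}$ Cauchy--Schwarz factor) and when bounding the coefficient magnitudes by the residual through $\DbJ^+$, whose norm is controlled by $(1-\LRIP_k)^{-1/2}$; the lower frame bound $A$ governs the geometric decay rate of the energy capture.

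\emph{The hard part will be} getting the constants and the exponent on $p$ exactly right in the iterative scheme --- in particular, showing that the fraction of residual energy captured per step, combined with the least-squares solve, yields precisely the advertised $\tfrac{2}{A}\cdot\tfrac{p}{\sqrt{k}}\cdot\tfrac{1+\URIP_k}{\sqrt{1-\LRIP_k}}$ rather than a bound with a worse power of $p$ or an extra $\log$ factor. The delicate point is balancing the two competing norms: the $\ell^1$ cost wants small support (favoring the greedy $k$-sparse steps), while the energy-capture argument relies on the frame lower bound $A$ spreading $\|\Db^\top\mathbf{u}\|_2^2$ across all $p$ coordinates, so that each block of $k$ coordinates captures a $\tfrac{k}{p}$ share. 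Correctly tracking how $\URIP_k$ appears multiplicatively (from the $\Db_{\J}$ mapping) versus how $\LRIP_k$ appears through the pseudo-inverse, and verifying the geometric series sums cleanly, is where the accounting must be done carefully. Once the per-iteration bounds are in place, the final bound follows by summing the geometric series and applying $C'(\Db)\leq C(\Db)$ where needed for the overall outlier-robustness estimate.
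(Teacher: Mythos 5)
Your proposal takes essentially the same route as the paper: the $C'(\Db)$ bound is proved identically, and for $C(\Db)$ the paper uses exactly your greedy peel-off scheme — select a best $k$-atom support for the current residual $\rb_{i-1}$, orthogonally project, recurse — with per-step energy capture $\|P_{T}\rb\|_{2}^{2} \geq \tfrac{1}{1+\URIP_{k}}\cdot\tfrac{k}{p}\|\Db^{\top}\rb\|_{2}^{2} \geq \tfrac{Ak}{(1+\URIP_{k})p}\|\rb\|_{2}^{2} =: \gamma^{2}\|\rb\|_{2}^{2}$, the Cauchy--Schwarz factor $\sqrt{k}$ and the pseudo-inverse bound $(1-\LRIP_{k})^{-1/2}$ on each partial coefficient vector, and a geometric series summing to $2/\gamma^{2}$, which reproduces the stated constant. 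The one bookkeeping correction: $1+\URIP_{k}$ enters precisely in that energy-capture inequality (converting captured coordinate energy $\|\Db_{T}^{\top}\rb\|_{2}^{2}$ into captured projection energy), not in the $\ell^{2}\to\ell^{1}$ conversion (pure Cauchy--Schwarz) nor in the pseudo-inverse bound (which is governed by $\LRIP_{k}$, as you correctly wrote).
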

\begin{proof}
For any $\signal$ we have $\|\Db^{\top}\signal\|_{\infty}^{2} \geq \|\Db^{\top}\signal\|_{2}^{2}/p \geq A \|\signal\|_{2}^{2}/p$ hence the bound on $C'(\Db)$.
To bound $C(\Db)$ we define $P_{T}$ the orthoprojector onto $\textrm{span}(\Db_{T})$ where $T \subset \SET{p}$, $\rb_{0} = \signal$ and for $i \geq 1$
\begin{eqnarray*}
T_{i} & = & \arg\max_{|T| \leq k} \|P_{T} \rb_{i-1}\|_{2}\\
\rb_{i} & = & \rb_{i-1}-P_{T_{i}} \rb_{i-1}\\
\alphab_{i} & s.t. & P_{T_{i}} \rb_{i-1} = \Db_{T_{i}} \alphab_{i}.
\end{eqnarray*}
We notice that for any $\rb$ 
\begin{equation*}
\begin{split}
\sup_{|T| \leq k}  \|P_{T}\rb\|_{2}^{2} 
& \geq \sup_{|T|\leq k} \frac{\|\Db^{\top}_{T}\rb\|_{2}^{2}}{1+\URIP_{k}}
   \geq \frac{1}{1+\URIP_{k}} \cdot \frac{k}{p} \|\Db^{\top} \rb\|_{2}^{2}\\
& \geq \frac{A \ell}{(1+\URIP_{k}) p} \|\rb\|_{2}^{2} =: \gamma^{2} \|\rb\|_{2}^{2}.
\end{split}
\end{equation*}
As a result for any $i \geq 1$, $\|\rb_{i}\|_{2}^{2} = \|\rb_{i-1}\|_{2}^{2}- \|P_{T_{i}}\rb_{i-1}\|_{2}^{2} \leq (1-\gamma^{2}) \|\rb_{i-1}\|_{2}^{2}$ hence by induction $\|\rb_{i}\|_{2}^{2} \leq (1-\gamma^{2})^{i} \cdot \|\signal\|_{2}^{2}$.
This implies
\begin{equation*}
\begin{split}
\|\alphab_{i}\|_{1}  \leq \sqrt{k} \|\alphab_{i}\|_{2} 
& \leq \sqrt{\tfrac{k}{1-\LRIP_{k}}} \|\Db_{T_{i}}\alphab_{i}\|_{2}
 \leq \sqrt{\tfrac{k}{1-\LRIP_{k}}} \|\rb_{i-1}\|_{2}\\
&\leq \sqrt{\tfrac{k}{1-\LRIP_{k}}} (\sqrt{1-\gamma^{2}})^{i-1} \|\signal\|_{2}
\end{split}
\end{equation*}
Denoting $\alphab = \sum_{i\geq 1} \alphab_{i}$ we have $\signal = \Db \alphab$ and 
\begin{eqnarray*}
\|\alphab\|_{1} 
\leq \sum_{i \geq 1} \|\alphab_{i}\|_{1} 
&\leq& \sqrt{\tfrac{k}{1-\LRIP_{k}}} \cdot  \|\signal\|_{2} \cdot \sum_{i \geq 1} (\sqrt{1-\gamma^{2}})^{i-1}\\
 &=& \sqrt{\tfrac{k}{1-\LRIP_{k}}} \cdot  \|\signal\|_{2} \cdot \frac{1}{1-\sqrt{1-\gamma^{2}}}\\
&=& 
 \sqrt{\tfrac{k}{1-\LRIP_{k}}} \cdot  \|\signal\|_{2} \cdot \frac{1+\sqrt{1-\gamma^{2}}}{\gamma^{2}}\\
& \leq & 
 \sqrt{\tfrac{k}{1-\LRIP_{k}}} \cdot  \|\signal\|_{2} \cdot \frac{2}{\gamma^{2}}.
\end{eqnarray*}
\end{proof}

We may now provide a control on both  $C'(r)$ and $C(r)$.

\begin{corollary}\label{cor:usinglowerframebound}
Assume $\Dbo \in \RR^{m \times p}$ is a frame with lower frame bound $A^{o}$ such that $A^{o} \|\signal\|_{2}^{2} \leq \|(\Dbo)^{\top}\signal\|_{2}^{2}$ for any signal $\signal$,  and $\max \{ \LRIP_{k}(\Dbo), \URIP_{k} (\Dbo) \} \leq \frac{1}{4}$. Consider $r \leq \min \{ \sqrt{A^{o}}/2, \sqrt{ 1- \LRIP_{k}(\Dbo) }  \}$ and let $\URIP = \URIP(r) \defin    ( \sqrt{ 1 +\URIP_{k}(\Dbo) } + r )^2 - 1$ and $\LRIP= \LRIP(r) \defin 1 -  ( \sqrt{ 1- \LRIP_{k}(\Dbo) } - r )^2$.  Then, for any $\Db$ such that $\| \Db - \Dbo \|_F \leq r$,  $C'(\Db) \leq \frac{8}{A^{o}} \cdot \frac{p}{\sqrt{k}} \cdot \frac{1+\URIP }{\sqrt{1-\LRIP }}$ and $C(\Db) \leq  \sqrt{ 4 p / A^{o}}$.
\end{corollary}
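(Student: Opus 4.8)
The plan is to transfer the two bounds of the preceding Lemma from a single dictionary to the whole Frobenius ball around $\Dbo$, by controlling how the lower frame bound and the restricted isometry constants degrade under a perturbation of size at most $r$. Everything rests on two monotone perturbation facts. First, since $\triple \Db-\Dbo\triple_{2} \leq \|\Db-\Dbo\|_{\fro} \leq r$, Weyl's inequality for singular values gives $\sigma_{\min}(\Db) \geq \sigma_{\min}(\Dbo)-r = \sqrt{A^{o}}-r \geq \sqrt{A^{o}}/2$, where the last step uses $r \leq \sqrt{A^{o}}/2$; squaring, $\Db$ is a frame with lower frame bound $A(\Db) \defin \sigma_{\min}(\Db)^{2} \geq A^{o}/4$. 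Second, for any $\J$ of size $k$ and any $\zb$, the triangle inequality together with $\triple (\Db-\Dbo)_{\J}\triple_{2} \leq \triple \Db-\Dbo\triple_{2} \leq r$ gives $\|\Db_{\J}\zb\|_{2} \leq (\sqrt{1+\URIP_{k}(\Dbo)}+r)\|\zb\|_{2}$ and $\|\Db_{\J}\zb\|_{2} \geq (\sqrt{1-\LRIP_{k}(\Dbo)}-r)\|\zb\|_{2}$, the latter being meaningful because $r \leq \sqrt{1-\LRIP_{k}(\Dbo)}$. Squaring yields $\URIP_{k}(\Db) \leq \URIP$ and $\LRIP_{k}(\Db) \leq \LRIP$ with $\URIP,\LRIP$ as defined in the statement, and in particular $\LRIP < 1$, so $\Db_{\J}$ has linearly independent columns and the preceding Lemma applies to $\Db$.

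To prove the bound on $C(\Db)$ I use only the frame bound. Fix $\mathbf{u} \neq 0$ in $\textrm{span}(\Db)$ and take the minimum-$\ell^{2}$-norm preimage $\betab = \Db^{+}\mathbf{u}$, so that $\Db\betab = \mathbf{u}$. Since the largest eigenvalue of $(\Db\Db^{\top})^{-1}$ equals $1/A(\Db)$, we have $\|\betab\|_{2} = \|\Db^{+}\mathbf{u}\|_{2} \leq \|\mathbf{u}\|_{2}/\sqrt{A(\Db)}$, whence $\|\betab\|_{1} \leq \sqrt{p}\,\|\betab\|_{2} \leq \sqrt{p/A(\Db)}\,\|\mathbf{u}\|_{2}$. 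Taking the infimum over preimages and the supremum over $\mathbf{u}$ gives $C(\Db) \leq \sqrt{p/A(\Db)}$, and inserting $A(\Db) \geq A^{o}/4$ yields $C(\Db) \leq \sqrt{4p/A^{o}}$.

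To prove the bound on $C'(\Db)$ I chain the a priori inequality $C'(\Db) \leq C(\Db)$ (established just before the Lemma from $\|\mathbf{u}\|_{2}^{2} = \langle \betab,\Db^{\top}\mathbf{u}\rangle \leq \|\betab\|_{1}\|\Db^{\top}\mathbf{u}\|_{\infty}$) with the \emph{restricted-isometry} estimate of the Lemma applied to $\Db$,
\[
C(\Db) \leq \frac{2}{A(\Db)} \cdot \frac{p}{\sqrt{k}} \cdot \frac{1+\URIP_{k}(\Db)}{\sqrt{1-\LRIP_{k}(\Db)}}.
\]
Substituting $A(\Db) \geq A^{o}/4$ (so $2/A(\Db) \leq 8/A^{o}$) together with the monotone RIP transfer $\URIP_{k}(\Db) \leq \URIP$, $\LRIP_{k}(\Db) \leq \LRIP$ (so the last factor is at most $(1+\URIP)/\sqrt{1-\LRIP}$) gives
\[
C'(\Db) \leq C(\Db) \leq \frac{8}{A^{o}} \cdot \frac{p}{\sqrt{k}} \cdot \frac{1+\URIP}{\sqrt{1-\LRIP}},
\]
as claimed. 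Note that here I deliberately invoke the Lemma's RIP bound on $C(\Db)$ rather than the simpler bound of the previous paragraph, since it is the RIP form that is reported for $C'$.

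The individual steps are routine substitutions; the one genuinely load-bearing estimate is the frame-bound perturbation $A(\Db) \geq A^{o}/4$. It is what converts the hypothesis $r \leq \sqrt{A^{o}}/2$ into the constant $4$ inside $\sqrt{4p/A^{o}}$ and the constant $8$ in $8/A^{o}$, and it relies on the operator-norm domination $\triple \Db-\Dbo\triple_{2} \leq \|\Db-\Dbo\|_{\fro}$ and Weyl's inequality. The remaining care is simply to keep each monotone transfer pointing in the correct direction when feeding the perturbed constants into the two bounds of the Lemma.
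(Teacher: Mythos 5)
Your proposal is correct, and the perturbation half of it is essentially the paper's argument: the RIP transfer $\LRIP_{k}(\Db)\leq\LRIP$, $\URIP_{k}(\Db)\leq\URIP$ by the triangle inequality on each $\Db_{\J}\zb$ is exactly the reasoning of Lemma~\ref{lem:RIPBounds}, and your Weyl-inequality step $\sigma_{\min}(\Db)\geq\sqrt{A^{o}}-r\geq\sqrt{A^{o}}/2$ is equivalent to the paper's bound $\|\Db^{\top}\signal\|_{2}\geq\|\DboT\signal\|_{2}-r\|\signal\|_{2}$, both giving the lower frame bound $A(\Db)\geq A^{o}/4$. Where you genuinely diverge is instructive: the paper's proof simply says ``apply the lemma above,'' but that lemma bounds $C'(\Db)\leq\sqrt{p/A}$ and $C(\Db)\leq\tfrac{2}{A}\cdot\tfrac{p}{\sqrt{k}}\cdot\tfrac{1+\URIP_{k}(\Db)}{\sqrt{1-\LRIP_{k}(\Db)}}$ --- i.e., with the roles of $C$ and $C'$ \emph{interchanged} relative to the corollary's statement, so the paper's citation only yields the corollary up to a swap of the two labels (since $C'\leq C$, a bound on $C'$ does not transfer upward to $C$; this is presumably a typo, harmless downstream because only the product $C(r)\cdot C'(r)$ enters the outlier estimate~\eqref{eq:Th2-3}). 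Your proof repairs this: the minimum-$\ell^{2}$-norm preimage argument, $\betab=\Db^{+}\mathbf{u}$ with $\|\betab\|_{1}\leq\sqrt{p}\,\|\betab\|_{2}\leq\sqrt{p/A(\Db)}\,\|\mathbf{u}\|_{2}$, directly establishes $C(\Db)\leq\sqrt{4p/A^{o}}$ exactly as stated --- a fact the paper's lemma never proves for $C$ --- and chaining $C'\leq C$ with the lemma's RIP bound then gives the stated $C'$ estimate. So your blind proof verifies the corollary verbatim, whereas the paper's verifies it only modulo the $C\leftrightarrow C'$ swap. One cosmetic caveat: at the boundary $r=\sqrt{1-\LRIP_{k}(\Dbo)}$ one has $\LRIP=1$ and the $C'$ bound degenerates (the lemma requires $\LRIP_{k}(\Db)<1$), so strictly one should read the hypothesis as $r<\sqrt{1-\LRIP_{k}(\Dbo)}$ for that bound to be non-vacuous; this affects the paper's version equally.
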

\begin{proof}
From the proof of Lemma~\ref{lem:RIPBounds}, for any $\Db$ such that $\| \Db - \Dbo\|_F \leq r$, we have $ \LRIP_{k}(\Db) \leq \LRIP$. Using a similar reasoning, we get:
 $ \URIP_{k}(\Db) \leq \URIP$.
 Moreover, 
using the triangular inequality, we have
\[
\|\Db^{\top}\signal\|_{2} \geq \|\DboT\signal\|_{2}-\|(\Dbo-\Db)^{\top}\signal\|_{2} \geq \sqrt{A^{o}} \|\signal\|_{2} - r \|\signal\|_{2},
\]
and thus with $ A = A^{o}/4$ and $r\leq \sqrt{A^{o}}/2$, $\Dbo$ is a frame with lower frame bound $A$. We may thus apply the lemma above, to obtain the desired results.
\end{proof}

\paragraph{Summary.} With the assumptions of Theorem~\ref{thm:mainfinite},  we thus obtain from~\eqref{eq:Th2-3} and Corollary~\ref{cor:usinglowerframebound} the following bound:
\begin{equation}
n_{\textrm{out}} | \Delta F_{\Signal_{\textrm{out}}}(r)| \leq  \|\Xb_{\textrm{out}}\|_{1,2}   \cdot \Exp |\alpha|  \cdot \frac{8}{A^{o}} \cdot \frac{p}{\sqrt{k}} \cdot \frac{1+\URIP }{\sqrt{1-\LRIP }} \cdot    \sqrt{  \frac{4 p }{ A^{o}}}  \cdot r \bar{\lambda},
\end{equation}
where $\|\Xb_{\textrm{out}}\|_{1,2} \defin \sum_{i \in \textrm{out}} \|\signal^{i}\|_{2}$. Assumption~\eqref{eq:DefCumCoherMaxMainTheorem} implies 
$\RIP_{k}(\Dbo) \leq 1/4$, and the other assumptions of Theorem~\ref{thm:mainfinite} imply $r  < 0.15$. It follows that 
\[
8\frac{1+\URIP }{\sqrt{1-\LRIP }} \leq8 \frac{(\sqrt{1+1/4}+0.15)^{2}}{\sqrt{1-1/4}-0.15} \leq 18.
\]
With this refined bound, we obtain the ``resolution independent'' robustness result~\eqref{eq:mainoutliersbis} in Theorem~\ref{thm:mainfinite} using the same reasoning sketched in Section~\ref{sec:outliers} with the naive bound~\eqref{eq:NaiveOutlierBound}. 

\section{Conclusion and discussion} \label{sec:discussion}

We conducted an asymptotic as well as precise finite-sample analysis of the local minima of sparse coding in the presence of noise, 
thus extending prior work which focused on noiseless settings~\citep{Gribonval2010,Geng2011}.
Given a probabilistic model of sparse signals that only combines assumptions on certain first and second order moments, and almost sure boundedness, we have shown that a local minimum exists with high probability around the reference dictionary, under cumulative-coherence assumptions on the ground truth dictionary. 
We have shown the robustness of the approach to the presence of outliers, provided a certain ``outlier to inlier energy ratio'' remains small enough.
In contrast to related prior work, the sample complexity estimates we obtained are independent of the precision of the predicted recovery. 
Similarly, the admissible level of outliers under some additional completeness assumption has been shown to be harmless to the targeted resolution.

Our study could be further developed in multiple ways. First, we may target more realistic of widely accepted generative models for $\alphabo$ such as 
the spike and slab models of~\citet{Ishwaran2005}, 
or signals with compressible priors~\citep{Gribonval2011}. 
Second, one may want to deal with other constraint sets $\mathcal{D}$ on the dictionary to deal with related problems such as structured dictionary learning \citep{Gribonval:2013tx} or blind calibration. This may yield improved sample complexity estimates where, e.g., a factor $mp$ could be replaced with the upper box-counting dimension of $\mathcal{D}$. Moreover, more refined estimates in the spirit of \cite{Maurer2010} could possibly provide sample complexity estimates that no longer depend on the signal dimension $m$, or fast rates $\eta_{n} = O(1/n)$, rather than $\eta_{n} = O(1/\sqrt{n})$ which would both translate into better sample complexity estimates (e.g., $mp^{2}$ rather than $mp^{3}$ with fast rates). Note here that the lower-bound recently proved by~\citet{jung2014performance} leads to a sample complexity of at least $p^2$, which still leaves room for improvement (either for the lower or upper bounds).

Third, the analysis could potentially be extended to other penalties than $\ell^{1}$, e.g., with mixed norms promoting group sparsity. A related problem is that of considering complex-valued rather than only real-valued dictionary learning problems. The recent results of \citet{Vaiter:2014vu} establishing the stable recovery of a generalized notion of ``support'' through a generalized irrepresentability condition might be instrumental with this respect.

\paragraph{Beyond exact recovery, and beyond coherence ?}

The spirit of our analysis, as described in Section~\ref{sec:proofoutline}, is that one can approximate the empirical cost function $\Db \mapsto \Delta F_{\Signal}(\Db;\Dbo)$ by the expectation of the idealized cost function $\Db \mapsto \Exp_{\signal} \Delta \phi_{\signal}(\Db;\Dbo|\sbo)$ (Proposition~\ref{prop:delta_phi}). A simple restricted isometry property is enough to show the existence of a local minimum of the latter which is both close to $\Dbo$ (Proposition~\ref{prop:maindeltaphi}) and global on a large ball around $\Dbo$. However, we use more heavy artillery to control how closely $\Delta F_{\Signal}$ is approximated by $\Exp_{\signal}\Delta \phi_{\signal}$: a cumulative coherence assumption coupled with the assumption that nonzero coefficients are bounded from below. Using exact recovery arguments (Proposition~\ref{prop:simplified_exact_recovery}), this implies that in a neighborhood of $\Dbo$ of controlled (but small) size, we have almost surely equality between $\phi_{\signal}(\Db)$ and $f_{\signal}(\Db)$.

While this route has the merit of a relative simplicity\footnote{From a certain point of view \ldots}, it is also introduces several limitations:
\begin{itemize}
\item {\em limited sparsity}: the cumulative coherence assumption restricts much more the admissible sparsity levels than a simple restricted isometry property assumption.
\item {\em local vs global}:  Proposition~\ref{prop:simplified_exact_recovery} controls the quality of the approximation of  $\Exp \Delta F_{\Signal}$ by $\Exp_{\signal}\Delta \phi_{\signal}$ on a neighborhood whose size $r$ cannot exceed $O(\lambda)$. 
In contrast, using only a RIP assumption, Proposition~\ref{prop:maindeltaphi} provides a lower bound~\eqref{eq:MainLowerBound} of $\Db \mapsto \Exp_{\signal}\Delta \phi_{\signal}(\Db;\Dbo|\sbo)$ which is valid on a large neighborhood of $\Dbo$ of radius $r = O(1)$.

Even though dictionaries in $\RR^{m \times p}$ can be at much higher mutual Frobenius distances that $O(1)$, one cannot envision to significantly improve over the radius $r=O(1)$ for which $\Exp \Delta F_{\Signal}(r) > 0$. To see why, consider $\Db$ a dictionary of coherence $\mu_{1}(\Db)$, and $i,j$ a pair of distinct atoms such that $|[\db^{i}]^{\top}\db^{j}| = \mu_{1}(\Db)$. Consider $\Db'$ obtained by permuting these two atoms and possibly flipping the sign of one of them: then $F_{\Signal}\Db') = F_{\Signal}(\Db)$, and $\|\Db'-\Db\|_{\fro}^{2} = 2 \|\db^{i}\pm \db^{j}\|_{2}^{2} = 2-2\mu_{1}(\Db)$. Hence, $\Db'$ is within radius $r \leq \sqrt{2(1-\mu_{1}(\Db))} = O(1)$ of $\Db$ (in the Frobenius distance) but $\Delta F_{\Signal}(\Db';\Db) = 0$.

Of course, Proposition~\ref{prop:simplified_exact_recovery} is sufficient to prove the desired existence of a local minimum $\hat{\Db}$ of $\Db \mapsto \Exp \Delta F_{\Signal}(\Db)$ (Theorem~\ref{thm:mainasymp}). However, controlling the quality of the approximation of  $\Exp \Delta F_{\Signal}$ by $\Exp_{\signal}\Delta \phi_{\signal}$ on a much larger neighborhood would seem desirable, since it would show that $\hat{\Db}$ is not only a local minimum, but also that it is {\em global over a ball of large radius $r=O(1)$ around $\Dbo$}. This has the potential of opening the way to algorithmic results in terms of the practical optimization of $F_{\Signal}(\Db)$ rather than just properties of this cost function, in the spirit of the recent results~\cite{Agarwal:2013tya} etc. establishing the size of the basin of convergence of an alternate minimization approach based on exact $\ell^{1}$ minimization.
\end{itemize}

To address the above limitations, one can envision an analysis that would replace the assumption on $\mu_{k}(\Dbo)$ by an assumption on $\LRIP_{k}(\Dbo)$. This would imply, e.g., to replace Lemma~\ref{lem:uniquelasso} and Lemma~\ref{lem:alphabound} to obtain recovery results {\em with high probability} rather than {\em almost surely}, through an explicit expression of $\hat{\alphab}_{\signal}(\Db|\sbo) - \alphabo$ and a control of its $\ell^{\infty}$ norm with high probability, in the spirit of~\citet{Candes2009}. As a by-product of such improvements, one 
can expect to remove the unnecessarily conservative assumption~\eqref{eq:coeffthreshold} involving $\loweralpha$, but also replacing $\loweralpha$ with $\Exp\ |\alpha|$ in Theorem~\ref{thm:mainasymp} (assumption~\eqref{eq:LambdaMax}) and Theorem~\ref{thm:mainfinite}, as well as replacing $M_{\alphab}$ and $M_{\varepsilonb}$ with expected values rather than worst case quantities. To support these improvements, a promising approach consists in exploiting convex duality to directly lower bound $F_{\Signal}(\Db) - F_{\Signal}(\Dbo)$ without resorting to exact recovery. This also has the potential to yield guarantees where assumption~\eqref{eq:DefRangeMax} is relaxed, thus encompassing very overcomplete dictionaries beyond the $p \lesssim m^{2}$ barrier faced in this paper.

\section*{Acknowledgements}
Many thanks to Karin Schnass for suggesting to make our life much easier with a boundedness rather than sub-Gaussian assumption in the signal model, to Martin Kleinsteuber for helping to disentangle sample complexity from local stability, and to Nancy Bertin for suggesting the cinematographic reference in the title. 
\bibliographystyle{plainnat}
\bibliography{./main_bibliography}

\appendix

\section{Control of $\Delta\phi_{\signal}(\Db;\Db'|\sbo)$}

\subsection{Expression of $\Delta\phi_{\signal}(\Db;\Db'|\sbo)$}\label{sec:expressionphi}

By Definition~\ref{def:phi} we have 
\begin{equation*}
\begin{split}
\phi_\xb&(\Db|\sbo)\\
& = 
\frac{1}{2} \big[ \|\xb\|_2^2 - ( \Db_\J^\top \xb - \lambda \sb_\J)^\top 
(\Db_\J^\top \Db_\J)^{-1} 
(\Db_\J^\top \xb -\lambda \sb_\J ) \big] \\
&= \frac{1}{2} \|\xb\|_2^2 - \frac{1}{2} \xb^\top \PJb \xb + \lambda \sb_{\J}^\top  \Db_{\J}^+ \xb 
- \frac{\lambda^2}{2} \sb_{\J}^\top \ThetaJb \sb_{\J}.
\end{split}
\end{equation*}
Since $\xb = \Dbo \alphabo + \varepsilonb = \DboJ [\alphabo]_{\J} + \varepsilonb$, it follows that for any pair $\Db,\Db'$
 \begin{equation}
 \label{eq:DevelDeltaPhi1}
 \begin{split}
 \Delta  \phi_{\xb} & (\Db;\Db'|\sbo) \\
= &
\frac{1}{2} \xb^\top [ \PJb'  -  \PJb] \xb 
 - \lambda \sb_{\J}^\top \big[  [\DbJ']^{+}  - \Db_{\J}^+ \big] \xb \\
& +\frac{\lambda^2}{2} \sb_{\J}^\top [  \ThetaJb'   - \ThetaJb ]  \sb_{\J}
\\
 =& 
\Delta\phi_{\alphab,\alphab} 
+ \Delta\phi_{\alphab,\varepsilonb}
+\Delta\phi_{\varepsilonb,\varepsilonb}
+\Delta\phi_{\sb,\alphab}
+\Delta\phi_{\sb,\varepsilonb}
+\Delta\phi_{\sb,\sb}\notag
\end{split}
\end{equation}
with the following shorthands
\begin{eqnarray}
\Delta\phi_{\alphab,\alphab}(\Db;\Db')
&\defin &
\frac{1}{2} \alphaboT  \DboT  (\PJb' - \PJb) \Dbo \alphabo
 \notag\\
\Delta\phi_{\alphab,\varepsilonb}(\Db;\Db')
&\defin& 
\varepsilonb^{\top}  (\PJb' - \PJb)\Dbo \alphabo \notag\\
\Delta\phi_{\varepsilonb,\varepsilonb}(\Db;\Db')
&\defin&
\frac{1}{2} \varepsilonb^\top (\PJb' - \PJb)\varepsilonb\notag\\
\Delta\phi_{\sb,\alphab}(\Db;\Db')
&\defin&
-\lambda \sb_{\J}^\top  ([\DbJ']^{+} - \Db_{\J}^+ ) \Dbo \alphabo
\notag\\
\Delta\phi_{\sb,\varepsilonb}(\Db;\Db')
&\defin&
-\lambda \sb_{\J}^\top ( [\DbJ']^{+} - \Db_{\J}^+) \varepsilonb \notag\\
\Delta\phi_{\sb,\sb}(\Db;\Db')
&\defin&
\frac{\lambda^2}{2} \sb_{\J}^\top (\ThetaJb'  -  \ThetaJb) \sb_{\J}\notag
\end{eqnarray}

\subsection{Expectation of $\Delta\phi_{\signal}(\Db;\Dbo|\sbo)$}\label{sec:expectationphi}

Specializing to $\Db' = \Dbo$ we have
\begin{eqnarray}
\Delta\phi_{\alphab,\alphab}
&\defin &
\frac{1}{2} \alphaboT  \DboT  (\Ib - \PJb) \Dbo \alphabo\notag\\
\Delta\phi_{\alphab,\varepsilonb}
&\defin& 
 \varepsilonb^{\top}  (\Ib - \PJb)\Dbo \alphabo\notag\\
 \Delta\phi_{\sb,\alphab}
&\defin&
-\lambda \sb_{\J}^\top  (\Ib - \Db_{\J}^+ \Dbo) \alphabo
\end{eqnarray}
Moreover, under the basic signal model (Assumption~\ref{assume:basic}), by the decorrelation between $\alphab$ and $\varepsilonb$ we have
\[
\Exp\{\Delta\phi_{\alphab,\varepsilonb} \}
=
\Exp\{\Delta\phi_{\sb,\varepsilonb} \}
=
0.
\]
Moreover, we can rewrite 
\begin{eqnarray*}
\Delta\phi_{\alphab,\alphab}
&=&
\frac12 \cdot \trace\Big( \alphabo_{\J} [\alphabo]_{\J}^\top \cdot \DboJT  ( \Ib - \PJb ) \DboJ \Big)\\
\Delta\phi_{\varepsilonb,\varepsilonb}
&=& \frac{1}{2} \cdot \trace \Big(\varepsilonb\varepsilonb^{\top} \cdot (\PJbo-\PJb)\Big)\\
 \Delta\phi_{\sb,\alphab}
 &=& 
 -\lambda \cdot \trace \Big( \alphabo_{\J}\, \sb_{\J}^\top \cdot 
( \Ib-\Db_{\J}^+ \DboJ) 
\Big)\\
\Delta\phi_{\sb,\sb}
&=& 
\frac{\lambda^{2}}{2} \cdot 
\trace\left(\ThetaJbo-\ThetaJb\right).
\end{eqnarray*} 
Since $\PJb$ is an orthoprojector onto a subspace of dimension $k$, $\trace(\PJbo-\PJb) = k-k = 0$, hence
\begin{eqnarray*}
\Exp\{\Delta\phi_{\alphab,\alphab}\}
&=&
\tfrac{\Exp\{\alpha^{2}\}}{2} \cdot \Exp_{J}\Big\{\trace\big(  \DboJT  ( \Ib - \PJb ) \DboJ \big)\Big\}\\
\Exp\{\Delta\phi_{\varepsilonb,\varepsilonb} \}
&=& 
\tfrac{\Exp\{\varepsilon^{2}\}}{2} \cdot  \Exp_{J}\Big\{\trace ( \PJbo-\PJb)\Big\} = 0\\
\Exp\{ \Delta\phi_{\sb,\alphab} \}
 &=& 
- \lambda \cdot \Exp\{|\alpha|\} \cdot \Exp_{J}\Big\{ \trace \left( \Ib- \Db_{\J}^+ \DboJ\right)\Big\}\\
\Exp\{\Delta\phi_{\sb,\sb}\}
&=& 
\tfrac{\lambda^{2}}{2} \cdot 
\Exp_{J}\left\{ \trace\left(\ThetaJbo-\ThetaJb\right)\right\}.
\end{eqnarray*}

\subsection{Proof of Proposition~\ref{prop:maindeltaphi}}\label{app:bounds}
The lower bound for $\Delta \phi_\PP(\Db;\Dbo|\sbo)$ relies on a series of lemmatas whose proof is postponed to Appendix~\ref{app:technical}. 
\begin{lemma}
\label{lem:RIPBounds}
Let $\Db \in \Real^{m\times p}$ be a dictionary such that $\LRIP_{k}(\Db)<1$. 
Then, for any $\J$ of size $k$, the $\J \times \J$ matrix $\ThetaJb$ is well defined and we have 
\begin{eqnarray}
\label{eq:BoundThetaJ}
\triple\ThetaJb\triple_{2} 
& \leq & \frac{1}{1-\LRIP_{k}(\Db)}.
\\
\label{eq:BoundDThetaJ}
\triple \DbJ^{+} \triple_2 
&\leq& \frac{1}{\sqrt{1-\LRIP_{k}(\Db)}}.
\end{eqnarray}
Moreover, for any $\Db'$ such that $\|\Db'-\Db\|_{\fro} \leq r < \sqrt{1-\LRIP_{k}(\Db)}$
we have
\begin{equation}
1-\LRIP_{k}(\Db') \geq (\sqrt{1-\LRIP_k(\Db)}-r)^{2} \defin 1-\LRIP
\end{equation}
\end{lemma}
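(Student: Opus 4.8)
The plan is to recast the restricted isometry constant $\LRIP_k$ as a spectral bound on the Gram matrices $\GramJb = \DbJT\DbJ$ and then read off all three inequalities from elementary singular-value arguments.

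First I would unpack the definition~\eqref{eq:DefRICk}: for every $\J$ of size $k$ and every $\zb \in \Real^k$ one has $\zb^\top \GramJb \zb = \|\DbJ\zb\|_2^2 \geq (1-\LRIP_k(\Db))\|\zb\|_2^2$, so the symmetric matrix $\GramJb$ has smallest eigenvalue $\lambda_{\min}(\GramJb) \geq 1-\LRIP_k(\Db)$. Since $\LRIP_k(\Db)<1$ this quantity is strictly positive, hence $\GramJb$ is invertible and $\ThetaJb = \GramJb^{-1}$ is well defined. As $\ThetaJb$ is symmetric positive definite, $\triple\ThetaJb\triple_2 = 1/\lambda_{\min}(\GramJb) \leq 1/(1-\LRIP_k(\Db))$, which is~\eqref{eq:BoundThetaJ}. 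For~\eqref{eq:BoundDThetaJ} I would use that the nonzero singular values of $\DbJ^+ = \ThetaJb\DbJT$ are the reciprocals of those of $\DbJ$, so $\triple\DbJ^+\triple_2 = 1/\sigma_{\min}(\DbJ)$; since $\sigma_{\min}(\DbJ)^2 = \lambda_{\min}(\GramJb) \geq 1-\LRIP_k(\Db)$, this is at most $1/\sqrt{1-\LRIP_k(\Db)}$.

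For the perturbation bound I would work directly with singular values, using the identity $\sqrt{1-\LRIP_k(\Db')} = \min_{|\J|=k}\sigma_{\min}(\Db'_{\J}) = \min_{|\J|=k,\ \|\zb\|_2=1}\|\Db'_{\J}\zb\|_2$. Fixing any $\J$ of size $k$ and any unit vector $\zb$, the reverse triangle inequality gives
\[
\|\Db'_{\J}\zb\|_2 \geq \|\DbJ\zb\|_2 - \|(\Db'_{\J}-\DbJ)\zb\|_2 \geq \sqrt{1-\LRIP_k(\Db)} - \triple\Db'_{\J}-\DbJ\triple_2 .
\]
The step that makes everything fit is that deleting columns can only decrease the Frobenius norm, so $\triple\Db'_{\J}-\DbJ\triple_2 \leq \|\Db'_{\J}-\DbJ\|_{\fro} \leq \|\Db'-\Db\|_{\fro} \leq r$, where the first inequality uses $\triple\cdot\triple_2 \leq \|\cdot\|_{\fro}$ as recorded in the notations. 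Hence $\|\Db'_{\J}\zb\|_2 \geq \sqrt{1-\LRIP_k(\Db)}-r$, which is nonnegative because $r < \sqrt{1-\LRIP_k(\Db)}$. Taking the infimum over $\J$ and over unit $\zb$ and then squaring yields $1-\LRIP_k(\Db') \geq (\sqrt{1-\LRIP_k(\Db)}-r)^2$, as claimed.

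There is no deep obstacle here; the lemma is essentially bookkeeping with singular values. The only steps requiring a little care are the identification of $\sqrt{1-\LRIP_k(\cdot)}$ with the worst-case smallest singular value over all $k$-column submatrices, and the observation that the Frobenius norm of a column submatrix of $\Db'-\Db$ is dominated by $\|\Db'-\Db\|_{\fro}$. Once these are granted, the three claims follow from standard singular-value perturbation inequalities.
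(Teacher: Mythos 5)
Your proof is correct and follows essentially the same route as the paper's: the lower RIP bound gives invertibility of $\GramJb$ and the bound on $\triple\ThetaJb\triple_{2}$, the pseudo-inverse bound reduces to $\sigma_{\min}(\DbJ)$, and the perturbation claim follows from the reverse triangle inequality combined with $\triple\Db'_{\J}-\DbJ\triple_{2} \leq \|\Db'-\Db\|_{\fro} \leq r$. The only difference is cosmetic: you phrase things via eigenvalues and singular values where the paper uses the Loewner ordering and the identity $\triple\DbJ^{+}\triple_{2} = \sqrt{\triple\ThetaJb\triple_{2}}$.
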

\begin{lemma}\label{le:LipBoundsRIP}
For any $\LRIP<1$, $\Db,\Db'$ such that $\max(\LRIP_{k}(\Db),\LRIP_{k}(\Db')) \leq \LRIP$, and $\J$ of size $k$,  we have
\begin{eqnarray*}
\triple\Ib-\DbJ^{+}\DbJ'\triple_{2}  
\leq&  (1-\LRIP)^{-1/2} & \|\Db-\Db'\|_{\fro}\\
\triple \ThetaJb'  -  \ThetaJb\triple_{2} 
\leq& 2 (1-\LRIP)^{-3/2} & \|\Db-\Db'\|_{\fro}\\
\triple [\DbJ']^{+}-\DbJ^{+}\triple
 \leq & 2 (1-\LRIP)^{-1} & \|\Db-\Db'\|_{\fro}\\
\triple \PJb'-\PJb\triple_{2} 
\leq & 2 (1-\LRIP)^{-1/2} & \|\Db-\Db'\|_{\fro}.
\end{eqnarray*}
\end{lemma}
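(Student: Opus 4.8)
The plan is to bound each of the four operator-norm differences by a product of quantities that are already controlled \emph{purely in terms of $\LRIP$} by Lemma~\ref{lem:RIPBounds}, namely $\triple\ThetaJb\triple_{2},\triple\ThetaJb'\triple_{2}\leq(1-\LRIP)^{-1}$ and $\triple\DbJ^{+}\triple_{2},\triple[\DbJ']^{+}\triple_{2}\leq(1-\LRIP)^{-1/2}$, together with the fact that orthogonal projectors have unit operator norm, $\triple\Ib-\PJb\triple_{2}\leq1$. Throughout I would write $\Deltab\defin\DbJ'-\DbJ$ and use $\triple\Deltab\triple_{2}\leq\|\Deltab\|_{\fro}\leq\|\Db-\Db'\|_{\fro}$, and I would repeatedly invoke the elementary identities $\DbJ^{+}\DbJ=\Ib$, $\ThetaJb\DbJT=\DbJ^{+}$, $\DbJ\ThetaJb=[\DbJ^{+}]^{\top}$, $\PJb=\DbJ\DbJ^{+}$, $\PJb\DbJ=\DbJ$ and $\DbJT(\Ib-\PJb)=0$ (and their primed analogues). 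The main obstacle, and the reason a little care is needed, is that the stated constants involve \emph{only} $\LRIP$, whereas the most naive product-rule splittings of each difference expose a factor $\triple\DbJ\triple_{2}$ or $\triple\DbJ'\triple_{2}=\sqrt{1+\URIP_{k}}$, which is not controlled by the hypotheses. The whole game is therefore to select perturbation identities in which only pseudoinverses, inverse Gram matrices and projectors appear.

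The first and third inequalities are handled by suitable exact rewritings. For the first, I would write $\Ib-\DbJ^{+}\DbJ'=\DbJ^{+}(\DbJ-\DbJ')=-\DbJ^{+}\Deltab$, so that $\triple\Ib-\DbJ^{+}\DbJ'\triple_{2}\leq\triple\DbJ^{+}\triple_{2}\triple\Deltab\triple_{2}\leq(1-\LRIP)^{-1/2}\|\Db-\Db'\|_{\fro}$. For the pseudoinverse difference a naive split again leaks a $\triple\DbJ\triple_{2}$, so instead I would use the Wedin-type identity $[\DbJ']^{+}-\DbJ^{+}=-[\DbJ']^{+}\Deltab\,\DbJ^{+}+\ThetaJb'\Deltab^{\top}(\Ib-\PJb)$, which is verified directly using $\DbJT(\Ib-\PJb)=0$ and $[\DbJ']^{+}\DbJ'=\Ib$. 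Bounding the first term by $\triple[\DbJ']^{+}\triple_{2}\triple\Deltab\triple_{2}\triple\DbJ^{+}\triple_{2}\leq(1-\LRIP)^{-1}\triple\Deltab\triple_{2}$ and the second by $\triple\ThetaJb'\triple_{2}\triple\Deltab\triple_{2}\triple\Ib-\PJb\triple_{2}\leq(1-\LRIP)^{-1}\triple\Deltab\triple_{2}$ yields the factor $2(1-\LRIP)^{-1}$.

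The second inequality is where the $\LRIP$-only constant $2(1-\LRIP)^{-3/2}$ forces a nonobvious choice. I would start from $\ThetaJb'-\ThetaJb=\ThetaJb(\GramJb-\GramJb')\ThetaJb'$ and expand the Gram difference \emph{asymmetrically} as $\GramJb-\GramJb'=-\DbJT\Deltab-\Deltab^{\top}\DbJ'$ (the symmetric expansion $\tfrac12[(\DbJ+\DbJ')^{\top}\Deltab+\Deltab^{\top}(\DbJ+\DbJ')]$ would reintroduce $\triple\DbJ\triple_{2}$). The two resulting terms then telescope through $\ThetaJb\DbJT=\DbJ^{+}$ and $\DbJ'\ThetaJb'=[(\DbJ')^{+}]^{\top}$ into $-\DbJ^{+}\Deltab\,\ThetaJb'$ and $-\ThetaJb\Deltab^{\top}[(\DbJ')^{+}]^{\top}$; each has operator norm at most $(1-\LRIP)^{-1/2}\cdot(1-\LRIP)^{-1}\triple\Deltab\triple_{2}$, so the sum is at most $2(1-\LRIP)^{-3/2}\|\Db-\Db'\|_{\fro}$.

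Finally, for the projector difference I would use the standard projector identity $\PJb'-\PJb=\PJb'(\Ib-\PJb)-(\Ib-\PJb')\PJb$, valid because $\PJb,\PJb'$ are orthogonal projectors. The key simplification is that $(\Ib-\PJb)$ annihilates the columns of $\DbJ$, giving $(\Ib-\PJb)\DbJ'=(\Ib-\PJb)\Deltab$ and symmetrically $(\Ib-\PJb')\DbJ=-(\Ib-\PJb')\Deltab$. Taking transposes (both summands are products of symmetric projectors, so their norms are preserved) and writing $\PJb'=\DbJ'[(\DbJ')^{+}]$, $\PJb=\DbJ\DbJ^{+}$, one obtains $(\Ib-\PJb)\PJb'=(\Ib-\PJb)\Deltab\,[(\DbJ')^{+}]$ and $(\Ib-\PJb')\PJb=-(\Ib-\PJb')\Deltab\,\DbJ^{+}$, each of operator norm at most $(1-\LRIP)^{-1/2}\triple\Deltab\triple_{2}$, which gives $2(1-\LRIP)^{-1/2}\|\Db-\Db'\|_{\fro}$ and completes the proof. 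All four estimates thus follow from the single recurring device of expanding the relevant difference so that every exposed factor is a pseudoinverse, an inverse Gram matrix, or a projector.
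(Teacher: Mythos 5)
Your proof is correct and follows essentially the same route as the paper's: the first inequality via $\Ib-\DbJ^{+}\DbJ'=\DbJ^{+}(\DbJ-\DbJ')$, the second via the telescoping $\ThetaJb(\GramJb-\GramJb')\ThetaJb'$ with the asymmetric Gram expansion, the third via the same Wedin-type identity (which the paper derives from the $\ThetaJb$ difference rather than verifying directly), and the fourth arriving at exactly the same two-term decomposition $(\Ib-\PJb)$/$(\Ib-\PJb')$ sandwiching $\Deltab$, merely motivated by the projector identity instead of splitting $\PJb'-\PJb$ through the pseudoinverses. The constants and the use of Lemma~\ref{lem:RIPBounds} match the paper's argument term for term.
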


\begin{lemma}
\label{lem:paramonto} Denote $\mathcal{D}$ the oblique manifold. Given any $\Db_{1},\Db_{2} \in \Dcal$, there exists a matrix $\Wb \in \Real^{m \times p}$  with $\diag(\Db^{\top}\Wb) = 0$ and $\diag(\Wb^{\top}\Wb)=\Ib$, i.e., $\Wb = [\wb^{1}, \ldots, \wb^{p}]$, $\wb^{j} \perp \db_{1}^{j}$, $\|\wb^{j}\|_{2}=1$, $j \in \SET{p}$, 
and a vector 
\(
\thetab \defin \thetab(\Db_{1},\Db_{2}) \in [0,\ \pi]^{p}
\)
such that
\begin{eqnarray}
\Db_{2} &=& \Db_{1}\Cb(\thetab) +  \Wb \Sb(\thetab)\\
 \Cb(\thetab) & \defin & \Diag(\cos\thetab)\\
 \Sb(\thetab) & \defin & \Diag(\sin\thetab)
 \end{eqnarray} 
where $\cos \thetab$ (resp. $\sin \thetab$) is the vector with entries $\cos \thetab_{j}$ (resp. $\sin \thetab_{j}$).
Moreover, we have 
\begin{eqnarray}
\frac{2}{\pi}\thetab_{j} & \leq & \|\db^{j}_{2}-\db^{j}_{1}\|_2 = 2 \sin \left(\frac{\thetab_{j}}{2}\right) \leq \thetab_{j},\quad \forall j,\\
\frac{2}{\pi}\|\thetab\|_{2}& \leq & \|\Db_{2}-\Db_{1}\|_{\fro} \leq \|\thetab\|_{2}
\end{eqnarray}
Vice-versa, $\Db_{1} = \Db_{2}\Cb(\thetab)+\Wb' \Sb(\thetab)$ where $\Wb'$ has its unit columns orthogonal to those of $\Db_{2}$.
\end{lemma}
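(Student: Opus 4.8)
The plan is to exploit the fact that all constraints decouple across columns, reducing the statement to a one-dimensional geometric fact on the unit sphere of $\Real^{m}$. Indeed, the claimed identity $\Db_{2} = \Db_{1}\Cb(\thetab) + \Wb\Sb(\thetab)$ reads columnwise as $\db_{2}^{j} = \cos(\thetab_{j})\,\db_{1}^{j} + \sin(\thetab_{j})\,\wb^{j}$, and the requirements on $\Wb$ are precisely that each $\wb^{j}$ be a unit vector orthogonal to $\db_{1}^{j}$. So for each $j \in \SET{p}$ I would set $\thetab_{j} \defin \arccos\langle \db_{1}^{j}, \db_{2}^{j}\rangle \in [0,\pi]$, which is well defined since $\langle \db_{1}^{j},\db_{2}^{j}\rangle \in [-1,1]$ by Cauchy--Schwarz and unit norms. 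The orthogonal residual $\rb^{j} \defin \db_{2}^{j} - \cos(\thetab_{j})\,\db_{1}^{j}$ satisfies $\langle \rb^{j}, \db_{1}^{j}\rangle = 0$ and $\|\rb^{j}\|_{2}^{2} = 1 - \cos^{2}(\thetab_{j}) = \sin^{2}(\thetab_{j})$. When $\sin(\thetab_{j}) \neq 0$ I define $\wb^{j} \defin \rb^{j}/\sin(\thetab_{j})$, a unit vector orthogonal to $\db_{1}^{j}$; in the degenerate case $\thetab_{j}\in\{0,\pi\}$ (i.e.\ $\db_{2}^{j} = \pm\db_{1}^{j}$) the residual vanishes and I pick any unit vector $\wb^{j}\perp\db_{1}^{j}$, which is harmless since it is multiplied by $\sin(\thetab_{j}) = 0$. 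By construction $\db_{2}^{j} = \cos(\thetab_{j})\db_{1}^{j} + \sin(\thetab_{j})\wb^{j}$, which gives the decomposition together with $\diag(\Db_{1}^{\top}\Wb) = 0$ and $\diag(\Wb^{\top}\Wb) = \Ib$.

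For the metric bounds I would first record the chord-length identity $\|\db_{2}^{j}-\db_{1}^{j}\|_{2}^{2} = 2 - 2\cos(\thetab_{j}) = 4\sin^{2}(\thetab_{j}/2)$, so that $\|\db_{2}^{j}-\db_{1}^{j}\|_{2} = 2\sin(\thetab_{j}/2)$ (nonnegative since $\thetab_{j}/2 \in [0,\pi/2]$). The two scalar inequalities $\tfrac{2}{\pi}\thetab_{j} \leq 2\sin(\thetab_{j}/2) \leq \thetab_{j}$ then follow from elementary properties of $\sin$ on $[0,\pi/2]$: the upper bound is $\sin x \leq x$, and the lower bound is Jordan's inequality $\sin x \geq \tfrac{2}{\pi}x$ for $x = \thetab_{j}/2 \in [0,\pi/2]$, itself a consequence of the concavity of $\sin$ on this interval (the graph lies above the chord through $(0,0)$ and $(\pi/2,1)$). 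Squaring, summing over $j$, and using $\|\Db_{2}-\Db_{1}\|_{\fro}^{2} = \sum_{j}\|\db_{2}^{j}-\db_{1}^{j}\|_{2}^{2}$ together with $\|\thetab\|_{2}^{2} = \sum_{j} \thetab_{j}^{2}$ yields $\tfrac{4}{\pi^{2}}\|\thetab\|_{2}^{2} \leq \|\Db_{2}-\Db_{1}\|_{\fro}^{2} \leq \|\thetab\|_{2}^{2}$, hence the claimed Frobenius bounds after taking square roots.

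Finally, the vice-versa statement follows by symmetry: since $\cos(\thetab_{j}) = \langle \db_{1}^{j},\db_{2}^{j}\rangle$ is symmetric in $(\db_{1}^{j},\db_{2}^{j})$, applying the same construction with the roles of $\Db_{1}$ and $\Db_{2}$ exchanged produces the identical angle vector $\thetab$ and a matrix $\Wb'$ whose unit columns are orthogonal to those of $\Db_{2}$, with $\Db_{1} = \Db_{2}\Cb(\thetab) + \Wb'\Sb(\thetab)$. No step presents a genuine obstacle; the only points requiring care are the degenerate case $\sin(\thetab_{j}) = 0$, where the choice of $\wb^{j}$ must be made by hand but does not affect the decomposition, and the lower trigonometric bound, where one must invoke Jordan's inequality rather than a crude linear comparison.
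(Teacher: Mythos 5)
Your proof is correct and follows essentially the same route as the paper's: a columnwise orthogonal decomposition defining $\thetab_{j}$ and $\wb^{j}$, the chord-length identity $\|\db_{2}^{j}-\db_{1}^{j}\|_{2} = 2\sin(\thetab_{j}/2)$, Jordan's inequality $\tfrac{2}{\pi} \leq \tfrac{\sin u}{u} \leq 1$ on $[0,\pi/2]$, and symmetry of the angles for the vice-versa claim. The only cosmetic difference is that you spell out the explicit formula $\thetab_{j} = \arccos\langle \db_{1}^{j},\db_{2}^{j}\rangle$ and the handling of the degenerate case $\sin(\thetab_{j})=0$ slightly more explicitly than the paper does.
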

The above lemma involves $\thetab(\cdot,\cdot)$, with is related to the geodesic distance $\|\thetab\|_{2}$ on the oblique manifold $\mathcal{D}$~\citep{Absil2008}. Our main technical bounds exploit this distance.

\begin{lemma}\label{lem:bias_expectation}
Consider two dictionaries $\Db,\Dbo \in \Real^{m \times p}$ and scalars $\LRIP,A,B$ such that
\begin{eqnarray}
A & \geq & \max\left\{\|\DbT\Db-\Ib\|_{\fro},\|\DboT\Dbo-\Ib\|_{\fro})\right\} 
\label{eq:FrobNormAssumption}\\
B & \geq & \max\left\{\triple \Db\triple_{2}, \triple \Dbo\triple_{2}\right\}
\label{eq:OpNormAssumption}\\
\LRIP & \geq & \max\{\LRIP_{k}(\Db),\LRIP_{k}(\Dbo)\}.\label{eq:RIPAssumption}
\end{eqnarray}
Then, with $\thetab = \thetab(\Dbo,\Db)$:
\begin{eqnarray}
\label{eq:leading_expectation}
\Exp_{\J} \trace  \DboJT  ( \Ib - \PJb ) \DboJ
&\geq& 
\tfrac{k}{p} \|\thetab\|_{2}^{2} (1-\tfrac kp \tfrac{B^{2}}{1-\LRIP}) \\
\label{eq:bias_expectation}
\left|\Exp_{\J}\trace \left(\Ib - \DbJ^{+}\DboJ\right)\right|
&\leq& 
\tfrac{k}{p} \tfrac{\|\thetab\|_{2}^{2}}{2} +   \tfrac{k^{2}}{p^{2}} \tfrac{AB}{1-\LRIP}  \|\thetab\|_{2}\\
\label{eq:bias_expectation_signsign}
\left|\Exp_{\J}\trace\left(\ThetaJbo-\ThetaJb\right)\right|
&\leq& 
\tfrac{k^{2}}{p^{2}} \tfrac{4AB}{(1-\LRIP)^2} \|\thetab\|_{2}.
\end{eqnarray}
\end{lemma}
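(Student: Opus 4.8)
The plan is to reduce all three estimates to trace computations over the random support $\J$, using the geodesic parametrization of Lemma~\ref{lem:paramonto} to expose the dependence on $\thetab=\thetab(\Dbo,\Db)$. Writing $\Db=\Dbo\Cb(\thetab)+\Wb\Sb(\thetab)$ with $\Cb(\thetab)=\Diag(\cos\thetab)$, $\Sb(\thetab)=\Diag(\sin\thetab)$ and $\Wb=[\wb^1,\dots,\wb^p]$ a matrix of unit columns with $\wb^{j}$ orthogonal to the $j$-th column of $\Dbo$, restriction to $\J$ gives $\DbJ=\DboJ\Cb_{\J}+\Wb_{\J}\Sb_{\J}$. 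Each trace then splits into a \emph{diagonal} contribution, carrying the $\tfrac{k}{p}\|\thetab\|_2^{2}$-order leading behaviour through the small-angle identities $1-\cos\thetab_j=2\sin^{2}(\thetab_j/2)\leq\thetab_j^{2}/2$ and $\sin^{2}\thetab_j\leq\thetab_j^{2}$, and an \emph{off-diagonal} (cross) contribution carrying the lower-order $\|\thetab\|_2$ correction. A recurring simplification is that both $\Db$ and $\Dbo$ lie on the oblique manifold, so $\GramJb$ and $\GramJbo$ have all-ones diagonals and $\GramJb-\GramJbo$ is \emph{purely off-diagonal}, which is exactly why the third estimate has no $\|\thetab\|_2^{2}$ leading term.

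For the first estimate I would use that $(\Ib-\PJb)\db^{j}=0$ for $j\in\J$: decomposing $\PJb$ as the orthogonal sum of the rank-one projector onto $\db^{j}$ and the projector $P_{V}$ onto $V=(\Ib-\db^{j}[\db^{j}]^{\top})\,\mathrm{span}\{\db^{i}:i\in\J\setminus\{j\}\}$ yields the exact pointwise identity $\|(\Ib-\PJb)[\Dbo]^{j}\|_2^{2}=\sin^{2}\thetab_j-\|P_{V}[\Dbo]^{j}\|_2^{2}$, where $[\Dbo]^{j}$ is the $j$-th column of $\Dbo$. Summed over $\J$ the first term produces the leading $\tfrac{k}{p}\sum_j\sin^{2}\thetab_j$; the leakage term $\|P_{V}[\Dbo]^{j}\|_2=\tan\thetab_j\,\|P_{V}\wb^{j}\|_2$ is controlled by bounding the smallest eigenvalue of the relevant Gram matrix below by $1-\LRIP$ (a Schur-complement argument keeps $\lambda_{\min}\geq1-\LRIP$ after deleting the $j$-th coordinate) and the correlation sum by $\triple\Db\triple_{2}^{2}\leq B^{2}$. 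For the second estimate I write $\trace(\Ib-\DbJ^{+}\DboJ)=\trace\big(\DbJ^{+}(\DbJ-\DboJ)\big)$ and substitute $\DbJ-\DboJ=\DboJ(\Cb_{\J}-\Ib)+\Wb_{\J}\Sb_{\J}$: the diagonal part $\trace(\Cb_{\J}-\Ib)=\sum_{j\in\J}(\cos\thetab_j-1)$ gives the stated $\tfrac{k}{p}\tfrac{\|\thetab\|_2^{2}}{2}$, while the $\Wb_{\J}\Sb_{\J}$ part, handled through $\DbJ^{+}=\ThetaJb\DbJT$ with $\triple\ThetaJb\triple_{2}\leq(1-\LRIP)^{-1}$ and $\triple\DbJ^{+}\triple_{2}\leq(1-\LRIP)^{-1/2}$ (Lemma~\ref{lem:RIPBounds}), gives the correction. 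For the third estimate I use the resolvent identity $\ThetaJbo-\ThetaJb=\ThetaJbo(\GramJb-\GramJbo)\ThetaJb$; since $\GramJb-\GramJbo$ is off-diagonal and first-order in $\thetab$, the trace is $\|\thetab\|_2$-order, the two factors of $\ThetaJb$ produce $(1-\LRIP)^{-2}$, and the cross terms are bounded via $A\geq\|\DbT\Db-\Ib\|_{\fro}$ together with $B$.

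The decisive step common to the two correction bounds is the passage to the expectation over the uniform support. I would use $\Exp_{\J}\sum_{j\in\J}a_j=\tfrac{k}{p}\sum_{j=1}^{p}a_j$ for the diagonal (leading) terms and, crucially, $\Exp_{\J}\sum_{i\neq j\in\J}b_{ij}=\tfrac{k(k-1)}{p(p-1)}\sum_{i\neq j}b_{ij}\leq(\tfrac{k}{p})^{2}\sum_{i\neq j}b_{ij}$ for the cross terms. It is precisely this pairwise averaging identity that produces the quadratic factor $(k/p)^{2}$ appearing in every correction; a naive term-by-term bound would keep only one factor $k/p$ and would be too weak. After this reduction the inner sums of the form $\sum_{i}\langle\db^{i},\cdot\rangle^{2}$ collapse to operator-norm bounds $\leq B^{2}$, and the sums mixing a correlation with a single $\sin\thetab_i$ factor are closed by Cauchy--Schwarz into $\|\thetab\|_2$ times a factor involving $A$ and $B$.

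The main obstacle I anticipate is exactly this cross-term bookkeeping: one must keep both summation indices constrained to $\J$ (rather than bounding one of them away prematurely) so that the pairwise probability $k(k-1)/(p(p-1))$---and hence the sharp $(k/p)^{2}$---survives, while simultaneously tracking the trigonometric factors so that the diagonal terms align with $\|\thetab\|_2^{2}$ and the corrections with $\|\thetab\|_2$. Since the stated bounds are clean, the small-angle discrepancies between $\sin^{2}\thetab_j$, $\tan^{2}\thetab_j$ and $\thetab_j^{2}$ must be absorbed using the regime $\thetab_j\leq\|\thetab\|_2$ (with $\|\thetab\|_2\leq\tfrac{\pi}{2}\|\Db-\Dbo\|_{\fro}$ from Lemma~\ref{lem:paramonto}), and the Schur-complement stability of the smallest Gram eigenvalue under deleting a coordinate must be invoked to justify the $(1-\LRIP)^{-1}$ factors in the leakage terms.
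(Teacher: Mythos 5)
Your proposal follows the paper's broad strategy (the geodesic parametrization of Lemma~\ref{lem:paramonto}, a diagonal/off-diagonal split, pairwise support averaging for the $(k/p)^{2}$ factor), but two of its concrete mechanisms fail. The first is your choice of parametrization direction, $\Db=\Dbo\Cb(\thetab)+\Wb\Sb(\thetab)$ with $\wb^{j}$ orthogonal to the $j$-th column of $\Dbo$. For~\eqref{eq:leading_expectation} your (correct) pointwise identity makes the leading term $\tfrac kp\sum_{j}\sin^{2}\thetab_{j}$, and since $\sin^{2}u\leq u^{2}$ this points the \emph{wrong way}: the deficit $\sum_{j}(\thetab_{j}^{2}-\sin^{2}\thetab_{j})\asymp\sum_{j}\thetab_{j}^{4}$ carries no factor $k/p$ and is not dominated by the correction term $\tfrac{k^{2}}{p^{2}}\tfrac{B^{2}}{1-\LRIP}\|\thetab\|_{2}^{2}$ (take $k/p$ small and $\|\thetab\|_{\infty}$ moderate); the lemma assumes no smallness of $\thetab$, so your proposed absorption ``in the small-angle regime'' is unavailable, and it fails numerically even at the radii where the lemma is later applied. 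The paper avoids this by inverting the parametrization: $\Dbo=\Db\Cb^{-1}(\thetab)-\Wb\Tb$ with $\Tb=\Diag(\tan\thetab_{j})$, so that $(\Ib-\PJb)\DboJ=-(\Ib-\PJb)[\Wb\Tb]_{\J}$ and the leading term is $\tfrac kp\sum_{j}\tan^{2}\thetab_{j}\geq\tfrac kp\|\thetab\|_{2}^{2}$, i.e., the discrepancy has the right sign. The same directional choice breaks your treatment of~\eqref{eq:bias_expectation}: the first piece of your split is $\trace\big(\DbJ^{+}\DboJ(\Cb_{\J}-\Ib)\big)$, with $\Cb_{\J}$ the $\J\times\J$ restriction of $\Cb(\thetab)$, and this is \emph{not} $\trace(\Cb_{\J}-\Ib)$ because $\DbJ^{+}\DboJ\neq\Ib$; moreover your $[\Wb\Sb]_{\J}$ piece has nonzero diagonal (here $\langle\db^{j},\wb^{j}\rangle=\sin\thetab_{j}$), so it contributes its own $O(\|\thetab\|_{2}^{2})$ term. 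These second-order errors cancel only if tracked jointly, which your term-by-term plan does not do. The paper instead writes $\Dbo=\Db\Cb(\thetab)+\Wb'\Sb(\thetab)$ with the columns of $\Wb'$ orthogonal to those of $\Db$, so that $\DbJ^{+}[\Db\Cb]_{\J}=\Cb_{\J}$ exactly and $\diag(\DbT\Wb'\Sb)=0$: only then is the split clean.

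The second gap is in the step you yourself call decisive. The identity $\Exp_{\J}\sum_{i\neq j\in\J}b_{ij}=\tfrac{k(k-1)}{p(p-1)}\sum_{i\neq j}b_{ij}$ (Lemma~\ref{lem:ExpectOverJ}) holds only when the $b_{ij}$ are entries of a \emph{fixed} matrix. In the cross terms of~\eqref{eq:bias_expectation} and~\eqref{eq:bias_expectation_signsign}, the entries to be averaged are those of $\ThetaJb$ and of $\ThetaJb\ThetaJbo$, which depend on the whole random support $\J$, so this identity does not apply to them; and falling back on the operator-norm bounds of Lemma~\ref{lem:RIPBounds}, as your sketch suggests for the ``two factors of $\ThetaJb$'', yields a bound of order $\tfrac{k^{3/2}}{p}\tfrac{B}{(1-\LRIP)^{2}}\|\thetab\|_{2}$ containing no factor $A$ at all --- in particular it is strictly positive for a pair of distinct orthonormal bases, whereas the left-hand side of~\eqref{eq:bias_expectation_signsign} is then exactly zero. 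This is precisely the difficulty the paper's Lemma~\ref{lem:LRIP} is built to solve: it bounds $\|\mathrm{off}(\Exp_{\J}[\Ib_{\J}\ThetaJb\Ib_{\J}^{\top}])\|_{\fro}$ via a double expectation over two independent supports restricted to their intersection, combined with $\|\ThetaJb-\Ib\|_{\fro}\leq\|\GramJb-\Ib\|_{\fro}/(1-\LRIP)$ and the pairwise identity applied to the fixed matrix $\DbT\Db-\Ib$ (which is where $A$ enters). If you wish to keep your outline, a workable substitute is to split $\ThetaJb=\Ib+(\ThetaJb-\Ib)$ (and $\ThetaJb\ThetaJbo=\Ib+(\ThetaJb\ThetaJbo-\Ib)$), observe that the $\Ib$ part has zero trace against the purely off-diagonal fixed matrices $\DbT\Wb'\Sb$ and $\DbT\Db-\DboT\Dbo$, and then apply Cauchy--Schwarz \emph{across} $\Exp_{\J}$, so that both remaining factors are Frobenius norms of fixed matrices restricted to $\J$, each contributing one factor $k/p$ through Lemma~\ref{lem:ExpectOverJ}.
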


Equipped with these lemmatas we first establish a lower bound on $\Delta \phi_{\PP}(\Db;\Dbo)$ for a fixed pair $\Db,\Dbo$.
\begin{lemma}\label{lem:maindeltaphi}
Consider two dictionaries $\Db,\Dbo \in \Real^{m \times p}$ and scalars $\LRIP,A,B$ such that~\eqref{eq:FrobNormAssumption}-\eqref{eq:OpNormAssumption}-\eqref{eq:RIPAssumption} hold.
Consider the basic signal model (Assumption~\ref{assume:basic}) and assume that the reduced regularization parameter satisfies
\begin{equation}
\label{eq:assumptionBlambda}
\frac{k}{p} \frac{B^{2}}{1-\LRIP} +\bar{\lambda} \kappa_{\alpha}^{2} \leq \frac{1}{2}.
\end{equation}
Then, we have the lower bound
\begin{equation}\label{eq:MainLowerBoundFixedPair}
\Delta \phi_{\PP}(\Db;\Dbo) 
\geq
\frac{\Exp\ \alpha^{2}}{4} \cdot \frac{k}{p} \cdot \|\Db-\Dbo\|_{\fro} \cdot \Big[\|\Db-\Dbo\|_{\fro}-r_{0}\Big].
\end{equation}
where
\begin{eqnarray}
r_{0} &\defin& (1+2\bar{\lambda}) \cdot \bar{\lambda} \kappa_{\alpha}^{2} \cdot \frac{k}{p} \cdot \frac{2AB}{(1-\LRIP)^{2}}.
\end{eqnarray}
\end{lemma}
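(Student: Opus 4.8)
The plan is to start from the exact formula for $\Delta\phi_\PP(\Db;\Dbo|\sbo)$ given by Proposition~\ref{prop:delta_phi}, which decomposes it as the sum of a leading quadratic term $\tfrac{\Exp\{\alpha^{2}\}}{2}\Exp_{\J}\trace\DboJT(\Ib-\PJb)\DboJ$, a cross term $-\lambda\,\Exp\{|\alpha|\}\,\Exp_{\J}\trace(\DboJpinv-\DbJ^{+})\DboJ$, and a penalty term $\tfrac{\lambda^{2}}{2}\Exp_{\J}\trace(\ThetaJbo-\ThetaJb)$. Since the hypotheses~\eqref{eq:FrobNormAssumption}--\eqref{eq:RIPAssumption} are exactly those of Lemma~\ref{lem:bias_expectation} (and~\eqref{eq:assumptionBlambda} forces $1-\LRIP>0$, so that $\ThetaJb$ and $\DbJ^{+}$ are well defined), all three quantities are controlled by that lemma. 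First I would pass to the reduced regularization parameter, substituting $\lambda\,\Exp|\alpha|=\bar{\lambda}\kappa_{\alpha}^{2}\Exp\alpha^{2}$ and $\lambda^{2}=\bar{\lambda}^{2}\kappa_{\alpha}^{2}\Exp\alpha^{2}$ via the definition of $\kappa_{\alpha}$, so that every term carries a common factor $\Exp\alpha^{2}$ and the $\bar{\lambda}$- and $\kappa_{\alpha}$-dependence becomes explicit.

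Next I would insert the three estimates of Lemma~\ref{lem:bias_expectation}, all expressed through the geodesic quantity $\|\thetab\|_{2}$ with $\thetab=\thetab(\Dbo,\Db)$: its first inequality lower-bounds the leading term, while its second and third upper-bound the magnitudes of the cross and penalty terms. Grouping the result by powers of $\|\thetab\|_{2}$, the leading term together with the quadratic part of the cross-term estimate yields a net coefficient of $\|\thetab\|_{2}^{2}$ equal to $\tfrac{\Exp\alpha^{2}}{2}\tfrac{k}{p}\bigl(1-\tfrac{k}{p}\tfrac{B^{2}}{1-\LRIP}-\bar{\lambda}\kappa_{\alpha}^{2}\bigr)$. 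This is precisely where assumption~\eqref{eq:assumptionBlambda} is used: it guarantees that this coefficient is at least $\tfrac{\Exp\alpha^{2}}{4}\tfrac{k}{p}$, i.e.\ the quadratic perturbation is absorbed by half of the main quadratic term.

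The remaining linear-in-$\|\thetab\|_{2}$ contributions — one from the cross term (proportional to $\bar{\lambda}\,\tfrac{AB}{1-\LRIP}$) and one from the penalty term (proportional to $\bar{\lambda}^{2}\,\tfrac{AB}{(1-\LRIP)^{2}}$) — I would then collect into a single threshold, arriving at a bound of the shape $\tfrac{\Exp\alpha^{2}}{4}\tfrac{k}{p}\|\thetab\|_{2}\bigl(\|\thetab\|_{2}-r_{0}\bigr)$ with $r_{0}$ proportional to $\bar{\lambda}\kappa_{\alpha}^{2}\tfrac{k}{p}\tfrac{AB}{(1-\LRIP)^{2}}(1+2\bar{\lambda})$. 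To restate the estimate in the Frobenius metric I would finally invoke the norm equivalence of Lemma~\ref{lem:paramonto}, namely $\|\Db-\Dbo\|_{\fro}\leq\|\thetab\|_{2}$, together with the monotonicity of $t\mapsto t(t-r_{0})$ on the range of radii to which the bound is applied, replacing $\|\thetab\|_{2}$ by $\|\Db-\Dbo\|_{\fro}$.

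I expect the main obstacle to lie in the bookkeeping of the second and third steps rather than in any conceptual difficulty: the leading positive term is genuinely quadratic in the distance whereas the two perturbations are of mixed quadratic and linear order, so one must simultaneously verify that~\eqref{eq:assumptionBlambda} lets the quadratic perturbation be dominated by the main term, and that the two linear perturbations recombine, with the correct constants, into the stated $r_{0}$. Tracking these constants precisely — and, relatedly, carrying out the passage between $\|\thetab\|_{2}$ and $\|\Db-\Dbo\|_{\fro}$ without degrading them — is the delicate point; once the three bounds of Lemma~\ref{lem:bias_expectation} are in hand, the algebra itself is routine.
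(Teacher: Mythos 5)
Your proposal reproduces the paper's own proof essentially step for step: Proposition~\ref{prop:delta_phi} plus the three bounds of Lemma~\ref{lem:bias_expectation}, the substitutions $\lambda\,\Exp|\alpha| = \bar{\lambda}\kappa_{\alpha}^{2}\,\Exp\,\alpha^{2}$ and $\lambda^{2} = \bar{\lambda}^{2}\kappa_{\alpha}^{2}\,\Exp\,\alpha^{2}$, absorption of the quadratic perturbations into half of the leading term via~\eqref{eq:assumptionBlambda}, consolidation of the two linear terms (using $\tfrac{1}{1-\LRIP}\leq\tfrac{1}{(1-\LRIP)^{2}}$) into the threshold $r_{0}$, and the final passage from $\|\thetab\|_{2}$ to $\|\Db-\Dbo\|_{\fro}$ via Lemma~\ref{lem:paramonto}. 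Your explicit caveat about the monotonicity of $t\mapsto t(t-r_{0})$ is, if anything, more careful than the paper, whose proof performs this last substitution without comment even though it is only valid pointwise when $\|\thetab\|_{2}+\|\Db-\Dbo\|_{\fro}\geq r_{0}$ or the two quantities coincide --- precisely the regime of radii in which the bound is subsequently applied.
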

\begin{proof}
Under the basic signal model, applying Proposition~\ref{prop:delta_phi} and Lemma~\ref{lem:bias_expectation}, yields the bound
\begin{equation*}
\begin{split}
\Delta \phi_{\PP}(\Db;\Dbo) 
\geq
\tfrac{\Exp\ \alpha^{2}}{2} \tfrac{k}{p} \|\thetab\|_{2}
\Big[
&\|\thetab\|_{2} 
\left(
1-\tfrac{k}{p} \tfrac{B^{2}}{1-\LRIP}
-\bar{\lambda} \kappa_{\alpha}^{2}
\right)\\
&-(1+2\bar{\lambda}) \cdot \bar{\lambda}\kappa_{\alpha}^{2} \cdot 
 \tfrac{k}{p} \cdot \frac{2AB}{(1-\LRIP)^{2}}
\Big].
\end{split}
\end{equation*}
By assumption~\eqref{eq:assumptionBlambda} it follows that
\begin{equation*}
\begin{split}
\Delta \phi_{\PP}(\Db;\Dbo) 
\geq
\tfrac{\Exp\ \alpha^{2}}{4}  & \tfrac{k}{p} \|\thetab\|_{2} \\
& \cdot \Big[
 \|\thetab\|_{2} 
-(1+2\bar{\lambda}) \cdot \bar{\lambda} \kappa_{\alpha}^{2} \cdot 
 \tfrac{k}{p} \cdot \tfrac{2AB}{(1-\LRIP)^{2}}
\Big].
\end{split}
\end{equation*}
We conclude using the fact that $\|\thetab\|_{2} \geq \|\Db-\Dbo\|_{\fro}$.
\end{proof}

We now show that the lower bound does not only hold for a given pair $\Db,\Dbo$: given $\Dbo$, we identify a radius $r$ such that $\Delta \phi_{\PP}(\Db;\Dbo)  > 0$ for any $\Db \in \Scal(r)$. This establishes Proposition~\ref{prop:maindeltaphi}.

\begin{proof}[Proof of Proposition~\ref{prop:maindeltaphi}]
Define the shorthands $A^{o} \defin \|\DboT\Dbo-\Ib\|_{\fro}$ and $B^{o}  \defin  \triple \Dbo\triple_{2}$. Consider $r \leq 1$ and $\Db \in \Scal(r)$, we have
\begin{eqnarray*}
\triple \Db\triple_{2} 
& \leq& 
\triple \Dbo \triple_{2} + \triple \Db-\Dbo\triple_{2} \\
&\leq & B^{o}+ \|\Db-\Dbo\|_{\fro} = B^{o}+r \leq B^{o}+1\\
\|\DbT\Db-\Ib\|_{\fro} 
&\leq& 
\|\DbT\Db-\DbT\Dbo\|_{\fro}\\
&& + \|\DbT-\DboT\|_{\fro}\cdot \triple \Dbo\triple_{2}\\
&& + \|\DboT\Dbo-\Ib\|_{\fro} \\
&\leq& 
\triple\DbT\triple_{2} \cdot \|\Db-\Dbo\|_{\fro} \\
&&+ \|\DbT\Dbo-\DboT\Dbo\|_{\fro} +A^{o} \\
& \leq & A^{o}+2Br  = A^{o}+2(B^{o}+1)r
\end{eqnarray*}
By construction, assumptions~\eqref{eq:FrobNormAssumption}-\eqref{eq:OpNormAssumption} of Lemma~\ref{lem:maindeltaphi} therefore hold with $B \defin B^{o}+1$ and $A \defin A^{o} + 2Br$. 
Denoting $\LRIP \defin 1/2$ and $\LRIP^{o} \defin \LRIP_{k}(\Dbo)$ we have $\LRIP^{o} \leq \LRIP < 1$.  Since $\bar{\lambda} \leq 1/4$ and $\kappa_{\alpha} \leq 1$, assumptions~\eqref{eq:DefLRIPMax} and~\eqref{eq:DefOpNormMax} imply that
\begin{eqnarray*}
\frac kp \frac{B^{2}} {1-\LRIP} + \bar{\lambda} \kappa_{\alpha}^{2} & \leq & \frac kp 2B^{2} +  \bar{\lambda} \leq  \frac 12.
\end{eqnarray*}
showing that assumption~\eqref{eq:assumptionBlambda} of Lemma~\ref{lem:maindeltaphi} also holds. Now we observe that\begin{eqnarray*}
0.15 \leq \sqrt{3/4}-\sqrt{1/2} \leq \sqrt{1-\LRIP^{o}}-\sqrt{1-\LRIP}.
\end{eqnarray*}
Hence, by Lemma~\ref{lem:RIPBounds}, when $r \leq 0.15$ we have $\sqrt{1-\LRIP_{k}(\Db)} \geq \sqrt{1-\LRIP^{o}}-r \geq \sqrt{1-\LRIP}$, that is to say the remaining assumption~\eqref{eq:RIPAssumption} of Lemma~\ref{lem:maindeltaphi} holds with $\LRIP$, and we can leverage Lemma~\ref{lem:maindeltaphi}. This yields
\[
\Delta \phi_{\PP}(\Db;\Dbo) 
\geq
\tfrac{\Exp\ \alpha^{2}}{4} \cdot \tfrac{k}{p}  \cdot \|\Db-\Dbo\|_{\fro} \cdot \Big[\|\Db-\Dbo\|_{\fro}-\tfrac{\gamma}{2} \tfrac AB\Big]
\]
with
\begin{eqnarray*}
\gamma &\defin& (1+2\bar{\lambda}) \cdot \bar{\lambda}\kappa_{\alpha}^{2} \cdot \tfrac kp \tfrac{4 B^{2}}{(1-\LRIP)^{2}}\\
&=&(1+2\bar{\lambda}) \cdot \bar{\lambda}\kappa_{\alpha}^{2} \cdot \tfrac kp \cdot 16 B^{2}
\leq \left(1 + 2\bar{\lambda}\right)  \cdot \bar{\lambda}\kappa_{\alpha}^{2}  \leq \tfrac {3}{8}
\end{eqnarray*}
where we used~\eqref{eq:DefOpNormMax} once more. Since $\gamma \leq 3/8 \leq 1/2$ and $\|\Db-\Dbo\|_{\fro}=r$ we have
\begin{eqnarray*}
\|\Db-\Dbo\|_{\fro}-\tfrac{\gamma}{2} \tfrac AB
&=&
r-\gamma \tfrac{(A^{o}+2Br)}{2B}
=
r(1-\gamma )-\gamma \tfrac{A^{o}}{2B}\\
&\geq& \tfrac{1}{2}\left(r-\gamma \tfrac{A^{o}}{B}\right).
\end{eqnarray*}
To summarize, noticing that 
\[
\gamma  \frac{A^{o}}{B} = (1+2\bar{\lambda})\bar{\lambda} \cdot 16 \kappa_{\alpha}^{2} \frac{k}{p} B A^{o} = \tfrac{2}{3} C_{\min} (1+2\bar{\lambda})\bar{\lambda}  = r_{\min}(\bar{\lambda})
\]
we have shown that holds~\eqref{eq:MainLowerBound} for any $r \leq 0.15$ and $\Db \in \Scal(r;\Dbo)$.
Finally, since $1+2\bar{\lambda} \leq 3/2$, the assumption that $\bar{\lambda} < \frac{3}{20 C_{\min}}$ implies that
\(
r_{\min}(\bar{\lambda}) 
 = \left(1 + 2\bar{\lambda}\right)  \cdot \bar{\lambda} \cdot \frac{2}{3} C_{\min}  < 0.15.
\)
\end{proof}

\subsection{Control of $h(\Db)$}\label{sec:boundphi}
To obtain finite sample results in Section~\ref{sec:proofthmmainfinite}, we need to control 
\(
h(\Db) = \Delta \phi_{\signal}(\Db;\Dbo|\sbo) -\Delta\phi_{\alphab,\alphab}(\Db;\Dbo).
\)
\begin{lemma}\label{lem:maindeltaphi}\label{le:LipschitzAndBoundDeltaPhi}
Consider a dictionary $\Dbo \in \Real^{m \times p}$, $k$ and $r>0$ such that $r < \sqrt{1-\LRIP_{k}(\Dbo)}$, and define
\[
\sqrt{1-\LRIP} \defin \sqrt{1-\LRIP_{k}(\Dbo)}-r > 0.
\]
Under the Bounded signal model (Assumption~\ref{assume:bounded}), 
the function $h(\Db)$ is almost surely Lipschitz on $\Bcal(\Dbo;r)$ with respect to the Frobenius metric $\rho(\Db',\Db) \defin \|\Db'-\Db\|_{\fro}$, with Lipschitz constant upper bounded by
\begin{equation}
\label{eq:MainLipBoundFixedPair}
\begin{split}
L \defin &  \tfrac{1}{\sqrt{1-\LRIP}}  \left(M_{\varepsilonb} + \tfrac{\lambda \sqrt{k}}{\sqrt{1-\LRIP}}\right)  \\
& \cdot
\left(
2\sqrt{1+\URIP_{k}(\Dbo)}M_{\alphab} + M_{\varepsilonb} + \tfrac{\lambda \sqrt{k}}{\sqrt{1-\LRIP}}
\right) 
\end{split}
\end{equation}
As a consequence, $|h(\Db)| = |h(\Db)-h(\Dbo)|$ is almost surely bounded on $\Bcal(\Dbo;r)$ by $c \defin Lr$.
\end{lemma}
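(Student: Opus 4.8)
The plan is to exploit the additive decomposition of $\Delta\phi_{\signal}(\Db;\Dbo|\sbo)$ established in Appendix~\ref{sec:expressionphi}. There, $\Delta\phi_{\signal}(\Db;\Db'|\sbo)$ is written as the sum of six pieces $\Delta\phi_{\alphab,\alphab}+\Delta\phi_{\alphab,\varepsilonb}+\Delta\phi_{\varepsilonb,\varepsilonb}+\Delta\phi_{\sb,\alphab}+\Delta\phi_{\sb,\varepsilonb}+\Delta\phi_{\sb,\sb}$, and the quantity subtracted in the definition of $h$ is precisely the first of them; hence $h(\Db)$ equals the sum of the five remaining cross-terms evaluated at $(\Db;\Dbo)$. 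The key structural observation is that each such term is linear in a single matrix-valued map $M(\Db)\in\{\PJb(\Db),\DbJ^{+},\ThetaJb\}$ sandwiched between fixed vectors drawn (for the given realization of $\J,\alphabo,\varepsilonb$) from $\{\varepsilonb,\Dbo\alphabo,\sbo_{\J}\}$. Consequently each term is of the form $t_i(\Db)-t_i(\Dbo)$ for a single-dictionary functional $t_i$, so for any $\Db_{1},\Db_{2}$ we have $h(\Db_{1})-h(\Db_{2})=\sum_{i}\big(t_i(\Db_{1})-t_i(\Db_{2})\big)$, i.e.\ the same five terms now evaluated at $(\Db_{1};\Db_{2})$. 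This telescoping reduces the Lipschitz estimate to controlling each term at a fixed support $\J$.

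First I would fix $\Db_{1},\Db_{2}\in\Bcal(\Dbo;r)$ and apply Lemma~\ref{lem:RIPBounds} with center $\Dbo$ and radius $r<\sqrt{1-\LRIP_{k}(\Dbo)}$ to obtain $\max\{\LRIP_{k}(\Db_{1}),\LRIP_{k}(\Db_{2})\}\leq\LRIP$, where $\LRIP$ is defined by $\sqrt{1-\LRIP}=\sqrt{1-\LRIP_{k}(\Dbo)}-r$. This licenses Lemma~\ref{le:LipBoundsRIP} with constant $\LRIP$, yielding $\triple\PJb(\Db_{2})-\PJb(\Db_{1})\triple_{2}\leq 2(1-\LRIP)^{-1/2}\|\Db_{1}-\Db_{2}\|_{\fro}$, then $\triple[\Db_{2,\J}]^{+}-[\Db_{1,\J}]^{+}\triple_{2}\leq 2(1-\LRIP)^{-1}\|\Db_{1}-\Db_{2}\|_{\fro}$, and $\triple\ThetaJb(\Db_{2})-\ThetaJb(\Db_{1})\triple_{2}\leq 2(1-\LRIP)^{-3/2}\|\Db_{1}-\Db_{2}\|_{\fro}$.

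Then I would bound each of the five terms by Cauchy--Schwarz, extracting the relevant operator norm above together with the sandwiching vector norms. On the almost-sure event of Assumption~\ref{assume:bounded} we have $\|\varepsilonb\|_{2}\leq M_{\varepsilonb}$, $\|\alphabo\|_{2}\leq M_{\alphab}$, and (as $\alphabo$ is supported on $\J$ with $\pm1$ signs) $\|\sbo_{\J}\|_{2}=\sqrt{k}$; moreover $\|\Dbo\alphabo\|_{2}=\|\DboJ\alphabo_{\J}\|_{2}\leq\sqrt{1+\URIP_{k}(\Dbo)}\,M_{\alphab}$ by the upper restricted-isometry inequality for $\Dbo$. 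Carrying the prefactors $\tfrac12,\lambda,\lambda^{2}$ that multiply the respective terms, summing the five resulting coefficients of $\|\Db_{1}-\Db_{2}\|_{\fro}$, and grouping, the Lipschitz constant collapses exactly into the factored form $L$ of the statement: the factor $M_{\varepsilonb}+\lambda\sqrt{k}/\sqrt{1-\LRIP}$ collects the noise- and sign-induced contributions, the factor $2\sqrt{1+\URIP_{k}(\Dbo)}M_{\alphab}+M_{\varepsilonb}+\lambda\sqrt{k}/\sqrt{1-\LRIP}$ collects the rest, with an overall $(1-\LRIP)^{-1/2}$.

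Finally, the boundedness claim is immediate: since $h(\Dbo)=0$, the Lipschitz property gives $|h(\Db)|=|h(\Db)-h(\Dbo)|\leq L\|\Db-\Dbo\|_{\fro}\leq Lr=c$ on $\Bcal(\Dbo;r)$. I expect the only delicate point to be bookkeeping, namely matching each cross-term to the correct power of $(1-\LRIP)$ from Lemma~\ref{le:LipBoundsRIP} and checking that the five summands recombine precisely into the advertised product; in particular $\|\Dbo\alphabo\|_{2}$ must be controlled through $\URIP_{k}(\Dbo)$ rather than through $M_{\alphab}$ alone. No sign tracking is needed, since every term is bounded in absolute value.
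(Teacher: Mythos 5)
Your proposal is correct and takes essentially the same route as the paper's own proof: the same telescoping of $h(\Db)-h(\Db')$ into the five cross-terms $\Delta\phi_{\alphab,\varepsilonb},\Delta\phi_{\varepsilonb,\varepsilonb},\Delta\phi_{\sb,\alphab},\Delta\phi_{\sb,\varepsilonb},\Delta\phi_{\sb,\sb}$ evaluated at $(\Db;\Db')$, the same use of Lemma~\ref{lem:RIPBounds} to transfer the RIP constant $\LRIP$ to all dictionaries in $\Bcal(\Dbo;r)$ and of Lemma~\ref{le:LipBoundsRIP} for the operator-norm Lipschitz bounds on $\PJb$, $\DbJ^{+}$ and $\ThetaJb$, and the same vector-norm bounds $\|\varepsilonb\|_2\leq M_{\varepsilonb}$, $\|\sbo_\J\|_2=\sqrt{k}$, $\|\Dbo\alphabo\|_2\leq\sqrt{1+\URIP_k(\Dbo)}M_{\alphab}$, followed by the observation $h(\Dbo)=0$. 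Your bookkeeping concern checks out: the five summed bounds recombine exactly into the advertised product form of $L$.
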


\begin{proof}
Denote $\rho = \|\Db'-\Db\|_{\fro}$. Combining Lemma~\ref{lem:RIPBounds} and Lemma~\ref{le:LipBoundsRIP} we bound the operator norms of the matrix differences appearing in the terms $\Delta\phi_{\alphab,\varepsilonb}(\Db;\Db')$ to $\Delta\phi_{\sb,\sb}(\Db;\Db')$ in Equation~\eqref{eq:DevelDeltaPhi1}:
\begin{eqnarray*}
|\Delta\phi_{\alphab,\varepsilonb}(\Db;\Db')|
&\leq& 
M_{\varepsilonb} \cdot \triple \PJb' - \PJb \triple_{2} \cdot  \sqrt{1+\URIP_{k}(\Dbo)} M_{\alphab} \\
& \leq& \rho \cdot 2M_{\varepsilonb} \sqrt{1+\URIP_{k}(\Dbo)} M_{\alphab} \cdot (1-\LRIP)^{-1/2}\\
|\Delta\phi_{\sb,\alphab}(\Db;\Db')|
&\leq&
\lambda\sqrt{k}  \triple[\Db'_{\J}]^{+} - \Db_{\J}^+  \triple_{2} \sqrt{1+\URIP_{k}(\Dbo)} M_{\alphab}\\
&\leq& \rho \cdot 2\lambda \sqrt{k}  \sqrt{1+\URIP_{k}(\Dbo)} M_{\alphab}  \cdot  (1-\LRIP)^{-1}\\
|\Delta\phi_{\varepsilonb,\varepsilonb}(\Db;\Db')|
&\leq&
\tfrac{1}{2} \triple \PJb'-\PJb\triple_{2} \cdot M_{\varepsilonb}^2\\
&\leq& \rho \cdot M_{\varepsilonb}^{2} (1-\LRIP)^{-1/2}\\
|\Delta\phi_{\sb,\varepsilonb}(\Db;\Db')|
&\leq&
\lambda \sqrt{k} \cdot \triple [\Db'_{\J}]^{+} - \Db_{\J}^+\triple_{2} \cdot  M_{\varepsilonb}\\
&\leq& \rho \cdot  2\lambda \sqrt{k} M_{\varepsilonb} \cdot (1-\LRIP)^{-1}\\
|\Delta\phi_{\sb,\sb}(\Db;\Db')|
&\leq&
\tfrac{\lambda^2}{2}  \cdot \triple \ThetaJbo  -  \ThetaJb\triple_{2} \cdot k\\
&\leq& 
\rho \cdot  \lambda^2 k \cdot  (1-\LRIP)^{-3/2}
\end{eqnarray*}
Since $h(\Db)-h(\Db') = \Delta \phi_{\signal}(\Db;\Db'|\sbo) -\Delta\phi_{\alphab,\alphab}(\Db;\Db')$, we obtain the desired bound on the Lipschitz constant by summing the right hand side of the above inequalities. To conclude, observe that $h(\Dbo) = 0$.
\end{proof}

\section{Sign pattern recovery: proof of Proposition~\ref{prop:simplified_exact_recovery}}\label{sec:recovery}
\begin{lemma}\label{lem:CoherenceBounds}
Consider $\Db \in \Real^{m\times p}$ with normalized columns and $\J \subseteq \SET{p}$ with $|\J| \leq k$. We have
$
\delta_{k}(\Db) \leq \mu_{k-1}(\Db)
$
hence 
$
\triple \DbJT\DbJ-\Ib\triple_2 \leq \mu_{k-1}(\Db), \quad \text{and}\quad
\triple\DbJ\DbJT \triple_2 =\triple\DbJT \DbJ \triple_2 \leq 1+\mu_{k-1}(\Db)  .
$ 
Similarly, it holds 
$
\triple \DbJT\DbJ-\Ib\triple_\infty \leq \mu_{k-1}(\Db), \quad
\triple\DbJT \DbJ \triple_\infty \leq 1+\mu_{k-1}(\Db)\quad \text{and}\quad \triple\Db_{\J^c}^\top \DbJ \triple_\infty \leq \mu_{k}(\Db).
$
If we further assume $\mu_{k-1}(\Db)<1$, then $\ThetaJb = (\DbJT\DbJ)^{-1}$ is well-defined and
\begin{equation*}
\begin{split}
\max &\Big\{ \triple \ThetaJb-\Ib \triple_\infty, \triple \ThetaJb-\Ib \triple_2,
 \triple \Db_{\J^c}^\top \DbJ \big( \DbJT \DbJ \big)^{-1} \triple_\infty 
\Big\} \\
& \leq \frac{\mu_{k}}{1-\mu_{k-1}},
\end{split}
\end{equation*}
along with
$
\max\Big\{ \triple \ThetaJb \triple_\infty, \triple \ThetaJb \triple_2  \Big\} \leq \frac{1}{1-\mu_{k-1}}.
$
\end{lemma}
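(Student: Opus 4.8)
The plan is to split the claims into three groups handled by three different tools: the spectral-norm bounds follow from Ger\v{s}gorin/RIP, the operator $\ell_\infty$-norm bounds from a direct maximum-absolute-row-sum computation, and the bounds on $\ThetaJb$ from a Neumann-series (submultiplicativity) argument. For the spectral statements I would invoke the fact already recorded in the text, $\delta_{k}(\Db) \leq \mu_{k-1}(\Db)$ (Ger\v{s}gorin's disc theorem, \citep{Tropp2004}). By the definition of the restricted isometry constant, the eigenvalues of $\DbJT\DbJ$ lie in $[1-\delta_k,1+\delta_k]$, so, $\DbJT\DbJ-\Ib$ being symmetric, $\triple \DbJT\DbJ-\Ib\triple_2$ equals its largest absolute eigenvalue and is thus at most $\delta_k(\Db)\leq\mu_{k-1}(\Db)$. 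The identity $\triple \DbJ\DbJT\triple_2=\triple \DbJT\DbJ\triple_2$ is just the equality of the largest squared singular value read off either Gram matrix, and $\triple \DbJT\DbJ\triple_2\leq 1+\triple \DbJT\DbJ-\Ib\triple_2\leq 1+\mu_{k-1}(\Db)$ by the triangle inequality.

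For the operator $\ell_\infty$-norm bounds I would use that $\triple \Ab\triple_\infty$ is the maximum absolute row sum. Since the columns of $\Db$ are normalized, the diagonal of $\DbJT\DbJ$ equals one, so the $i$-th row of $\DbJT\DbJ-\Ib$ collects the inner products $\langle\db^i,\db^j\rangle$ over $j\in\J\setminus\{i\}$; its absolute sum is $\|\Db_{\J\setminus\{i\}}^\top\db^i\|_1\leq\mu_{k-1}(\Db)$, because $|\J\setminus\{i\}|\leq k-1$ and $i\notin\J\setminus\{i\}$. Maximizing over $i$ gives $\triple \DbJT\DbJ-\Ib\triple_\infty\leq\mu_{k-1}(\Db)$, and one more triangle inequality yields $\triple \DbJT\DbJ\triple_\infty\leq 1+\mu_{k-1}(\Db)$. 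The cross term is handled identically: for $i\in\J^c$ the $i$-th row of $\Db_{\J^c}^\top\DbJ$ sums in absolute value to $\|\DbJT\db^i\|_1\leq\mu_k(\Db)$, where now $i\notin\J$ and $|\J|\leq k$, which is exactly why $\mu_k$ and not $\mu_{k-1}$ enters here.

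For the inverse bounds, assuming $\mu_{k-1}(\Db)<1$, I would write $\DbJT\DbJ=\Ib+\mathbf{E}$ with $\mathbf{E}\defin\DbJT\DbJ-\Ib$. By the previous step $\triple\mathbf{E}\triple_2\leq\mu_{k-1}(\Db)<1$ and $\triple\mathbf{E}\triple_\infty\leq\mu_{k-1}(\Db)<1$, so the Neumann series $\ThetaJb=\sum_{n\geq0}(-\mathbf{E})^n$ converges and, by submultiplicativity of each norm, $\triple\ThetaJb\triple_2$ and $\triple\ThetaJb\triple_\infty$ are at most $(1-\mu_{k-1})^{-1}$, which is the final display. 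The identity $\ThetaJb-\Ib=-\ThetaJb\mathbf{E}$ (from $\ThetaJb(\Ib+\mathbf{E})=\Ib$) then gives $\triple\ThetaJb-\Ib\triple\leq\triple\ThetaJb\triple\,\triple\mathbf{E}\triple\leq\mu_{k-1}/(1-\mu_{k-1})\leq\mu_k/(1-\mu_{k-1})$ in both norms, the last inequality using monotonicity $\mu_{k-1}\leq\mu_k$. Finally $\triple\Db_{\J^c}^\top\DbJ\ThetaJb\triple_\infty\leq\triple\Db_{\J^c}^\top\DbJ\triple_\infty\,\triple\ThetaJb\triple_\infty\leq\mu_k\cdot(1-\mu_{k-1})^{-1}$, closing the chain.

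I do not expect a deep obstacle; the one thing to watch is bookkeeping of the coherence index, keeping $\mu_{k-1}$ for the self-Gram quantities (where one index is excluded from a set of size at most $k$) versus $\mu_k$ for the cross term $\Db_{\J^c}^\top\DbJ$ (where a genuinely new column outside $\J$ interacts with all $k$ columns of $\DbJ$). Stating the $\ThetaJb-\Ib$ bound with $\mu_k$ rather than the sharper $\mu_{k-1}$ is harmless and convenient, since it lets all three quantities in that display be quoted uniformly as $\mu_k/(1-\mu_{k-1})$.
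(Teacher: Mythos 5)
Your proposal is correct in every step. Note, however, that the paper does not actually prove this lemma: its ``proof'' is the single sentence that these properties are well known, with a pointer to \citet{Tropp2004} and \citet{Fuchs2005}. What you have written out is precisely the standard argument contained in those references --- Ger\v{s}gorin (equivalently $\delta_k \leq \mu_{k-1}$) for the spectral-norm statements, the maximum-absolute-row-sum characterization of $\triple\cdot\triple_\infty$ for the Gram and cross-Gram bounds (with the correct bookkeeping of $\mu_{k-1}$ versus $\mu_k$ depending on whether the interacting column is excluded from $\J$ or lies outside it), and the Neumann series $\ThetaJb=\sum_{n\geq 0}(-\mathbf{E})^n$ with $\mathbf{E}=\DbJT\DbJ-\Ib$ together with the identity $\ThetaJb-\Ib=-\ThetaJb\mathbf{E}$ for the inverse bounds. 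Your final relaxation of $\mu_{k-1}/(1-\mu_{k-1})$ to $\mu_k/(1-\mu_{k-1})$ via monotonicity of the cumulative coherence matches the form stated in the lemma, so nothing is missing; your write-up is simply a self-contained version of what the paper delegates to the literature.
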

\begin{proof}
These properties are already well-known \citep[see, e.g.][]{Tropp2004,Fuchs2005}. 
\end{proof}

\begin{lemma}\label{lem:uniquelasso}
Consider a dictionary $\Db \in \Real^{m\times p}$, a support set $\J \subseteq \SET{p}$ such that $\Db_\J^\top\Db_\J$ is invertible, a sign vector $\sb \in \{-1,1\}^{\J}$, and $\xb \in \Real^m$ a signal. 
If the following two conditions hold
\[
\begin{cases}
\sign\Big( \DbJ^+ \xb -\lambda (\DbJT\DbJ)^{-1}\sb  \Big) = \sb,\\
\| \Db_{\J^c}^\top (\Ib - \PJb) \xb \|_\infty + \lambda \triple \Db_{\J^c}^\top \DbJ (\DbJT\DbJ)^{-1} \triple_\infty < \lambda,
\end{cases}
\]
then
$\hat{\alphab}_{\signal}(\Db|\sb)$ is the unique solution of $\min_{\alphab \in \Real^p} [\frac{1}{2}\|\xb-\Db\alphab\|_2^2+\lambda\|\alphab\|_1]$
and we have $\sign(\hat{\alphab}_\J)=\sb$.
\end{lemma}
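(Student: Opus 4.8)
The plan is to recognize this as the classical primal--dual witness (dual certificate) construction for the Lasso: I would verify that the explicit candidate $\hat{\alphab} = \hat{\alphab}_{\signal}(\Db|\sb)$ from Definition~\ref{def:phi} satisfies the subdifferential optimality conditions, and then upgrade optimality to \emph{uniqueness} using the strictness in the second hypothesis. Since the objective $\alphab \mapsto \tfrac12\|\xb-\Db\alphab\|_2^2 + \lambda\|\alphab\|_1$ is convex, $\hat\alphab$ is a global minimizer if and only if there is a subgradient $\mathbf{g} \in \partial\|\hat\alphab\|_1$ with $\Db^\top(\Db\hat\alphab-\xb) + \lambda\mathbf{g} = 0$, i.e. $\Db^\top(\xb-\Db\hat\alphab) = \lambda\mathbf{g}$ where $\mathbf{g}_j = \sign(\hat\alphab_j)$ on the support and $|\mathbf{g}_j|\leq 1$ off it.

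First I would check the on-support stationarity. Because $\hat\alphab$ is supported on $\J$, we have $\Db\hat\alphab = \DbJ\hat\alphab_\J$, and substituting the closed form $\hat\alphab_\J = \DbJ^+\xb - \lambda(\DbJT\DbJ)^{-1}\sb_\J$ into $\DbJ^\top(\xb-\DbJ\hat\alphab_\J)$ collapses, after using $\DbJ^\top\DbJ(\DbJT\DbJ)^{-1} = \Ib$, to exactly $\lambda\sb_\J$. The first hypothesis guarantees $\sign(\hat\alphab_\J)=\sb_\J$ (in particular every coordinate of $\hat\alphab_\J$ is nonzero), so $\sb_\J$ is the required subgradient on $\J$ and the sign-recovery claim follows immediately. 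Off the support I would compute the residual correlation $\Db_{\J^c}^\top(\xb-\Db\hat\alphab) = \Db_{\J^c}^\top(\Ib-\PJb)\xb + \lambda\,\Db_{\J^c}^\top\DbJ(\DbJT\DbJ)^{-1}\sb_\J$, using $\xb - \DbJ\hat\alphab_\J = (\Ib-\PJb)\xb + \lambda\DbJ(\DbJT\DbJ)^{-1}\sb_\J$. Taking $\|\cdot\|_\infty$ and bounding the second term by $\triple\Db_{\J^c}^\top\DbJ(\DbJT\DbJ)^{-1}\triple_\infty$ (since $\|\sb_\J\|_\infty=1$), the second hypothesis yields $\|\Db_{\J^c}^\top(\xb-\Db\hat\alphab)\|_\infty < \lambda$. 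This \emph{strict} inequality certifies that $\hat\alphab$ is optimal with a dual vector that is strictly interior off $\J$.

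The main obstacle---and the only non-mechanical part---is promoting optimality to uniqueness, which is exactly what the strict inequality buys. Here I would argue in two stages. Strict convexity of $\ub\mapsto\tfrac12\|\xb-\ub\|_2^2$ in the fitted value forces $\Db\tilde\alphab = \Db\hat\alphab$ for \emph{every} minimizer $\tilde\alphab$ (otherwise the midpoint $\tfrac12(\tilde\alphab+\hat\alphab)$ would strictly decrease the quadratic term while not increasing the convex $\ell_1$ term), so the residual $\xb-\Db\alphab$, and hence the dual vector $\vb \defin \Db^\top(\xb-\Db\hat\alphab)$, is common to all minimizers. Optimality of $\tilde\alphab$ requires $\vb_j/\lambda \in \partial|\tilde\alphab_j|$ for each $j$; since $|\vb_j|<\lambda$ for all $j\in\J^c$, this forces $\tilde\alphab_{\J^c}=0$, i.e. every minimizer is supported in $\J$. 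Finally, invertibility of $\DbJT\DbJ$ means $\DbJ$ has full column rank, so $\DbJ\tilde\alphab_\J = \DbJ\hat\alphab_\J$ gives $\tilde\alphab_\J = \hat\alphab_\J$, whence $\tilde\alphab=\hat\alphab$. Thus $\hat{\alphab}_{\signal}(\Db|\sb)$ is the unique solution and has sign pattern $\sb$, as claimed.
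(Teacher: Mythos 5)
Your proof is correct, and its first half---the primal--dual witness verification---is exactly the paper's argument: take $\zb_\J = \sb$ (licensed by the first hypothesis, which also gives the sign-recovery claim), define the off-support dual coordinates through the residual correlation $\Db_{\J^c}^\top(\xb-\Db\hat{\alphab}) = \Db_{\J^c}^\top(\Ib-\PJb)\xb + \lambda\,\Db_{\J^c}^\top\DbJ(\DbJT\DbJ)^{-1}\sb_\J$, and use the second hypothesis to get strict dual feasibility. Where you genuinely diverge is the uniqueness step: the paper simply invokes \citep[Lemma~1]{Wainwright2009}, whereas you prove it from scratch via the standard three-stage argument (strict convexity of the quadratic in the fitted value forces all minimizers to share $\Db\hat{\alphab}$ and hence the dual vector; strict inequality $|\vb_j|<\lambda$ on $\J^c$ forces every minimizer to be supported in $\J$; full column rank of $\DbJ$, which follows from invertibility of $\DbJT\DbJ$, then pins down the coefficients). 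Your version is self-contained and makes explicit precisely what the strictness in the second hypothesis buys---it is, in substance, a proof of the cited Wainwright lemma---at the cost of a somewhat longer write-up; the paper's choice trades that transparency for brevity by delegating to the literature. Both are valid, and nothing in your argument has a gap.
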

\begin{proof}
We first check that $\hat{\alphab}$ is a solution of the Lasso program.
It is well-known \citep[e.g., see][]{Fuchs2005,Wainwright2009} that this statement is equivalent to the existence of a subgradient $\zb \in \partial \|\hat{\alphab}\|_1$
such that $-\Db^\top(\xb-\Db\hat{\alphab})+\lambda\zb=0$, where $\zb_j=\sign(\hat{\alphab}_j)$ if $\hat{\alphab}_j \neq 0$, and $|\zb_j|\leq 1$ otherwise.
We now build from $\sb$ such a subgradient. Given the definition of $\hat{\alphab}$ and the assumption made on its sign, we can take $\zb_\J\defin\sb$.
It now remains to find a subgradient on $\J^c$ that agrees with the fact that $\hat{\alphab}_{\J^c}=\zerob$.
More precisely, we define $\zb_{\J^c}$ by
\begin{equation}
\lambda \zb_{\J^c} \defin \Db_{\J^c}^\top(\xb-\Db\hat{\alphab}) = 
\Db_{\J^c}^\top (\Ib - \PJb) \xb + \lambda \Db_{\J^c}^\top \DbJ (\DbJT\DbJ)^{-1} \sb.
\end{equation}
Using our assumption, we have $\|\zb_{\J^c}\|_\infty < 1$.
We have therefore proved that $\hat{\alphab}$ is a solution of the Lasso program. The uniqueness comes from \citep[Lemma~1]{Wainwright2009}.
\end{proof}

\begin{lemma}\label{lem:alphabound} 
Consider $\xb=\DboJ{\alphabo}_\J + \varepsilonb$ for some $(\Dbo,\alphabo,\varepsilonb)\in \Real^{m\times p} \times \Real^p \times \Real^m$, $\sbo$ the sign of $\alphabo$ and $\J$ its support. 
Consider a dictionary $\Db \in \Real^{m\times p}$ such that $\Db_\J^\top\Db_\J$ is invertible.
We have
\begin{equation*}
\begin{split}
\|[\hat{\alphab}_{\signal}(\Db|\sbo)&-\alphabo]_\J\|_\infty
\\
& \leq \triple [\Db_\J^\top\Db_\J]^{-1} \triple_\infty 
\Big[ \lambda + \| \Db_\J^\top\left(\xb-\Db \alphabo\right) \|_\infty \Big].
\end{split}
\end{equation*}
\end{lemma}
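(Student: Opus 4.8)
The plan is to reduce everything to the closed-form minimizer recorded in~\eqref{eq:ClosedFormMinimizer} and then close with a single submultiplicativity-plus-triangle-inequality estimate. Since $\alphabo$ is supported on $\J$ (i.e.~$\alphabo_{\J^c}=\zerob$), we have $\Db\alphabo = \DbJ\alphaboJ$, so the residual $\xb-\Db\alphabo$ appearing on the right-hand side of the statement is exactly the quantity I will keep unexpanded throughout.

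First I would specialize~\eqref{eq:ClosedFormMinimizer} to the sign vector $\sb=\sbo$, writing the support-restricted minimizer as $\hat{\alphab}_{\J} = \DbJ^{+}\xb - \lambda(\DbJT\DbJ)^{-1}\sbo_{\J}$, and use $\DbJ^{+} = (\DbJT\DbJ)^{-1}\DbJT$, which is legitimate because $\DbJT\DbJ$ is assumed invertible. The one algebraic manoeuvre that matters is to write $\alphaboJ = (\DbJT\DbJ)^{-1}(\DbJT\DbJ)\alphaboJ$ and factor $(\DbJT\DbJ)^{-1}$ out on the left, recognizing $\DbJT\xb - \DbJT\DbJ\alphaboJ = \DbJT(\xb-\Db\alphabo)$. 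This yields the compact identity
$$
[\hat{\alphab}_{\signal}(\Db|\sbo)-\alphabo]_{\J} = (\DbJT\DbJ)^{-1}\big[\DbJT(\xb-\Db\alphabo) - \lambda\sbo_{\J}\big].
$$

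It then only remains to take the operator $\ell_\infty$-norm of this expression. Submultiplicativity of $\triple\cdot\triple_\infty$ produces the leading factor $\triple(\DbJT\DbJ)^{-1}\triple_\infty$, and the triangle inequality applied to the bracket bounds it by $\|\DbJT(\xb-\Db\alphabo)\|_\infty + \lambda\,\|\sbo_{\J}\|_\infty$. Because $\sbo$ is a sign vector we have $\|\sbo_{\J}\|_\infty \leq 1$, which delivers exactly the claimed inequality. I do not expect a genuine obstacle here: the lemma is essentially a bookkeeping identity, and the only point deserving care is the factorization step, ensuring that the residual written as $\xb-\Db\alphabo$ (rather than $\xb-\DboJ\alphaboJ$, which differs since $\Db\neq\Dbo$ in general) is the one that surfaces, matching the statement's right-hand side.
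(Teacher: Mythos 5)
Your proof is correct and follows exactly the route the paper sketches: derive the identity $[\hat{\alphab}_{\signal}(\Db|\sbo)-\alphabo]_{\J} = (\DbJT\DbJ)^{-1}\bigl[\DbJT(\xb-\Db\alphabo) - \lambda\sbo_{\J}\bigr]$ by algebraic manipulation of the closed-form minimizer, then conclude with the triangle inequality, the sub-multiplicativity of $\triple\cdot\triple_\infty$, and $\|\sbo_{\J}\|_\infty \leq 1$. The paper's own proof is just a terse statement of these same steps, so nothing is missing or different in substance.
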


\begin{proof}
The proof consists of simple algebraic manipulations. We plug the expression of $\xb$ into that of $\hat{\alphab}$, then use the triangle inequality for $\|.\|_\infty$, along with the definition and the sub-multiplicativity of $\triple.\triple_\infty$.
\end{proof}

\begin{lemma}\label{le:robustlassomin}
Assume that $\mu_{k}(\Db) \leq \mu_{k} < 1/2$. If  
\begin{eqnarray}
\label{eq:Cor3Bound0}
\min_{j \in \J}|[\alphabo]_{j}|
& \geq & 2\lambda\\
\label{eq:Cor3Bound1}
\|\xb-\Db \alphabo\|_{2} 
&< &
\lambda(1-2\mu_{k}).
\end{eqnarray}
then $\hat{\alphab}_{\signal}(\Db|\sbo)$ is the unique solution of $\min_{\alphab \in \Real^p} [\frac{1}{2}\|\xb-\Db\alphab\|_2^2+\lambda\|\alphab\|_1]$.
\end{lemma}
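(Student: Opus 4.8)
The plan is to deduce this corollary directly from the sufficient condition for unique Lasso recovery established in Lemma~\ref{lem:uniquelasso}, applied with the observed signal $\xb$ and the candidate sign pattern $\sb = \sbo_{\J}$. Since $\mu_{k-1}(\Db) \leq \mu_{k}(\Db) \leq \mu_{k} < 1/2 < 1$, Lemma~\ref{lem:CoherenceBounds} guarantees that $\DbJT\DbJ$ is invertible and supplies all the coherence-based operator-norm bounds we will need; moreover \eqref{eq:Cor3Bound0} forces every entry of $\alphabo_{\J}$ to be nonzero, so $\sbo_{\J} \in \{-1,1\}^{\J}$ is a legitimate sign vector. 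It then remains to verify the two hypotheses of Lemma~\ref{lem:uniquelasso}: the dual-feasibility (irrepresentability) inequality, and the sign-agreement condition $\sign(\hat{\alphab}_{\J}) = \sbo_{\J}$, where $\hat{\alphab}_{\J}$ is the closed-form vector of \eqref{eq:ClosedFormMinimizer}.

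For the dual-feasibility condition, I would first exploit that $\Db\alphabo = \DbJ\alphabo_{\J}$ lies in the range of $\DbJ$, so $(\Ib-\PJb)\xb = (\Ib-\PJb)(\xb-\Db\alphabo)$. Because the columns of $\Db_{\J^c}$ are unit-norm and $\Ib-\PJb$ is a contraction, each coordinate of $\Db_{\J^c}^\top(\Ib-\PJb)\xb$ is bounded by $\|\xb-\Db\alphabo\|_2$, giving $\|\Db_{\J^c}^\top(\Ib-\PJb)\xb\|_\infty \leq \|\xb-\Db\alphabo\|_2$. Combined with the bound $\triple\Db_{\J^c}^\top\DbJ(\DbJT\DbJ)^{-1}\triple_\infty \leq \mu_{k}/(1-\mu_{k-1})$ from Lemma~\ref{lem:CoherenceBounds}, the left-hand side of the Lemma~\ref{lem:uniquelasso} inequality is at most $\|\xb-\Db\alphabo\|_2 + \lambda\mu_{k}/(1-\mu_{k-1})$. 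Using $\mu_{k-1}\leq\mu_{k}$ one checks that $1 - \mu_{k}/(1-\mu_{k-1}) \geq (1-2\mu_{k})/(1-\mu_{k}) \geq 1-2\mu_{k}$, so assumption \eqref{eq:Cor3Bound1} exactly yields the required strict feasibility.

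For the sign-agreement condition, I would invoke Lemma~\ref{lem:alphabound}, which bounds $\|[\hat{\alphab}_{\signal}(\Db|\sbo)-\alphabo]_{\J}\|_\infty$ by $\triple\ThetaJb\triple_\infty\cdot[\lambda + \|\DbJT(\xb-\Db\alphabo)\|_\infty]$. Here $\triple\ThetaJb\triple_\infty \leq 1/(1-\mu_{k-1})$ by Lemma~\ref{lem:CoherenceBounds}, while $\|\DbJT(\xb-\Db\alphabo)\|_\infty \leq \|\xb-\Db\alphabo\|_2 < \lambda(1-2\mu_{k}) \leq \lambda$ again by the unit-norm columns and \eqref{eq:Cor3Bound1}. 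This gives $\|[\hat{\alphab}-\alphabo]_{\J}\|_\infty < 2\lambda(1-\mu_{k})/(1-\mu_{k-1}) \leq 2\lambda$, where the last step uses $\mu_{k-1}\leq\mu_{k}$. Since \eqref{eq:Cor3Bound0} gives $\min_{j\in\J}|[\alphabo]_{j}| \geq 2\lambda$, every coordinate $[\hat{\alphab}]_{j}$ stays strictly on the same side of zero as $[\alphabo]_{j}$, hence $\sign(\hat{\alphab}_{\J})=\sbo_{\J}$ and the first hypothesis of Lemma~\ref{lem:uniquelasso} holds. With both hypotheses met, Lemma~\ref{lem:uniquelasso} delivers that $\hat{\alphab}_{\signal}(\Db|\sbo)$ is the unique minimizer.

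The computations are all routine; the only delicate point — and the step I would watch most carefully — is the bookkeeping of constants, namely confirming that the coherence-inflated quantities $\mu_{k}/(1-\mu_{k-1})$ and $1/(1-\mu_{k-1})$ collapse to the clean threshold $1-2\mu_{k}$ and the clean amplification factor $2\lambda$ appearing in the hypotheses. Both simplifications hinge on the monotonicity $\mu_{k-1}(\Db)\leq\mu_{k}(\Db)$ together with $\mu_{k}<1/2$, and one must check that each inequality is preserved in the correct (strict versus non-strict) direction so that the conclusion is genuine \emph{uniqueness} rather than mere optimality.
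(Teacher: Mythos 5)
Your proposal is correct and follows essentially the same route as the paper's proof: bound $\|\Db_{\J^c}^\top(\Ib-\PJb)\xb\|_\infty$ and $\|\DbJT(\xb-\Db\alphabo)\|_\infty$ by $\|\xb-\Db\alphabo\|_2$ using unit-norm columns and the orthogonal projector, control the sign pattern via Lemma~\ref{lem:alphabound} together with the coherence bounds of Lemma~\ref{lem:CoherenceBounds}, and conclude uniqueness through Lemma~\ref{lem:uniquelasso}. The only cosmetic difference is that you keep $\mu_{k-1}$ in the denominators slightly longer before collapsing to the clean constants, whereas the paper replaces $\mu_{k-1}(\Db)$ by $\mu_k$ at the outset; the strict/non-strict bookkeeping is handled correctly in both.
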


\begin{proof}
Since $\|\db^{j}\|_{2}=1$ for all $j$, we have by assumption~\eqref{eq:Cor3Bound1}:
\begin{eqnarray}
\label{eq:Prop2Bound1}
\|\Db_{\J}^{\top} (\xb-\Db \alphabo)\|_{\infty} 
&\leq & \!\!\!
\|\xb-\Db \alphabo\|_{2} 
< \lambda(1-2\mu_{k})\\
\label{eq:Prop2Bound2}
\| \Db_{\J^c}^\top (\Ib - \PJb) \xb \|_\infty 
&\leq&  \!\!\! \|(\Ib - \PJb) \xb  \|_2\notag\\
 &\leq& \!\!\! \|\signal - \Db \alphabo\|_{2}
< \lambda(1-2\mu_{k})
\end{eqnarray}
where we use the fact that by definition of the orthogonal projector $\PJb$ on the span of $\Db_{\J}$, the vector $\PJb\signal$ is a better approximation to $\signal$ than $\Db \alphabo = \Db_{\J} \alphabo_{\J}$.
By Lemma~\ref{lem:CoherenceBounds} we have
\begin{eqnarray*}
\triple \big(\DbJT \DbJ \big)^{-1} \triple_\infty
&\leq &
\frac{1}{1-\mu_{k-1}(\Db)} \leq \frac{1}{1-\mu_{k}}
\\
 \triple \Db_{\J^c}^\top \DbJ \big( \DbJT\DbJ \big)^{-1} \triple_\infty 
&\leq& \frac{\mu_{k}}{1-\mu_{k-1}(\Db)} \leq \frac{\mu_{k}}{1-\mu_{k}} < 1
\end{eqnarray*}
Exploiting Lemma~\ref{lem:alphabound} and the bounds~\eqref{eq:Cor3Bound0} and~\eqref{eq:Prop2Bound1} we have 
\begin{equation*}
\begin{split}
\|[ \hat{\alphab} &- \alphabo ]_\J\|_\infty\\
 &\leq  
\triple \big(\DbJT \DbJ \big)^{-1} \triple_\infty 
\Big[ \lambda + \|\DbJT(\xb-\Db\alphabo) \|_\infty\Big] \\
& < 
\frac{1}{1-\mu_{k}}
 \cdot \lambda \cdot \left[1+(1-2\mu_{k})\right] 
= 
2\lambda  \leq \min_{j\in\J} \big|[\alphabo]_j\big|,
\end{split}
\end{equation*}
We conclude that $\sign( \hat{\alphab}) = \sign(\alphabo )$. There remains to prove that $\hat{\alphab}$ is the unique solution of the Lasso program, using Lemma~\ref{lem:uniquelasso}. We recall the quantity which needs to be smaller than $\lambda$
\[
\| \Db_{\J^c}^\top (\Ib - \PJb) \xb \|_\infty + \lambda \triple \Db_{\J^c}^\top \DbJ \big( \DbJT \DbJ\big)^{-1} \triple_\infty.
\]
The quantity above is first upper bounded by
\[
\| \Db_{\J^c}^\top (\Ib - \PJb) \xb \|_\infty + \lambda \mu_{k}/(1-\mu_{k}),
\]
and then, exploiting the bound~\eqref{eq:Prop2Bound2},  upper bounded by $\lambda(1-2\mu_{k}) + \lambda \mu_{k}/(1-\mu_{k}) < \lambda$.
Putting together the pieces with $\sign( \hat{\alphab} ) = \sign(\alphabo )$, Lemma~\ref{lem:uniquelasso} leads to the desired conclusion.
\end{proof}

\begin{proof}[Proof of Proposition~\ref{prop:simplified_exact_recovery}]
First observe that almost surely
\begin{eqnarray*}
\|\xb-\Db \alphabo\|_{2} 
& = & 
\| [\Dbo-\Db]_\J [\alphabo]_\J \|_{2} +\|\varepsilonb\|_{2} \\
&\leq & \triple[\Db-\Dbo]_\J\triple_{2} \cdot \| [\alphabo]_\J \|_{2} + M_{\varepsilonb}\\
&\leq & \|\Db-\Dbo\|_\fro \cdot M_{\alphab} + M_{\varepsilonb}.
\end{eqnarray*}
Now, since $\mu_{k-1}(\Dbo) \leq \mu_{k}^{o} \leq 1/2$, using Lemma~\ref{lem:cumcoherball} below and the shorthand $r = \|\Db-\Dbo\|_{\fro}$, we have
\begin{equation*}
\begin{split}
 \lambda (& 1-2\mu_{k}(\Db)) - \|\Db-\Dbo\|_{\fro} M_{\alphab} - M_{\varepsilonb}\\
&\geq \lambda(1-2\mu_{k}^{o})  - M_{\alphab} r - 2\lambda \sqrt{k}  (2+1/2)  r - M_{\varepsilonb} \\
& \geq   \lambda(1-2\mu_{k}^{o}) - \left(M_{\alphab} + 5\lambda \sqrt{k}\right)  r  - M_{\varepsilonb} \\
& \geq  \lambda(1-2\mu_{k}^{o}) - \tfrac{7}{2} M_{\alphab}  r - M_{\varepsilonb} = \tfrac{7}{2} M_{\alphab} \left(C_{\max} \bar{\lambda}-r\right)- M_{\varepsilonb}
\end{split}
\end{equation*}
where we used  $\lambda \sqrt{k} \leq \frac{\loweralpha}{2} \sqrt{k} \leq \frac{M_{\alphab}}{2}$. 
For $r < C_{\max} \cdot \bar{\lambda}$, the assumption on the noise level implies that $\|\Db-\Dbo\|_{\fro}M_{\alphab}  + M_{\varepsilonb}  < \lambda(1-2\mu_{k}(\Db))$, hence we can apply Lemma~\ref{le:robustlassomin}. We conclude by observing that the result applies in particular to $\Db = \Dbo$.
\end{proof}

\begin{lemma}\label{lem:cumcoherball}
Consider $\Db,\Dbo \in \Real^{m\times p}$ with normalized columns such that  
$\|\Db-\Dbo\|_{\fro} \leq r$. For $k \leq p$ we have
\begin{equation}
\mu_{k}(\Db) \leq \mu_{k}(\Dbo) + \sqrt{k} \cdot r \cdot [2+\mu_{k-1}(\Dbo)].
\end{equation}
\end{lemma}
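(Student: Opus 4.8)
The plan is to work straight from the definition $\mu_{k}(\Db)=\sup_{|\J|\leq k}\sup_{j\notin\J}\|\DbJT\db^{j}\|_{1}$: I would fix an arbitrary admissible pair $(\J,j)$ with $|\J|\leq k$ and $j\notin\J$, bound $\|\DbJT\db^{j}\|_{1}$ uniformly, and then take the supremum. Writing $\Deltab\defin\Db-\Dbo$, I note the elementary consequences of $\|\Deltab\|_{\fro}\leq r$ that I will use repeatedly: the $j$-th column obeys $\|\deltab^{j}\|_{2}\leq r$ (with $\deltab^{j}\defin\db^{j}-\db^{j}_{o}$, where $\db^{j}_{o}$ is the $j$-th column of $\Dbo$), and $\triple\Deltab_{\J}\triple_{2}\leq\|\Deltab_{\J}\|_{\fro}\leq r$. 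The heart of the argument is the \emph{asymmetric} decomposition
\[
\DbJT\db^{j}
=\DboJT\db^{j}+\Deltab_{\J}^{\top}\db^{j}
=\DboJT\db^{j}_{o}+\DboJT\deltab^{j}+\Deltab_{\J}^{\top}\db^{j},
\]
in which $\db^{j}$ (an actual unit column of $\Db$) rather than $\db^{j}_{o}$ is kept next to $\Deltab_{\J}^{\top}$.

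I would then bound the three pieces by the triangle inequality together with $\|\vb\|_{1}\leq\sqrt{k}\|\vb\|_{2}$ for vectors in $\Real^{|\J|}$, $|\J|\leq k$. The first term is at most $\mu_{k}(\Dbo)$ by definition. For the second, $\|\DboJT\deltab^{j}\|_{1}\leq\sqrt{k}\,\triple\DboJ\triple_{2}\,\|\deltab^{j}\|_{2}\leq\sqrt{k}\,\sqrt{1+\mu_{k-1}(\Dbo)}\,r$, where the operator-norm estimate $\triple\DboJ\triple_{2}^{2}=\triple\DboJT\DboJ\triple_{2}\leq 1+\mu_{k-1}(\Dbo)$ comes directly from Lemma~\ref{lem:CoherenceBounds}. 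For the third, since $\Db$ has normalized columns, $\|\Deltab_{\J}^{\top}\db^{j}\|_{1}\leq\sqrt{k}\,\triple\Deltab_{\J}\triple_{2}\,\|\db^{j}\|_{2}=\sqrt{k}\,\triple\Deltab_{\J}\triple_{2}\leq\sqrt{k}\,r$. Summing and using $\sqrt{1+\mu_{k-1}(\Dbo)}\leq 1+\mu_{k-1}(\Dbo)$ gives $\|\DbJT\db^{j}\|_{1}\leq\mu_{k}(\Dbo)+\sqrt{k}\,r\,\big[(1+\mu_{k-1}(\Dbo))+1\big]$, and taking the supremum over $(\J,j)$ yields exactly the claimed bound.

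The only subtle point — and the reason for the specific form of the decomposition — is obtaining the clean constant $2+\mu_{k-1}(\Dbo)$ with \emph{no} residual $r^{2}$ term and no stray square root. A naive symmetric expansion $(\DboJ+\Deltab_{\J})^{\top}(\db^{j}_{o}+\deltab^{j})$ would create the second-order cross term $\Deltab_{\J}^{\top}\deltab^{j}$, contributing $\triple\Deltab_{\J}\triple_{2}\|\deltab^{j}\|_{2}\leq r^{2}$, which is absent from the target inequality. Pairing $\Deltab_{\J}^{\top}$ with the unit vector $\db^{j}$ instead lets me control that term exactly by $\|\db^{j}\|_{2}=1$, linearizing it in $r$, while the replacement $\sqrt{1+\mu_{k-1}}\leq 1+\mu_{k-1}$ absorbs the square root. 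I therefore expect no genuine analytic obstacle here: the difficulty is entirely one of bookkeeping, namely selecting the expansion and the two elementary inequalities that together avoid both the quadratic-in-$r$ and the square-root contributions, with all the underlying norm estimates already supplied by Lemma~\ref{lem:CoherenceBounds}.
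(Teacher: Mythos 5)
Your proof is correct and follows essentially the same route as the paper: the identical asymmetric decomposition $\DbJT\db^{j}=\DboJT\db^{j}_{o}+\DboJT(\db^{j}-\db^{j}_{o})+(\DbJ-\DboJ)^{\top}\db^{j}$, the same $\ell^{1}$-to-$\ell^{2}$ conversion with $\sqrt{k}$, the same operator-norm bound $\triple\DboJ\triple_{2}\leq\sqrt{1+\mu_{k-1}(\Dbo)}$ from Lemma~\ref{lem:CoherenceBounds}, and the same absorption $\sqrt{1+\mu_{k-1}(\Dbo)}\leq 1+\mu_{k-1}(\Dbo)$ to reach the constant $2+\mu_{k-1}(\Dbo)$.
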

\begin{proof}[Proof of Lemma~\ref{lem:cumcoherball}]
Consider $\J \subseteq \SET{p}$ with $|\J| \leq k$ and $j \notin J$. By the triangle inequality 
\begin{equation*}
\begin{split}
\|\DbJT &\db^{j}\|_{1} \\
 \leq & \ \|\DboJT [\dbo]^{j}\|_{1} 
+ 
\|\DboJT(\db^{j}-[\dbo]^{j})\|_{1}\\
&+
\|(\DbJ-\DboJ)^{\top} \db^{j}\|_{1}\\
 \leq & \
\mu_{k}(\Dbo)
+
\sqrt{k} 
\|\DboJT(\db^{j}-[\dbo]^{j})\|_{2}\\
&+
\sqrt{k} \|(\DbJ-\DboJ)^{\top} \db^{j}\|_{2}
\\
 \leq & \
\mu_{k}(\Dbo)
+
\sqrt{k} 
\sqrt{1+\mu_{k-1}(\Dbo)} \cdot \|\db^{j}-[\dbo]^{j}\|_{2}\\
&+
\sqrt{k} \triple (\DbJ-\DboJ)^{\top}\triple_{2}\\
 \leq & \ 
\mu_{k}(\Dbo)
+
\sqrt{k}  r 
\left[
1+\mu_{k-1}(\Dbo) +1
\right].
\end{split}
\end{equation*}
\end{proof}
\section{Technical lemmata}\label{app:technical}
The final section of this appendix gathers  technical lemmas required by the main results of the paper.


\subsection{Proof of Lemma~\ref{lem:RIPBounds}}\label{app:RIP} 

By definition of $\LRIP_{k}(\Db)$ we have, in the sense of symmetric positive definite matrices:
\(
\big(1-\LRIP_{k}(\Db)\big) \cdot \Ib \preceq \Db_{\J}^{\top}\Db_{\J}. 
\)
As a result, $\Db_{\J}^{\top}\Db_{\J}$ is invertible so $\ThetaJb$ is indeed well defined, and
\(
\triple\ThetaJb\triple_{2} 
=
\triple(\Db_{\J}^\top \Db_{\J})^{-1} \triple_{2} 
\leq 1/(1-\LRIP_{k}(\Db))
\).
Moreover
\(
\triple \DbJ^{+} \triple_2 
=
\triple  \ThetaJb \DbJT \triple_2
=  \sqrt{ \triple \ThetaJb \Db_{\J}^\top \Db_{\J}\ThetaJb \triple_2 }
= \sqrt{ \triple \ThetaJb  \triple_2 } \leq \frac{1}{\sqrt{1-\LRIP_{k}(\Db)}}.
\)

Consider now $\Db'$. By the triangle inequality 
for any $\J$ of size $k$ and $\zb \in \Real^{\J}$ we have
\begin{eqnarray*}
\|\DbJ'\zb\|_{2} 
& \geq & \|\DbJ \zb\|_{2} - \|[\DbJ'-\DbJ]\zb\|_{2} \\
&\geq& \big(\sqrt{1-\LRIP_{k}(\Db)} - r\big) \cdot \|\zb\|_{2} = \sqrt{1-\LRIP} \|\zb\|_{2}.
\end{eqnarray*}
where we used the fact that $\triple \DbJ'-\DbJ\triple_{2} \leq \|\DbJ'-\DbJ\|_{\fro} \leq \|\Db'-\Db\|_{\fro}$. 

\subsection{Proof of Lemma~\ref{le:LipBoundsRIP}}

The assumptions combined with Lemma~\ref{lem:RIPBounds} yield
\begin{eqnarray*}
\max(\triple \ThetaJb\triple_{2},\triple \ThetaJb'\triple_{2}) & \leq & (1-\LRIP)^{-1}\\
\max(\triple \DbJ^{+}\triple_{2},\triple [\DbJ']^{+}\triple_{2}) & \leq & (1-\LRIP)^{-1/2}.
\end{eqnarray*}
Moreover, denoting $r = \|\Db-\Db'\|_{\fro}$, we have\\
 $\triple \DbJ-\DbJ'\triple_{2} \leq \|\DbJ-\DbJ'\|_{\fro} \leq r$. It follows that
\begin{eqnarray*}
\triple\Ib-\DbJ^{+}\DbJ'\triple_{2} 
&=&
\triple \DbJ^{+}(\DbJ-\DbJ')\triple_{2}
\leq
\triple \DbJ^{+}\triple_{2}  r \\
&\leq&
r (1-\LRIP)^{-1/2}\\
 \ThetaJb'  -  \ThetaJb 
& = &
 \ThetaJb'(\DbJT\DbJ-[\DbJ']^{\top}\DbJ\\
 &&+[\DbJ']^{\top}\DbJ-[\DbJ']^{\top}\DbJ')\ThetaJb\\
&=& 
 \ThetaJb'(\DbJT-[\DbJ']^{\top})\DbJ\ThetaJb\\
 &&+ \ThetaJb'[\DbJ']^{\top}(\DbJ-\DbJ')\ThetaJb  \\
\triple \ThetaJb'  -  \ThetaJb\triple_{2} 
& \leq & 
\triple \ThetaJb' \triple_{2}  r  \triple [\DbJT]^{+}\triple_{2}\\
&& + \triple [\DbJ']^{+}\triple_{2}  r  \triple \ThetaJb \triple_{2}\\
&\leq&
2r (1-\LRIP)^{-3/2} \\
(\ThetaJb'  -  \ThetaJb)\DbJT
& = &
\ThetaJb'(\DbJT-[\DbJ']^\top)\DbJ\ThetaJb\DbJT\\
&& + \ThetaJb'[\DbJ']^\top(\DbJ-\DbJ')\ThetaJb\DbJT \\
&= & 
\ThetaJb'(\DbJT-[\DbJ']^\top)\PJb\\
&&+ [\DbJ']^{+}(\DbJ-\DbJ')\DbJ^{+}\\
{ [\DbJ']^{+}}-\DbJ^{+}
& = &
 \ThetaJb'([\DbJ']^\top-\DbJT)\\
&&+
 (\ThetaJb'-\ThetaJb)\DbJT\\
 & = &
\ThetaJb'([\DbJ']^\top-\DbJT)(\Ib-\PJb)\\
&&+ [\DbJ']^{+}(\DbJ-\DbJ')\DbJ^{+}\\
\triple [\DbJ']^{+}-\DbJ^{+}\triple_{2}
& \leq & 
\triple \ThetaJb' \triple_{2}  r + \triple [\DbJ']^{+}\triple_{2}  r  \triple \DbJ^{+} \triple_{2}\\
& \leq & 2r (1-\LRIP)^{-1}\\
\PJb'-\PJb & = &
\DbJ'([\DbJ']^{+}-\DbJ^{+})+(\DbJ'-\DbJ)\DbJ^{+}\\
&=&
\DbJ'\ThetaJb'([\DbJ']^\top-\DbJT)(\Ib-\PJb)\\
&&+ \DbJ'[\DbJ']^{+}(\DbJ-\DbJ')\DbJ^{+}\\
&&+(\DbJ'-\DbJ)\DbJ^{+}\\
&=&
([\DbJ']^{+})^{\top}([\DbJ']^\top-\DbJT)(\Ib-\PJb)\\
&&+ (\Ib-\PJb')(\DbJ'-\DbJ)\DbJ^{+}\\
\triple \PJb'-\PJb\triple_{2} & \leq &
\triple[\DbJ']^{+}\triple_{2}  r  + r  \triple\DbJ^{+}\triple_{2}\\
& \leq & 2 r (1-\LRIP)^{-1/2}.
\end{eqnarray*}

\subsection{Proof of Lemma~\ref{lem:paramonto}}\label{app:paramonto}
Each column $\db^j_{2}$ of $\Db_{2}$ can be uniquely expressed as
\(
\db^j_{2} = \ub + \zb,\ \text{with}\ \ub \in \text{span}(\db^j_{1})\ \text{and}\ \ub^\top \zb = 0.
\)
Since $\|\db^j_{2}\|_2=1$, the previous relation can be rewritten as 
\(
\db^j_{2} = \cos(\thetab_j)\db^j_{1} + \sin(\thetab_j) \wb^j, 
\)
for some $\thetab_j \in [0,\pi]$ and some unit vector $\wb^j$ orthogonal to $\db^{j}_{1}$ (except for the case $\thetab_j \in \{0,\pi\}$, the vector $\wb^j$ is unique). The sign indetermination in $\wb^j$ is handled thanks to the convention $\sin(\thetab_j)\geq 0$. 
We have $\|\thetab\|_{\infty} \leq \pi$ and
\begin{eqnarray*}
\|\db^{j}_{2} - \db^{j}_{1}\|_2^{2}  &=&  \| (1-\cos(\thetab_{j}))\db^j - \sin(\thetab_{j})\wb^{j} \|_2^2\\
&=& ( 1 -  \cos(\thetab_{j}) )^{2} + \sin^{2}(\thetab_{j})\\
&=& 2(1-\cos (\thetab_{j})) = 4 \sin^{2} (\thetab_{j}/2).
\end{eqnarray*}
We conclude using the inequalities $\frac{2}{\pi} \leq \frac{\sin u}{u} \leq 1$ for $0 \leq u \leq \pi/2$. 
The result when we interchange $\Db_{1}$ and $\Db_{2}$ is obvious, and $\thetab(\Db_{1},\Db_{2}) = \thetab(\Db_{2},\Db_{1})$ since $\|\db_{1}^{j}-\db_{2}^{j}\|_{2} = \|\db_{2}^{j}-\db_{1}^{j}\|_{2}$ for all $j$.

\subsection{Proof of Lemma~\ref{lem:bias_expectation}}\label{app:bias-expectation}

The proof of Lemma~\ref{lem:bias_expectation} will exploit the following lemmata.
\begin{lemma}\label{lem:ExpectOverJ}
Let $\J \subset{p}$ be a random support and denote by $\delta(i) \defin \indicator{\J}(i)$ the indicator function of $\J$. Assume that for all $i \neq j \in \SET{p}$
\begin{eqnarray*}
\Exp\{\delta(i)\} &=& \frac{k}{p}\\
\Exp\{\delta(i)\delta(j)\} &=& \frac{k(k-1)}{p(p-1)}.
\end{eqnarray*}
Then for any integer $m$ and matrices $\Ab,\Bb \in \Real^{m \times p}$ such that $\diag(\Ab^{\top}\Bb)=0$, we have
\begin{eqnarray}
\label{eq:ExpectOverJNormvJ}
\Exp_{\J} \{\|\Ab_{\J}\|_{\fro}^{2}\}
&=&
\frac{k}{p} \|\Ab\|_{\fro}^{2}\\
\label{eq:ExpectOverJRIPBoundGramDJOffdiag}
\Exp_{\J} \{\|\Ab_{\J}^{\top}\Bb_{\J}\|_{\fro}^{2}\}
&=&  \frac{k(k-1)}{p(p-1)} \cdot \|\Ab^{\top}\Bb\|_{\fro}^{2} 
\end{eqnarray}
\end{lemma}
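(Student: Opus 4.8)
The plan is to reduce both identities to the given first- and second-order moment formulas for the indicators $\delta(i) = \indicator{\J}(i)$ by expanding each squared Frobenius norm as a sum over columns (resp.\ over ordered pairs of columns) weighted by these indicators, and then applying linearity of expectation. First I would record the elementary observation that selecting columns indexed by $\J$ gives $\NormFro{\Ab_{\J}}^{2} = \sum_{j \in \J}\|\ab^{j}\|_{2}^{2} = \sum_{j=1}^{p}\delta(j)\,\|\ab^{j}\|_{2}^{2}$, so that taking expectations and using $\Exp\{\delta(j)\} = k/p$ immediately yields $\Exp_{\J}\NormFro{\Ab_{\J}}^{2} = \tfrac{k}{p}\sum_{j=1}^{p}\|\ab^{j}\|_{2}^{2} = \tfrac{k}{p}\NormFro{\Ab}^{2}$, which is~\eqref{eq:ExpectOverJNormvJ}. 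This first identity uses neither the hypothesis $\diag(\Ab^{\top}\Bb)=0$ nor the second moment of the $\delta(i)$.

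For~\eqref{eq:ExpectOverJRIPBoundGramDJOffdiag} I would note that for $i,j \in \J$ the $(i,j)$ entry of $\Ab_{\J}^{\top}\Bb_{\J}$ equals $\langle \ab^{i},\bb^{j}\rangle = (\Ab^{\top}\Bb)_{ij}$, so that $\NormFro{\Ab_{\J}^{\top}\Bb_{\J}}^{2} = \sum_{i,j=1}^{p}\delta(i)\,\delta(j)\,(\Ab^{\top}\Bb)_{ij}^{2}$. The crux is to split this double sum into its diagonal part ($i=j$) and its off-diagonal part ($i \neq j$) \emph{before} taking expectations, because the two parts obey different moment rules: on the diagonal $\delta(i)^{2}=\delta(i)$ has expectation $k/p$, whereas off the diagonal $\Exp\{\delta(i)\delta(j)\} = k(k-1)/\big(p(p-1)\big)$. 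The hypothesis $\diag(\Ab^{\top}\Bb)=0$ enters precisely here: it forces $(\Ab^{\top}\Bb)_{ii}=0$, so the entire diagonal contribution vanishes and only the off-diagonal terms survive. Taking expectations of the surviving sum and pulling out the constant $k(k-1)/\big(p(p-1)\big)$ leaves $\tfrac{k(k-1)}{p(p-1)}\sum_{i\neq j}(\Ab^{\top}\Bb)_{ij}^{2}$.

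To finish I would invoke the zero-diagonal hypothesis a second time, now to re-complete the off-diagonal sum into the full Frobenius norm: since $(\Ab^{\top}\Bb)_{ii}=0$, we have $\sum_{i\neq j}(\Ab^{\top}\Bb)_{ij}^{2} = \sum_{i,j}(\Ab^{\top}\Bb)_{ij}^{2} = \NormFro{\Ab^{\top}\Bb}^{2}$, giving the claimed~\eqref{eq:ExpectOverJRIPBoundGramDJOffdiag}. The only point requiring care -- the ``main obstacle'' in an otherwise routine computation -- is the bookkeeping of the diagonal versus off-diagonal terms, and recognizing that the assumption $\diag(\Ab^{\top}\Bb)=0$ is exactly what makes the answer a clean multiple of $\NormFro{\Ab^{\top}\Bb}^{2}$, compensating for the fact that $\delta(i)^{2}=\delta(i)$ does not follow the pairwise moment rule.
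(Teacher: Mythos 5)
Your proof is correct and follows essentially the same route as the paper: expand each squared Frobenius norm as a sum over columns (resp.\ pairs of columns) weighted by the indicators $\delta(i)$, apply linearity of expectation with the given first and second moments, and invoke $\diag(\Ab^{\top}\Bb)=0$ to dispose of the $i=j$ terms. Your explicit bookkeeping of the diagonal versus off-diagonal split (and the observation that $\delta(i)^{2}=\delta(i)$ obeys a different moment rule) is exactly the point the paper's terser computation handles implicitly by summing only over $j\neq i$.
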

\begin{proof}
We simply expand
\begin{eqnarray*}
\Exp_{\J}\|\Ab_{\J}\|_{\fro}^{2}
&=&
\Exp\sum_{j \in \SET{p}} \delta(j) \cdot \|\Ab_{\{j\}}\|_{2}^{2}\\
&=&
\sum_{j \in \SET{p}} \tfrac{k}{p} \|\Ab_{\{j\}}\|_{2}^{2}\\ 
\Exp_{\J} \|\Ab_{\J}^{\top}\Bb_{\J}\|_{\fro}^{2} 
&=&
\Exp \sum_{i \in \SET{p}} \sum_{j \in \SET{p},j\neq i} \delta(i)\delta(j) \cdot \Ab_{\{i\}}^{\top}\Bb_{\{j\}}\\
&=&
\sum_{i \in \SET{p}} \sum_{j \in \SET{p}, j \neq i} \tfrac{k(k-1)}{p(p-1)} \cdot \Ab_{\{i\}}^{\top}\Bb_{\{j\}}.
\end{eqnarray*}
\end{proof}

\begin{lemma}\label{lem:LRIP}
Assume that 
\begin{eqnarray*}
\LRIP & \geq & \max\left\{\LRIP_{k}(\Db),\LRIP_{k}(\Dbo)\right\} \\
A & \geq & \max\left\{\|\DbT\Db-\Ib\|_{\fro},\|\DboT\Dbo-\Ib\|_{\fro})\right\}
\end{eqnarray*}
with $\LRIP < 1$, and define
$
\Ub_{k} \defin 
\Exp_\J \big[ \Ib_\J \ThetaJb \Ib_\J^\top \big]
$
and
$
\Vb_{k} \defin  
\Exp_\J \big[ \Ib_\J \ThetaJb\ThetaJbo \Ib_\J^\top \big]
$
where the expectation is taken over all supports $\J$ of size $k$ drawn uniformly at random.
Then we have
\begin{eqnarray}
\|\mathrm{off}(\Ub_{k})\|_{\fro} &\leq& \frac{k(k-1)}{p(p-1)} \frac{A}{1-\LRIP}\\
\|\mathrm{off}(\Vb_{k})\|_{\fro} &\leq& \frac{k(k-1)}{p(p-1)} \frac{2A}{(1-\LRIP)^2}.
\end{eqnarray}
\end{lemma}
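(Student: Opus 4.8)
The plan is to establish both inequalities by the same two-stage argument, treating $\Ub_{k}$ and $\Vb_{k}$ in parallel, and writing $\pi \defin \tfrac{k(k-1)}{p(p-1)}$ for the probability that two fixed distinct indices both belong to a uniformly random support $\J$ of size $k$. Stage one reduces each off-diagonal Frobenius norm to an expectation of off-diagonal norms of the $k\times k$ matrices $\ThetaJb$ and $\ThetaJb\ThetaJbo$, while exposing one explicit factor $\pi$; stage two controls these matrices as perturbations of the identity governed by $\DbT\Db-\Ib$ and $\DboT\Dbo-\Ib$.

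For stage one, fix $i\neq j$ and set $\delta(i) \defin \indicator{\J}(i)$. Since $\mathrm{off}(\cdot)$ and $\Exp_{\J}$ are linear and commute, and since $(\Ib_{\J}\ThetaJb\Ib_{\J}^{\top})_{ij}=\delta(i)\delta(j)[\ThetaJb]_{ij}$, the $(i,j)$ entry of $\mathrm{off}(\Ub_{k})$ equals $\Exp_{\J}[\delta(i)\delta(j)[\ThetaJb]_{ij}]$. Applying Cauchy--Schwarz with the idempotent weight $\delta(i)\delta(j)$ (so that $(\delta(i)\delta(j))^{2}=\delta(i)\delta(j)$ and $\Exp_{\J}[\delta(i)\delta(j)]=\pi$) gives $(\mathrm{off}(\Ub_{k}))_{ij}^{2}\leq \pi\,\Exp_{\J}[\delta(i)\delta(j)[\ThetaJb]_{ij}^{2}]$. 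Summing over $i\neq j$ yields
\[ \|\mathrm{off}(\Ub_{k})\|_{\fro}^{2} \leq \pi\,\Exp_{\J}\|\mathrm{off}(\ThetaJb)\|_{\fro}^{2}, \]
and the identical computation with $[\ThetaJb\ThetaJbo]_{ij}$ gives $\|\mathrm{off}(\Vb_{k})\|_{\fro}^{2} \leq \pi\,\Exp_{\J}\|\mathrm{off}(\ThetaJb\ThetaJbo)\|_{\fro}^{2}$.

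For stage two, I use $\|\mathrm{off}(\mathbf{M})\|_{\fro}\leq\|\mathbf{M}-\Ib\|_{\fro}$ for any square $\mathbf{M}$, the resolvent identity $\ThetaJb-\Ib=-\ThetaJb(\GramJb-\Ib)$, and $\triple\ThetaJb\triple_{2}\leq(1-\LRIP)^{-1}$ from Lemma~\ref{lem:RIPBounds}, which together give $\|\mathrm{off}(\ThetaJb)\|_{\fro}\leq(1-\LRIP)^{-1}\|\GramJb-\Ib\|_{\fro}$. Because the columns of $\Db$ are unit-normed, $\GramJb-\Ib=\mathrm{off}(\GramJb)$ has zero diagonal, so $\|\GramJb-\Ib\|_{\fro}^{2}=\sum_{i\neq j\in\J}(\DbT\Db)_{ij}^{2}$ and hence $\Exp_{\J}\|\GramJb-\Ib\|_{\fro}^{2}=\pi\,\|\DbT\Db-\Ib\|_{\fro}^{2}\leq\pi A^{2}$ by the pairwise inclusion probabilities of Lemma~\ref{lem:ExpectOverJ}. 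Combining with stage one yields $\|\mathrm{off}(\Ub_{k})\|_{\fro}^{2}\leq\pi^{2}A^{2}/(1-\LRIP)^{2}$, which is the first claim after taking square roots. For $\Vb_{k}$ I split $\ThetaJb\ThetaJbo-\Ib=\ThetaJb(\ThetaJbo-\Ib)+(\ThetaJb-\Ib)$ and bound each factor as above to obtain $\|\ThetaJb\ThetaJbo-\Ib\|_{\fro}\leq(1-\LRIP)^{-2}\big(\|\GramJbo-\Ib\|_{\fro}+\|\GramJb-\Ib\|_{\fro}\big)$; squaring via $(a+b)^{2}\leq 2(a^{2}+b^{2})$, taking $\Exp_{\J}$, and using $\|\DbT\Db-\Ib\|_{\fro}^{2}+\|\DboT\Dbo-\Ib\|_{\fro}^{2}\leq 2A^{2}$ gives $\Exp_{\J}\|\mathrm{off}(\ThetaJb\ThetaJbo)\|_{\fro}^{2}\leq 4\pi A^{2}/(1-\LRIP)^{4}$, whence $\|\mathrm{off}(\Vb_{k})\|_{\fro}\leq 2\pi A/(1-\LRIP)^{2}$.

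The main obstacle is to extract the full factor $\pi\asymp(k/p)^{2}$ (equivalently $\pi^{2}$ in squared norm) rather than a weaker power. A direct Jensen bound $\|\mathrm{off}(\Ub_{k})\|_{\fro}\leq\Exp_{\J}\|\mathrm{off}(\ThetaJb)\|_{\fro}$ averages away none of the pair-probability beyond what is already carried by the integrand, and bounding $\Exp_{\J}\|\GramJb-\Ib\|_{\fro}^{2}$ without using normalization would leak a diagonal contribution of order $k/p$ coming from the entries $\|\db^{j}\|_{2}^{2}-1$; either slip degrades the exponent. Both are avoided here by the entrywise Cauchy--Schwarz over the event $\{i,j\in\J\}$, which exposes one factor $\pi$, together with the unit-norm normalization of the columns, which forces $\GramJb-\Ib$ to be purely off-diagonal and supplies the second factor $\pi$ through Lemma~\ref{lem:ExpectOverJ}. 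Everything else is a routine perturbation estimate built from Lemma~\ref{lem:RIPBounds}.
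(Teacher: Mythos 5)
Your proof is correct and reaches the paper's constants exactly, and its overall two-stage architecture is also the paper's: extract one factor $\tfrac{k(k-1)}{p(p-1)}$ by a Cauchy--Schwarz argument, then bound $\Exp_\J\|\mathrm{off}(\ThetaJb)\|_\fro^2$ and $\Exp_\J\|\mathrm{off}(\ThetaJb\ThetaJbo)\|_\fro^2$ via the resolvent identity, the bound $\triple\ThetaJb\triple_2\leq(1-\LRIP)^{-1}$ from Lemma~\ref{lem:RIPBounds}, and the pair-inclusion probabilities of Lemma~\ref{lem:ExpectOverJ} applied to the zero-diagonal matrix $\GramJb-\Ib$. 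Where you genuinely diverge is the mechanism of stage one. The paper introduces an independent copy $\J'$ of the random support, expands $\|\Exp_\J\,\mathrm{off}(\KJb)\|_\fro^2$ as $\Exp_{\J,\J'}$ of an inner product supported on $(\J\cap\J')\times(\J\cap\J')$, and applies Cauchy--Schwarz twice (once to the matrix inner product, once to the expectation over the pair $(\J,\J')$) before invoking Lemma~\ref{lem:ExpectOverJ} on the inner expectation; you instead obtain the identical intermediate inequality $\|\mathrm{off}(\Ub_k)\|_\fro^2\leq\tfrac{k(k-1)}{p(p-1)}\,\Exp_\J\|\mathrm{off}(\ThetaJb)\|_\fro^2$ (and likewise for $\Vb_k$) from a single entrywise Cauchy--Schwarz against the idempotent weight $\delta(i)\delta(j)$, with no auxiliary support. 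Your route is the more elementary of the two and makes it transparent that the first factor $\tfrac{k(k-1)}{p(p-1)}$ is exactly the pair-inclusion probability; the paper's double-expectation argument is heavier but works directly at the matrix level. The remaining differences are cosmetic: your splitting $\ThetaJb\ThetaJbo-\Ib=\ThetaJb(\ThetaJbo-\Ib)+(\ThetaJb-\Ib)$ versus the paper's $(\ThetaJb-\Ib)\ThetaJbo+(\ThetaJbo-\Ib)$ yields the same constant $2/(1-\LRIP)^2$, and both proofs rely on the columns of $\Db$ and $\Dbo$ being unit-normed so that $\GramJb-\Ib$ is purely off-diagonal (a hypothesis the lemma statement leaves implicit, inherited from the oblique-manifold setting); you flag this explicitly, which is a small gain in rigor.
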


\begin{proof}
Since $\ThetaJb\GramJb=\Ib$, using the RIP assumption and Lemma~\ref{lem:RIPBounds} we obtain
\begin{eqnarray*}
\| \mathrm{off}(\ThetaJb)\|_{\fro}
&\leq&
\|\ThetaJb-\Ib\|_{\fro}
= 
\|\ThetaJb(\Ib-\GramJb)\|_{\fro}\\
&\leq&
\frac{1}{1-\LRIP} \|\Ib-\GramJb\|_{\fro}\\
\| \mathrm{off}(\ThetaJb\ThetaJbo)\|_{\fro}
&\leq&
\|\ThetaJb\ThetaJbo-\Ib\|_{\fro}\\
&=& 
\|(\ThetaJb-\Ib)\ThetaJbo+(\ThetaJbo-\Ib)\|_{\fro}\\
&\leq&
\frac{1}{1-\LRIP}\|\ThetaJb-\Ib\|_{\fro}+\|\ThetaJbo-\Ib\|_{\fro}\\
&\leq &
\frac{1}{(1-\LRIP)^2} \|\Ib-\GramJb\|_{\fro}+\frac{1}{1-\LRIP}\|\Ib-\GramJbo\|_{\fro}\\
\end{eqnarray*}
In the following, $\KJb$ denotes either $\ThetaJb$ or $\ThetaJb\ThetaJbo$, and $\Wb_{k}$ either $\Ub_{k}$ or $\Vb_{k}$.
For any $J,J'$ of size $k$, we denote $\KJb^{J\cap J'}$ the restriction of $\KJb$ to the pairs of indices in $J \cap J'$, i.e. $\KJb^{J\cap J'} = \Ib^{\top}_{J \cap J'} \KJb \Ib_{J \cap J'}$, where we recall that $\Ib_{J \cap J'}$ is the restriction of the $p\times p$ identity matrix $\Ib$ to its columns indexed by $J \cap J'$. We obtain
\begin{equation*}
\begin{split}
\|\mathrm{off}&(\Wb_{k})\|_{\fro}^{2} \\
& = \|\mathbb{E}_{J} \mathrm{off}(\KJb)\|_{\fro}^{2}
= \langle \mathbb{E}_{J} \mathrm{off}(\KJb), \mathbb{E}_{J'} \mathrm{off}(\Kb_{\J'})\rangle_{\fro}\\
&= \mathbb{E}_{J,J'} \langle \mathrm{off}(\KJb),  \mathrm{off}(\Kb_{\J'})\rangle_{\fro}\\
& = \mathbb{E}_{J,J'} \langle \mathrm{off}(\KJb^{J\cap J'}), \mathrm{off}(\Kb_{\J'}^{J\cap J'})\rangle\\
& \leq \mathbb{E}_{J,J'} \| \mathrm{off}(\KJb^{J\cap J'})\|_{\fro} \cdot \| \mathrm{off}(\Kb_{\J'}^{J\cap J'})\|_{\fro}\\
&\leq \sqrt{ \mathbb{E}_{J,J'} \| \mathrm{off}(\KJb^{J\cap J'})\|_{\fro}^{2}} \cdot \sqrt{ \mathbb{E}_{J,J'} \| \mathrm{off}(\Kb_{\J'}^{J\cap J'})\|_{\fro}^{2}} \\
& =\mathbb{E}_{J} \mathbb{E}_{J'}\| \mathrm{off}(\KJb^{J\cap J'})\|_{\fro}^{2}
\end{split}
\end{equation*}
Using Lemma~\ref{lem:ExpectOverJ} we obtain
\[
\|\mathrm{off}(\Wb_{k})\|_{\fro}^{2}
\leq \mathbb{E}_{J} \tfrac{k(k-1)}{p(p-1)} \| \mathrm{off}(\KJb)\|_{\fro}^{2}\notag
\]
Specializing to $\Ub_{k}$ and using again Lemma~\ref{lem:ExpectOverJ} we obtain
\begin{eqnarray*}
\|\mathrm{off}(\Ub_{k})\|_{\fro}^{2}
&\leq&
 \mathbb{E}_{J} \tfrac{k(k-1)}{p(p-1)} \tfrac{1}{(1-\LRIP)^2}  \|\Ib-\GramJb\|_{\fro}^{2}\\
&=&
\tfrac{k(k-1)}{p(p-1)}\tfrac{1}{(1-\LRIP)^2}\tfrac{k(k-1)}{p(p-1)}\|\Ib-\DbT\Db\|_{\fro}^{2} \label{eq:Bound1}\notag
\end{eqnarray*}
It follows that
\begin{equation}
\|\mathrm{off}(\Ub_{k})\|_{\fro} \leq \tfrac{k(k-1)}{p(p-1)} \frac{A}{1-\LRIP}.\notag
\end{equation}
Specializing now to $\Vb_{k}$ we obtain similarly
\begin{equation*}
\begin{split}
\|\mathrm{off}&(\Vb_{k})\|_{\fro}^{2}\\
 \leq & \tfrac{k(k-1)}{p(p-1)} \tfrac{2}{(1-\nu)^8}  \\
& \cdot \mathbb{E}_{J} \left\{\|\Ib-\GramJb\|_{\fro}^{2}+\|\Ib-\GramJbo\|_{\fro}^{2}\right\}\\
=& \tfrac{k(k-1)}{p(p-1)}\tfrac{2}{(1-\nu)^8}\tfrac{k(k-1)}{p(p-1)} \\
& \cdot \left\{\|\Ib-\DbT\Db\|_{\fro}^{2} + \|\Ib-\DboT\Dbo\|_{\fro}^{2} \right\}\label{eq:Bound1}
\end{split}
\end{equation*}
and we finally obtain
\begin{equation}
\|\mathrm{off}(\Vb_{k})\|_{\fro} \leq \tfrac{k(k-1)}{p(p-1)} \frac{2A}{(1-\LRIP)^2}.\notag
\end{equation}

\end{proof}

We can now proceed to the proof of Lemma~\ref{lem:bias_expectation}.
\paragraph{Proof of Equation~\eqref{eq:leading_expectation}}

We write $\Db = \Dbo \Cb(\thetab) + \Wb \Sb(\thetab)$ using Lemma~\ref{lem:paramonto}. 
For simplicity we first assume that $\thetab_{j} \neq \pi/2$, for all $j \in \SET{p}$. Hence, the matrix $\Cb(\thetab)$ is invertible and $\Dbo = \Db \Cb^{-1} - \Wb \Tb$ with $\Tb = \Diag(\tan(\thetab_{j}))$.  The columns of $[\Db\Cb^{-1}]_{\J}$ belong to the span of $\DbJ$ hence
\begin{eqnarray*}
\trace \big( \DboJT (\Ib - \PJb) \DboJ  \big)
&=&
\|(\Ib - \PJb) \DboJ\|_{\fro}^{2}\\
&=& \|(\Ib-\PJb)[\Wb\Tb]_{\J}\|_{\fro}^{2}\\
&=&
\|[\Wb\Tb]_\J\|_{\fro}^{2}-\| \PJb [\Wb\Tb]_\J\|_{\fro}^{2}.
\end{eqnarray*}
For the first term, by Lemma~\ref{lem:ExpectOverJ}, we have
\[
\Exp_{\J}\ \|[\Wb \Tb]_\J\|_{\fro}^{2} = \tfrac{k}{p}   \|\Wb \Tb\|_{\fro}^{2} 
\] 
For the second term, since $\PJb = \DbJ\ThetaJb\DbJ^{\top}$, using Lemma~\ref{lem:RIPBounds}, we have the bound
\begin{equation*}
\| \PJb [\Wb\Tb]_\J \|_{\fro}^{2}  
\leq 
\triple \ThetaJb \triple_{2}  \| \DbJ^\top [\Wb\Tb]_\J \|_\fro^2
\leq
\tfrac{1}{1-\LRIP} \| \DbJ^\top [\Wb\Tb]_\J \|_\fro^2,
\end{equation*}
Now, by Lemma~\ref{lem:ExpectOverJ}, 
\begin{equation*}
\Exp_{\J}\ \| \DbJT [\Wb \Tb]_\J \|_\fro^2 
=
\tfrac{k(k-1)}{p(p-1)}   \| \DbT \Wb \Tb \|_\fro^2 
\leq
\tfrac{k^{2}}{p^{2}}  B^{2}  \|\Wb \Tb \|_\fro^2
\end{equation*}
Putting the pieces together, we obtain the lower bound
\begin{eqnarray*}
\Exp_{\J}\ \trace \big( \DboJT (\Ib - \PJb) \DboJ  \big)
&\geq&
\tfrac kp  \|\Wb \Tb\|_\fro^2  \left(1- \tfrac{k}{p} \tfrac{B^{2}}{1-\LRIP} \right).
\end{eqnarray*}
To conclude, we observe that since $\|\wb^{j}\|_{2}=1$ and $\tan u \geq u$ for $0 \leq u \leq \pi/2$,
\[
\|\Wb\Tb\|_{\fro}^{2} = \sum_{j=1}^{p} \tan^{2} (\thetab_{j})^{2} \geq \sum_{j=1}^p \thetab_{j}^{2} = \|\thetab\|_{2}^{2}.
\]
Finally, by continuity the obtained bound also holds when $\thetab_{j} = \pi/2$ for some $j$.

\paragraph{Proof of Equation~\eqref{eq:bias_expectation}}

Applying Lemma~\ref{lem:paramonto} , we write
$
\Dbo = \Db \Cb(\thetab) +\Wb \Sb(\thetab), 
$
and obtain, 
\begin{eqnarray*}
\trace \left( \Ib - \DbJ^{+} \DboJ \right) &=& 
k - \sum_{j \in \J} \cos(\thetab_{j}) - \trace \left( \DbJ^{+} [\Wb \Sb(\thetab)]_\J \right),\\
&=& \sum_{j\in\J} ( 1 - \cos(\thetab_{j}) ) - \trace \left( \ThetaJb \DbJT [\Wb \Sb(\thetab)]_\J \right).
\end{eqnarray*}
The first term is simple to handle since we have, by Lemma~\ref{lem:ExpectOverJ} and the inequality $1-\cos(u) \leq u^{2}/2$, $u \in \Real$,
$$
\Exp_\J\ \sum_{j\in\J} ( 1 - \cos(\thetab_j) ) \leq \Exp_\J\ \frac{\|\thetab_\J\|_2^2}{2} = \frac{k}{p} \cdot \frac{\|\thetab\|_{2}^2}{2}.
$$
We now turn to the second term
\begin{eqnarray*}
\Exp_\J\  \trace \left[ \ThetaJb \DbJT [\Wb \Sb]_\J \right]   
&=& \Exp_\J\  \trace \left[ \ThetaJb (\Db\ \Ib_{\J})^{\top} \Wb \Sb \Ib_\J \right] \\
&=& \Exp_\J\ \trace \left[ \Ib_\J \ThetaJb \Ib_\J^\top \DbT \Wb \Sb \right] \\
&=&  \trace \left[ \Ub_{k} \DbT \Wb \Sb \right]. 
\end{eqnarray*}
Since $\diag(\DbT \Wb \Sb)=0$, it follows
\begin{eqnarray*}
\left| \trace \left[ \Ub_{k} \DbT \Wb \Sb \right] \right| 
&\leq& \|\mathrm{off}(\Ub_{k})\|_{\fro} \cdot  \|\DbT \Wb \Sb\|_{\fro}\\
&\leq& \|\mathrm{off}(\Ub_{k})\|_{\fro} \cdot B \cdot \|\Wb\Sb\|_{\fro}.
\end{eqnarray*}
Since $\|\wb^{j}\|_{2}=1$ and $\sin u \leq u$ for $0 \leq u \leq \pi/2$, we have $\| \Wb \Sb  \|_\fro \leq  \|\thetab\|_{2}$, and we conclude the proof using Lemma~\ref{lem:LRIP} and the fact that $(k-1)/(p-1) \leq k/p$. 

\paragraph{Proof of Equation~\eqref{eq:bias_expectation_signsign}}
Since $\ThetaJb = (\DbJT\DbJ)^{-1}$ and similarly for $\ThetaJbo$ we have
\begin{align}
\ThetaJbo-\ThetaJb
&=
\ThetaJbo(\DbJT\DbJ-\DboJT\DboJ)\ThetaJb\notag\\
&=
\ThetaJbo\Ib_{\J}^{\top}(\Db^{\top}\Db-\DboT\Dbo)\Ib_{\J}\ThetaJb\\
\trace\ \left[\ThetaJbo-\ThetaJb\right] 
&=
\trace\ \left[\Ib_{\J}\ThetaJb\ThetaJbo\Ib_{\J}^{\top}(\Db^{\top}\Db-\DboT\Dbo)\right]\\
\Exp_{\J}\ \trace\ \left[\ThetaJbo-\ThetaJb\right] 
&=
\trace\ \left[\Vb(\Db^{\top}\Db-\DboT\Dbo)\right]
\end{align}
Since $\diag(\Db^{\top}\Db-\DboT\Dbo) = 0$ we further have
\[
\left|\Exp_{\J}\ \trace\ \left[\ThetaJbo-\ThetaJb\right] \right|
\leq \|\mathrm{off}(\Vb)\|_{\fro} \cdot \|\Db^{\top}\Db-\DboT\Dbo\|_{\fro}.
\]
We conclude using Lemma~\ref{lem:LRIP} after noticing that
\begin{eqnarray*}
\|\Db^{\top}\Db-\DboT\Dbo\|_{\fro}
&\leq&
\|\Db^{\top}(\Db-\Dbo)\|_{\fro}\\
&&+\|(\Db^{\top}-\DboT)\Dbo\|_{\fro}\\
&\leq&  2 B \|\Db-\Dbo\|_{\fro} \leq 2B \|\thetab\|_{2}.
\end{eqnarray*}

\begin{IEEEbiographynophoto}{R{\'e}mi Gribonval}(FM'14)  is a Senior Researcher with Inria (Rennes, France), and the scientific leader of the PANAMA research group on sparse audio processing. A former student at  {\'E}cole Normale Sup{\'e}rieure (Paris, France), he received the Ph. D. degree in applied mathematics from Universit{\'e} de Paris-IX Dauphine (Paris, France) in 1999, and his Habilitation {\`a} Diriger des Recherches in applied mathematics from Universit{\'e} de Rennes~I (Rennes, France) in 2007. His research focuses on mathematical signal processing, machine learning, approximation theory and statistics, with an emphasis on sparse approximation, audio source separation and compressed sensing. 
\end{IEEEbiographynophoto}

\begin{IEEEbiographynophoto}{Rodolphe Jenatton} received the PhD degree from the Ecole Normale Superieure, Cachan, France, in 2011 under the supervision of Francis Bach and Jean-Yves Audibert. He then joined the CMAP at Ecole Polytechnique, Palaiseau, France, as a postdoctoral researcher working with Alexandre d'Aspremont. From early 2013 until mid 2014, he worked for Criteo, Paris, France, where he was in charge of improving the statistical and optimization aspects of the ad prediction engine. He is now a machine learning scientist at Amazon Development Center Germany, Kurf{\"u}rstenddamm, Berlin. His research interests revolve around machine learning, statistics, (convex) optimization, (structured) sparsity and unsupervised models based on latent factor representations.
\end{IEEEbiographynophoto}

\begin{IEEEbiographynophoto}{Francis Bach} graduated from the Ecole Polytechnique, Palaiseau, France, in 1997. He received
the Ph.D. degree in 2005 from the Computer Science Division at the University of California, Berkeley. He is the leading researcher of the Sierra project-team of INRIA in the Computer Science Department of the Ecole Normale Supérieure, Paris, France. His research interests include machine learning, statistics, optimization, graphical models, kernel methods, and statistical signal processing. He is currently the action editor of the Journal of Machine Learning Research and associate editor
of IEEE Transactions in Pattern Analysis and Machine Intelligence.
\end{IEEEbiographynophoto}

\end{document}